\definecolor{kiliancolor}{HTML}{00802b}
\newcommand{\1}{\hspace{0.2mm}\text{I}\hspace{0.2mm}}
\newcommand{\II}{\text{I \hspace{-2.13mm} I} }
\newcommand{\N}{\mathbb{N}}
\newcommand{\E}{\mathbb{E}}
\newcommand{\GW}{\mathcal{GW}}
\renewcommand{\aa}{\mathbf{a}}
\newcommand{\bb}{\mathbf{b}}
\newcommand{\uu}{\mathbf{u}}
\renewcommand{\S}{\Sigma}%
\newcommand\V{\vert}
\renewcommand{\a}{\alpha}
\newcommand{\Pa}{\mathcal{P}_m(\alpha_n)}
\newcommand{\Pim}{\mathcal{P}^m}
\newcommand{\BPa}{\overline{\mathcal{P}_m(\alpha_n)}}
\newcommand{\Pb}{\mathcal{P}_m(\beta_n)}
\newcommand{\BPb}{\overline{\mathcal{P}_m(\beta_n)}}
\newcommand{\an}{\alpha _n}
\newcommand{\bn}{\beta _n}
\newcommand{\s}{\sigma}
\renewcommand{\P}{\mathbb{P}}
\newcommand{\e}{\varepsilon}
\newcommand{\la}{\lambda}
\newcommand{\del}{\delta}
\newcommand{\ind}{\mathbf{1}}
\def\xx{{\boldsymbol x}}
\def\yy{{\boldsymbol y}}
\def\zz{{\boldsymbol z}}
\def\DD{{\boldsymbol D}}
\def\XX{{\boldsymbol X}}
\def\ZZ{{\boldsymbol Z}}
\def\bXX{{\mathbb X}}
\def\bYY{{\mathbb Y}}
\def\YY{{\boldsymbol Y}}
\def\UU{{\boldsymbol U}}
\def\R{{\mathbb{R}}}
\def\PP{{\boldsymbol P}}
\def\balpha{{\boldsymbol \alpha}}
\def\expect{\mathop{\mathbb{E}}}
\def\simplex{\Sigma}
\providecommand{\customgenericname}{}
\newcommand{\newcustomtheorem}[2]{%
  \newenvironment{#1}[1]
  {%
   \renewcommand\customgenericname{#2}%
   \renewcommand\theinnercustomgeneric{##1}%
   \innercustomgeneric
  }
  {\endinnercustomgeneric}
}
\newcommand{\rf}[1]{{\color{blue} #1}}
\newcommand{\yz}[1]{{\color{magenta} #1}}
\newcommand{\nc}[1]{{\color{violet} {\sc nico:} #1}}
\newcommand{\kf}[1]{{\color{kiliancolor} #1}}
\newcommand{\rg}[1]{{\color{red} #1}}
\newcommand{\defas}{\;\mathrel{\!\!{:}{=}\,}}
\newcommand{\fact}[1]{#1\mathpunct{}!}
\begin{document}

\title{Minibatch optimal transport distances; analysis and applications}

\author{\name Kilian Fatras \email kilian.fatras@irisa.fr \\
       \addr Univ. Bretagne-Sud, CNRS, \textsc{Inria}, IRISA, France
       \AND
       \name Younes Zine \email y.p.zine@sms.ed.ac.uk
 \\
       \addr School of Mathematics, The University of Edinburgh and \\
    The Maxwell Institute for the Mathematical Sciences\\
Edinburgh, United Kingdom\\
Univ Rennes, CNRS, IRMAR,\\
UMR 6625, F-35000 Rennes, France
       \AND
       \name Szymon Majewski \email szymon.majewski@polytechnique.edu \\
       \addr École Polytechnique, CMAP, France
       \AND
       \name Rémi Flamary \email remi.flamary@polytechnique.edu \\
       \addr École Polytechnique, CMAP, France
       \AND
       \name Rémi Gribonval \email remi.gribonval@inria.fr \\
       \addr Univ Lyon, Inria, CNRS, ENS de Lyon, UCB Lyon 1, \\
             LIP UMR 5668, F-69342, Lyon, France
       \AND
       \name Nicolas Courty \email nicolas.courty@irisa.fr \\
       \addr Univ. Bretagne-Sud, CNRS, \textsc{Inria}, IRISA, France
       }


\maketitle

\begin{abstract}
Optimal transport distances have become a classic tool to compare probability distributions and have found many applications in machine learning.
Yet, despite recent algorithmic developments, their complexity prevents their direct use on large scale datasets. To overcome this challenge, a common workaround is to compute these distances on minibatches {\em i.e.} to average the outcome of several smaller optimal transport problems. We propose in this paper an extended analysis of this practice, which effects were previously studied in restricted cases. We first consider a large variety of Optimal Transport kernels. We notably argue that the minibatch strategy comes with appealing properties such as unbiased estimators, gradients and a concentration bound around the expectation, but also with limits: the minibatch OT is not a distance. To recover some of the lost distance axioms, we introduce a debiased minibatch OT function and study its statistical and optimisation properties. Along with this theoretical analysis, we also conduct empirical experiments on gradient flows, generative adversarial networks (GANs) or color transfer that highlight the practical interest of this strategy.
\end{abstract}

\section{Introduction}

Comparing probability distributions is a fundamental problem in machine learning. The difficulty is to find a relevant distance with good statistical and optimization properties to obtain such comparisons. The Wasserstein distance has been used for this purpose in several machine learning problems such as: generative modeling, where one wants to fit a generated data distribution to a training data distribution \citep{goodfellow_gan}; domain adaptation, with the goal to leverage on existing labelled data on a given source domain to perform classification on a target domain where none or few labels is available \citep{DACourty} ; classification, for multi-label \citep{frogner_2015} and adversarial robustness, where the Wasserstein distance has been shown to be more robust to rotations and translations of data than $l_1$ norm for instance \citep{wong19a}. In order to define a measure between two probability distributions, the Wasserstein distance, based on optimal transport (OT), takes advantage of a ground cost on the space where the probability distributions lie. 
One particularly interesting property of the Wasserstein distance is that it can be used between distributions that do not share the same support, which is frequently the case when dealing with empirical distributions in many Machine Learning problems.

Computing the Wasserstein distance between empirical probability distributions with $n$ points has a complexity of $\mathcal{O}(n^3\log(n))$ (Chapter 3 \citep{COT_Peyre}), which implies that it can not be used in practice in a big data scenario. To decrease this complexity, an appealing technique is to regularize the Wasserstein distance with an entropic term \citep{CuturiSinkhorn}. This allowed the use of the efficient Sinkhorn-Knopp algorithm that can be implemented in parallel and has a lower computational complexity of $\mathcal{O}(n^2)$ \citep{altschuler2017near}, which is still prohibitive for many large scale applications. Greedy variants of the Sinkhorn-Knopp algorithm can be found in \citep{altschuler2017near, abid18a}. Many strategies have been deployed to accelerate the computation of optimal transport, for instance stochastic solvers have been investigated to solve the entropic regularized OT in \citep{Genevay_2016, Ballu2020Stochastic, seguy2018large}. Other variants take advantage of the 1D closed form of optimal transport with the so called \emph{Sliced Wasserstein Distance}, \citep{Bonnotte2013, kolouri2016sliced, liutkus19a}. There are also  hierarchical or multiscale strategies to compute an approximation of optimal transport \citep{HierarchicalOT, JMLRGerber}. Despite the good empirical performance of the Wasserstein distance on generative modeling \citep{genevay_2018, genevay19, arjovsky_2017}, it was recently proved that using the empirical Wasserstein distance as a loss function does not lead to the optimal solution. It is due to the estimator bias of the Wasserstein distance between continuous probability distributions \citep{Bellemare_cramerGAN, Genevay_phd}.

In order to train a neural network on large scale datasets with the Wasserstein distance, several works had the idea to rely on a minibatch computation of Optimal Transport distances and backpropagate the resulting gradient into the network \citep{genevay_2018, deepjdot}. This strategy leads to a complexity of $\mathcal{O}(k m^2)$, where $m$ is the batch size  and $k$ the number of considered batches. However, the price to pay when computing the average of several OT quantities between minibatches from inputs is a change in the original problem. Indeed, minimizing minibatch OT minimizes the expectation of optimal transport between minibatches of size $m$ and not the optimal transport between the original measures.
To control the approximation error, \citep{mbot_Sommerfeld} established a non-asymptotic deviation bound between the original optimal transport distance and its minibatch version.
Recently in the context of generative models, \citep{bernton2019parameter} showed the convergence of the minibatch minimizers to the true minimizers when the batch size $m$ increases. However, while the approximation with the Wasserstein distance has been well studied, many questions remain unsolved regarding the learning properties of the minibatch strategy. This includes in particular: non optimal connections between samples on transport plans; statistical estimation properties between the empirical counter part and the expectation; the optimization with stochastic gradient (SGD); and finally solutions to limit the bias of the minibatch OT losses. In a previous work \citep{fatras2019batchwass}, we partially answered the above questions for minibatch OT. After setting a rigorous formalism of minibatch OT losses for a sampling without replacement, \emph{i.e., when there are not repeated indices of data within a minibatch}, we studied their statistical and optimization properties in the case of \emph{uniform and bounded measures}. We also found that the minibatch OT losses do not respect the separation axiom, breaking the mathematical definition of a distance.

{ In this paper we propose to complete our previous work to a more general setting with relaxed hypothesis on probability distributions. We also consider a relatively larger number of optimal transport variants. We construct estimators with a general formalism for designing minibatch of data which respects the probability distribution constraints and we propose a new minibatch OT loss function. We show that our new estimators enjoy appealing statistical and optimisation properties. Finally, we study the performance of minibatch OT as losses for several machine learning applications.} 

The paper is structured as follows: in Section~\ref{sec:RW}, we do a brief review of the different optimal transport losses. In Section~\ref{sec:MBW}, we formalize minibatch OT losses, show basic properties and learning behaviors. Then we present our main results, a new loss function based on minibatch OT losses which respects the separability axioms. In Section \ref{sec:learning}, we give concentration bonds of minibatch OT losses in bounded and unbounded data scenario and study the use of SGD for minimizing minibatch OT. And finally, in Section~\ref{sec:exp}, we describe experiments using minibatch optimal transport. 

\section{Wasserstein distance and variants}

This section defines the classical OT problems and discuss their numerical complexity.

\paragraph{Wasserstein distance}\label{sec:RW}
Let $\mathcal{M}_{+}^1(\mathcal{X})$ denote the set of all probability distributions lying in the space $\mathcal{X}$. The Optimal Transport metric measures a distance between two probability distributions $(\alpha, \beta) \in \mathcal{M}_{+}^1(\mathcal{X}) \times \mathcal{M}_{+}^1(\mathcal{X})$ by considering a ground cost $c$ on the space $\mathcal{X}$ . 
The Optimal Transport problem $W_{c}$ between two distributions is defined as :
\begin{equation}
    W_{c}(\alpha, \beta) = \underset{\pi \in \UU(\alpha, \beta)}{\text{min}} \int_{\mathcal{X}\times \mathcal{Y}}
    c(\xx,\yy) 
    d\pi(\xx,\yy) ,
\label{eq:wasserstein_1_dist}
\end{equation}
where $\UU(\alpha, \beta)$ is the set of joint probability distribution with
marginals $\alpha$ and $\beta$ such that 
$
\boldsymbol U(\alpha, \beta) = \left \{ \pi \in \mathcal{M}_{+}^1(\mathcal{X}, \mathcal{Y}): \PP_{\mathcal{X}}\#\pi = \alpha, \PP_{\mathcal{Y}}\#\pi = \beta \right\}\nonumber
$. {$\PP_{\mathcal{X}}\#\pi$ (resp. $\PP_{\mathcal{Y}}\#\pi$) is the marginalization of $\pi$ over $\mathcal{X}$ (resp. $\mathcal{Y}$)}. Where $\PP_{\mathcal{X}}$ is the projection over the space $\mathcal{X}$, and $\#$ denotes the pushforward operator, which can be defined as follow: for a continuous map $T : \mathcal{X} \mapsto \mathcal{Y}$ and for any measurable set $B \subset \mathcal{Y}$, $\beta(B) = \alpha \big( \{   \xx \in \mathcal{X} : T(\xx) \in B\} \big)$.

When the ground cost is a metric, the optimal transport problem becomes a metric between distributions called the 1-Wasserstein distance. In this work, we consider the Euclidean distance on $\R^d$ as ground metric, \emph{i.e.,} $c(\xx,\yy) = \|\xx - \yy\|_2$ and we denote in this case the $p$-Wasserstein distance as $W_p$ with $p\geq 1$. Formally:

\begin{equation}
    W_{p}(\alpha, \beta) = \left( \underset{\pi \in \UU(\alpha, \beta)}{\text{min}} \int_{\mathcal{X}\times \mathcal{Y}} \|\xx - \yy\|_2^p d\pi(\xx,\yy) \right)^{1/p},
\label{eq:wasserstein_dist}
\end{equation}


Note that the optimization problem above is called the Kantorovitch formulation of OT and the optimal $\pi$ is called an optimal transport plan when it is a minimizer of problem \eqref{eq:wasserstein_dist}. When the distributions are discrete, the problem becomes a discrete linear program that can be solved with a cubic complexity in the size of the distributions support \citep{COT_Peyre}. Also the \emph{sample complexity}, i.e. the convergence in population of the Wasserstein distance, is known to be slow with a rate  $O(n^{-1/d})$ depending on the dimensionality $d$ of the space $\mathcal{X}$ and the size of the population $n$ \citep{dudley1969, weed2019}.
Other computation strategies can be used such as multi-scale strategy in order to compute a fast approximation of the Wasserstein distance \citep{JMLRGerber}. We can find also a hierarchical strategy which leverages clustered structures in data and has a quadratic complexity in the size of the biggest cluster \citep{HierarchicalOT}. 
Lastly, the Wasserstein distance has a closed form when data lie in 1D spaces. If the data are sorted, then the optimal transport plan is the identity. Hence solving the Wasserstein distance in 1D is equivalent to sort in $\mathcal{O}(n\log(n))$. This appealing rate has motivated many researchers to develop and use the Sliced Wasserstein distance \citep{Bonnotte2013, kolouri2016sliced, Kolouri2018SlicedWD, kolouri_GSW_2019, liutkus19a}. 

\paragraph{Entropic regularization}
Regularized entropic OT was proposed in \citep{CuturiSinkhorn} as a way to make the problem strictly convex and easier to solve. For the Euclidean distance, it is defined as:
\begin{align}
    & W^{\varepsilon}(\alpha, \beta) = \underset{\pi \in \UU(\alpha, \beta)}{\text{min}} \int\displaylimits_{\mathcal{X}\times\mathcal{Y}} \| \xx - \yy \|_2^p d\pi(\xx, \yy) + \varepsilon H(\pi|\xi), \\
    & \text{ with }
 H(\pi|\xi) = \int_{\mathcal{X}\times\mathcal{Y}} \log\left(\frac{d\pi(\xx, \yy)}{d\alpha(\xx) d\beta(\yy)}\right)d\pi(\xx, \yy), \label{EQ : def entropy}
\end{align}
where $\xi = \alpha \otimes \beta$ and $\varepsilon \geq 0$ is the
regularization coefficient. The power p is typically set to 1 or 2. We call this function, the entropic OT loss. Entropic regularization also makes the problem strongly convex and differentiable with respect to the cost or the input distributions, which is a key optimization property for using gradient-based algorithms. Other regularizations could be added to the original OT problem \citep{Dessein2018} for different purposes such as group-lasso or quadratic regularization  \citep{DACourty, blondel2018}.

It is well known that adding an entropic regularization leads to optimal transport plans that are dense \citep{blondel2018} and can be far from the original OT solutions. This leads to loose the metric property for the entropic OT loss $W^{\varepsilon}$, \emph{i.e., $W^{\varepsilon}(\beta, \beta) \ne 0$}. This motivated \citep{genevay_2018} to introduce an unbiased loss which uses entropic regularization and is called the Sinkhorn divergence. It is defined as:
\begin{equation}
    S^{\varepsilon}(\alpha, \beta) = W^{\varepsilon}(\alpha, \beta) - \frac{1}{2}(W^{\varepsilon}(\alpha, \alpha) + W^{\varepsilon}(\beta, \beta)).
\end{equation}
It can still be computed with the same order of computational complexity as the entropic OT loss and has been proven to be a divergence which interpolates between OT and Maximum Mean Discrepancy distance (MMD), with respect to the regularization coefficient \citep{feydy19a}. MMD are integral probability metrics over a reproducing kernel Hilbert space \citep{MMD_Gretton}. When $\varepsilon$ tends to 0, $S^{\varepsilon}(\alpha, \beta)$ recovers the OT solution and when $\varepsilon$ tends to $\infty$, $S^{\varepsilon}(\alpha, \beta)$ converges to the
MMD solution with a particular kernel. Second, as proved by \citeauthor{feydy19a}, if the cost $c$ is Lipschitz, then $S_{c}^{\varepsilon}$ is a convex, symmetric and smooth divergence.
The sample complexity of the Sinkhorn divergence was proven in \citep{genevay19} to be
$\centering
O\left(\frac{e^{\frac{\kappa}{\varepsilon}}}{\sqrt{n}}\left(1+\frac{1}{\varepsilon^{\lfloor d / 2\rfloor}}\right)\right)
$ where $d$ is the dimension of $\mathcal{X}$. It can be seen as an interpolation of sample complexities from MMD and OT sample complexity depending on $\varepsilon$.  So adding an entropic regularization lowers the dependence of the sample complexity to the dimensionality of the data space.

\paragraph{Gromov-Wasserstein distance}

Classical OT distances cannot be used when a relevant ground cost between the distributions cannot be defined. For instance, when $\alpha$ and $\beta$ are defined in Euclidean spaces of different dimensions. Learned deep learning features fall into this scheme as they can usually be arbitrarily rotated or permuted \citep{bunne19a}. 
A variant of the Wasserstein distance was designed to address this specific issue. The Gromov-Wasserstein (GW) distance \citep{memoli_GW} has been investigated in the past few years and relies on comparing intra-domain distances $c_{\mathcal{X}}$ and $c_{\mathcal{Y}}$. The general setting corresponds to computing couplings between metric measure spaces $(\mathcal{X},c_{\mathcal{X}},\alpha)$ and $(\mathcal{Y},c_{\mathcal{Y}},\beta)$, where ($c_{\mathcal{X}}$, $c_{\mathcal{Y}}$) are distances, while $\alpha$ and $\beta$ are measures on their respective spaces. One defines the Gromov-Wasserstein distance as:
\begin{equation}
\mathcal{GW}_p^p((\alpha, c_{\mathcal{X}}), (\beta, c_{\mathcal{Y}})) =  \underset{\pi \in \UU(\alpha, \beta)}{\text{min}} \int_{\mathcal{X}^2 \times \mathcal{Y}^2} |c_{\mathcal{X}}(x, x') - c_{\mathcal{Y}}(y, y')|^p d\pi(x, x') d\pi(y, y').
\end{equation}
We can interpret the $\mathcal{GW}$ distance as follows: the coupling tends to associate samples that share common relations with the other samples in their respective metric spaces.
Formally, $\mathcal{GW}$ defines a distance between metric measure spaces up to isometries, where one says that $(\mathcal{X},c_{\mathcal{X}},\alpha)$ and $(\mathcal{Y},c_{\mathcal{Y}},\beta)$ are isometric if there exists a bijection $\psi : \mathcal{X} \mapsto \mathcal{Y}$ such that the pushforward operator satisfies $\psi_{\#}\alpha = \beta$ and $c_{\mathcal{Y}}(\psi(x), \psi(x')) = c_{\mathcal{X}}(x, x')$. However, the Gromov-Wasserstein distance is challenging to compute, as a non convex quadratic program which is NP hard \citep{peyre16}. To address this issue from another perspective, one can realign the spaces $\mathcal{X}$ and $\mathcal{Y}$ using a global transformation before using the classical Wasserstein distance \citep{melis19a}. Furthermore, an entropic variant of Gromov-Wasserstein has been proposed to reduce its computational complexity \citep{peyre16} and recently, a sliced variant has been introduced in \citep{vayersgw} in the case of a particular cost. Finally, a tree variant was proposed to accelerate the computation of GW \citep{Le2019}.

\paragraph{Minibatch Wasserstein loss}
While the entropic OT loss has better computational complexity than the original Wasserstein distance, it is still challenging to compute it for a large dataset. To overcome this issue, several papers rely on a minibatch computation. Minibatches have been widely used in stochastic optimization for training ML models. For optimizing OT based criterion, minibatches have been used
for generative adversarial networks, they were associated with the Sinkhorn divergence as a loss in \citep{genevay_2018}, with an energy distance loss in \citep{salimans2018improving}, with the sliced Wasserstein distance variants \citep{Wu_2019_CVPR, liutkus19a, kolouri2016sliced} and a Gromov-Wasserstein loss in \citep{bunne19a}. We can also find this strategy in domain adaptation where the Wasserstein distance is optimized to learn a target joint distribution in \citep{deepjdot}. Instead of computing the OT problem between the full distributions, all those approaches compute an averaged of OT problems between batches of the source and the target domains.
Several works justifying the minibatch paradigm were recently published. \citep{bernton2019parameter} showed that for generative models, the minimizers of the minibatch loss converge to the true minimizer when the minibatch size increases. \citep{mbot_Sommerfeld} considered another approach, where they approximate OT with the minibatch strategy and exhibit a deviation bound between the two quantities. We followed  a different approach in \citep{fatras2019batchwass}, where we studied the behavior of using the minibatch OT losses as a loss function. We also studied the statistical and optimization properties of the minibatch Wasserstein loss functions on restricted cases, \textit{i.e.,} on bounded and uniform measures. We also highlighted the consequences of minibatch on the resulting transport plan and the behavior of such a loss for data fitting problems. 

The purpose of this work is to extend our results to a more general setting. We consider unbounded and non uniform probability distributions and a larger number of OT variants, such as the Gromov-Wasserstein distance. We then introduce a general framework to design minibatch OT in order to have meaningful estimators, including the sampling with replacement case where a given data appears several times in a minibatch. We state basic properties for the estimators. Then, we propose a new loss function to correct a downside of minibatch OT and we study its positiveness. After, we study concentration bounds for bounded and unbounded data scenarios. Regarding the optimisation properties of our loss function, we prove that minibatch OT can be optimised with a stochastic gradient strategy, in particular we considered weaker assumptions and the minibatch Wasserstein distance which was missing in our previous work. And finally we empirically demonstrate the reviewed properties and the performance of minibatch OT on applications such as generative modelling, gradient flows, map learning tasks, color transfer and meshes comparison.

\section{Minibatch Wasserstein }\label{sec:MBW}

The purpose of this section is to formally define and design the integration of the minibatch strategy with optimal transport. We start with a motivating example illustrating the different challenges that the minibatch strategy implies. Then we formalise the definitions of minibatch OT losses, after we present the basic properties, strengths and weaknesses of our minibatch OT losses. Then we introduce a new loss function which aims at correcting our minibatch OT losses.

\subsection{Motivating example : Generative Adversarial Networks}
In this subsection, we investigate an application where optimal transport has become a key tool. Generative adversarial networks have become a natural method to generate high quality images \citep{goodfellow_gan, Ledig2017}. The goal is to learn a generator $g_{\theta}$ of data from a random distribution, which lie in a latent space $\mathcal{Z}$, and to make generated data look like real data, which lie in a space $\mathcal{X}$. In this context, real data are empirical samples of an unknown distribution of interest, and as such form a discrete probability distribution $\balpha$, while the transformation from the latent space by the action of $g_\theta$ produces a continuous (possibly with density) distribution $\boldsymbol{\zeta}$. The generator is trained to minimize the distance between the real data and the generated distributions.

When the examples sampled by the generator have low variety, vanilla GANs suffer from gradient vanishing and mode collapse. To address these problems, \citep{arjovsky_2017} proposed to use the Wasserstein distance instead of KL-divergence for training GANs. As the target distribution is continuous, it corresponds for a finite dataset to a semi-continuous OT problem :
\begin{equation}
    W_1(\balpha, g_{\theta_\#}\boldsymbol\zeta)  = \min _{\gamma \in \mathcal{M}(\mathcal{X} \times \mathcal{Z})}\left\{\int_{\mathcal{X} \times \mathcal{Z}} c\left(\xx, g_{\theta}(\zz)\right) \mathrm{d} \gamma(\xx, \zz): \gamma \in \mathcal{U}(\balpha, g_{\theta_\#}\boldsymbol\zeta)\right\},
\end{equation}
where $g_{\theta_\#}$ is the generator pushforward operator. 
For the euclidean distance, the 1-Wasserstein distance can be rewritten with the Kantorovich-Rubinstein duality \citep{San15a} as follows:
\begin{equation}
    W_1(\balpha, g_{\theta_\#}\boldsymbol\zeta) = \underset{\operatorname{Lip}(f)\leq 1}{\operatorname{sup}} \Big\vert \frac1n \sum_{i=1}^n [f(\xx_i)] - \expect_{\zz \sim \boldsymbol{\boldsymbol{\zeta}}}[f(g_{\theta}(\zz))] \Big\vert .
\end{equation}
In practice, the Kantorovich potential $f$ is approximated with a neural network and optimized alternatively with the generator. However, this formulation requires the dual potential to be one Lipschitz, \emph{i.e., $\operatorname{Lip}(f) \leq 1$}. To enforce numerically this constraint, \cite{arjovsky_2017} manually constrained the neural network's weights to be less or equal to 0.01 in absolute value, while \cite{Gulrajani2017} added a gradient penalty in practice. Hence, those strategies do not calculate the true Wasserstein distance but an approximation. 

In their work, \cite{genevay_2018} relied on a minibatch computation of optimal transport to compute the primal formulation. This appealing strategy makes the problem tractable and it has been implemented as follows. After drawing $m$ samples from the latent domain and generating $m$ data, they pick $m$ training samples. Then they compute the Sinkhorn divergence between the training and generated samples. While we get a correct estimation of the Sinkhorn divergence between minibatches, it does not correspond to the true Sinkhorn divergence between measures. Finally the optimization problem is as follows:
\begin{equation}
    \underset{\theta}{\operatorname{min} }  \expect_{\widehat{\alpha}, \widehat{\zeta}} W_p(\widehat{\alpha}, g_{\theta_\#} \widehat{\zeta}),
\end{equation}

where $\widehat{\alpha}, \widehat{\zeta}$ represent respectively minibatches measures of source and target distributions.
This paper aims at bringing some light to this efficient strategy. 


\begin{figure}
    \centering
    \includegraphics[scale=0.5]{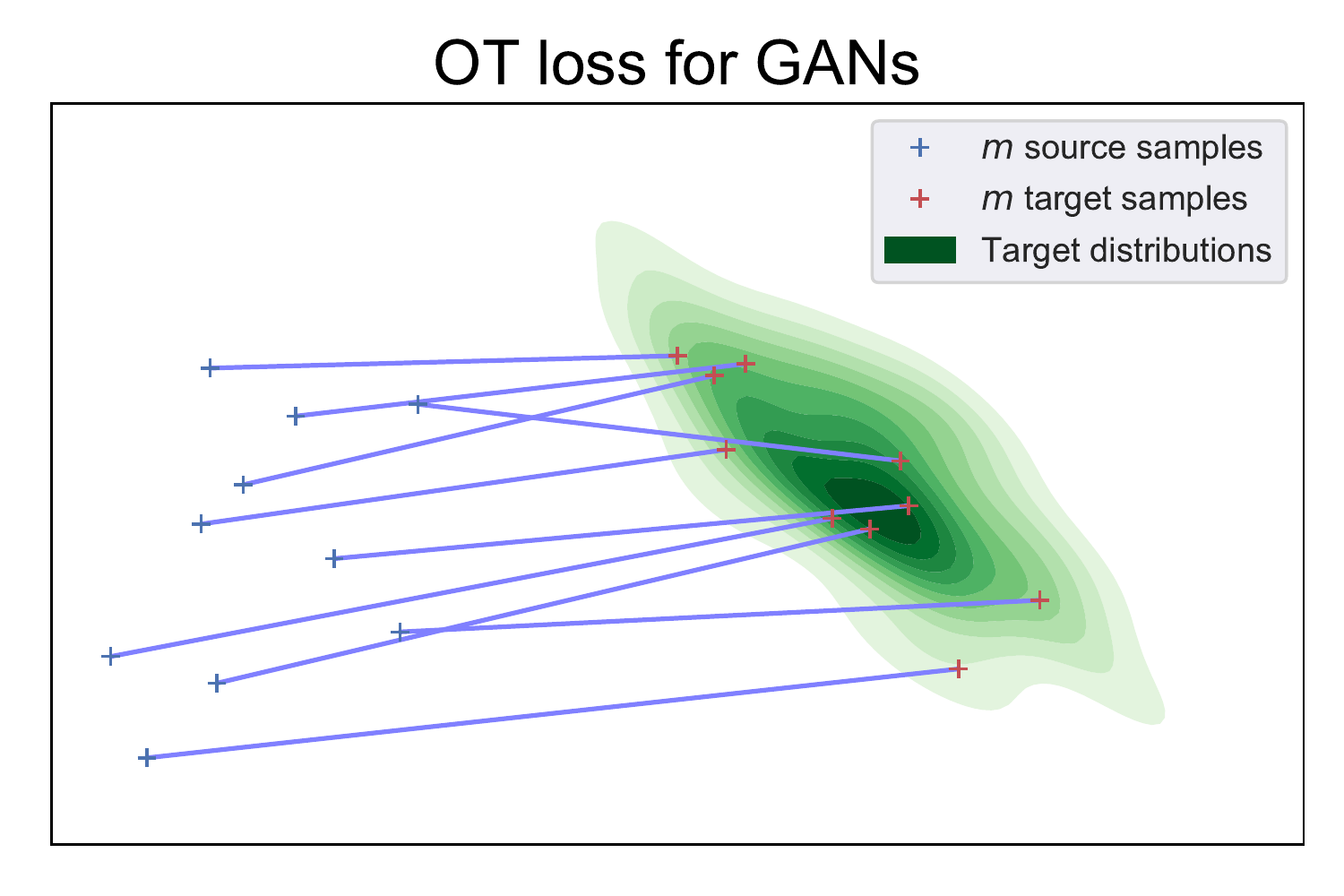}
    \caption{Illustration of optimal transport computation for GANs.}
    \label{fig:gan_ot}
\end{figure}

\subsection{Notations and Definitions}
\subsubsection{Notations}
In order to formalize the design of minibatches, we start by describing our notations. Vectors are denoted in roman boldfont, and data tuples (tuples of vectors) are denoted in capital boldfont. Suppose we have access to $n$ data $\xx \in \mathbb{R}^d$. We first assign a fixed index to each data and then get a $n$-tuple of data $\XX$, i.e., $\XX = (\xx_1, \xx_2, \cdots, \xx_n)$. This assignation allows us to draw minibatches of data, moreover, permutations of assigned labels would not change any result. 

As each data inside the data m-tuple has an index, it is then possible to characterize a m-tuple of data with a corresponding m-tuple of indices. A generic element of indices $I = (i_1,\ldots,i_m) \in \llbracket n \rrbracket^m$ is called an index $m$-tuple. 
For an index $m$-tuple $I = (i_1,\ldots, i_m)$,  $\XX(I) = (\xx_{i_1},\ldots,\xx_{i_m})$ is the corresponding data $m$-tuple, and vice-versa any data $m$-tuple can be written $\XX(I)$ for some index $m$-tuple $I$. After designing minibatches, we define the inputs of our problems.

Consider $ \alpha $ (resp. $ \beta $) a probability distribution on the source (resp. target) domain. In the case of discrete distributions, the distribution can be written as a sum of diracs, i.e., $\alpha = \an = \sum_{i=1}^n a_i \delta_{\xx_i}$, with a probability vector $\aa=(a_1, \cdots, a_n) \in \simplex_n$. We denote the product probability distribution $\alpha^{\otimes m }$ on $\mathcal{X}^{\otimes m}$ of $m$ i.i.d. random variables following $\alpha$. 
In a learning scenario, $\alpha$ and $\beta$ are unknown and instead, we have access to  $\XX=(\xx_1, \cdots, \xx_n)
$ (resp. $\boldsymbol{Y}=(\yy_1, \cdots, \yy_n) $), which corresponds to $n$ \emph{i.i.d.} random variables drawn from $\alpha$ (resp. from $\beta$), i.e. $\XX$ is drawn from $\alpha^{\otimes n}$ and $\boldsymbol{Y}$ is drawn from $\beta^{\otimes n}$. In our experiments, we associate to these (random) samples two uniform probability vectors, denoted
$\uu \in \S_n, (u_i)_{i\in \llbracket n \rrbracket} = \frac{1}{n}$, but the minibatch procedure can be defined for general probability vectors. We finish by defining extra notations. We consider the mapping: 
\begin{align}
C^{m,p}: (\XX,\YY) \in (\R^{m \times d})^2 \mapsto C^{m,p}(\XX, \YY) = \big( \| \xx_i - \yy_j \|_{2}^p   \big)_{1 \leq i,j \leq m} \in \mathcal{M}_m(\R),
\label{DEF : mb_matrix_map}
\end{align}
where $\mathcal{M}_n(\R)$ is the set of (real) square matrices of size $n$. 
The characteristic function of the set $A$, which is equal to 1 if $i \in A$ and 0 otherwise, is denoted $\mathbf{1}_{A}(i)$. With a slight abuse of notation we write $i \in I$ if the index $i$ appears in the $m$-tuple $I$. We also write $\mathbf{1}_I(i)$ for tuples of indices. Regarding the sum over the elements of $I = (i_1, \cdots, i_m)$, we denote it as $\sum_{i \in I} f(i)= \sum_{k=1}^{m}f(i_k)$ and similarly for the product over the elements $\Pi_{i \in I} f(i) = \Pi_{k=1}^m f(i_k)$. All the notations described above are summarized with simple examples in Table \ref{tab:summary_def_tab} and a longer version can be found in appendix \ref{app_sec:notations}.

\begin{table}[t]
        \begin{center}
            \begin{tabular}{ |c|c|c| } 
                 \hline
                 Notations & Description & Examples \\
                 \hline 
                 $\xx$ & vector $\in \mathbb{R}^d$ & $\xx=[1, 2, 3]$\\
                 $n$ & number of data & $n$=6\\
                 $m$ & minibatch size & $m=4 \leq n$\\
                 $I$ & Index $m$-tuple & $(1, 1, 2, 1)$ \\ 
                 $\llbracket n \rrbracket^m$ & Set of all index $m$-tuples & $\{ I_1, I_2 \cdots, \}$\\ 
                 $\XX(I)$ & data $m$-tuple & $(\xx_1, \xx_1, \xx_2, \xx_1)$ \\ 
                 $\XX$ & data $n$-tuple & $(\xx_1, \cdots, \xx_n)$ \\ 
                $\aa \in \simplex_n$ & probability vector & $\sum_{i=1}^n a_i = 1$ \\
                $\uu \in \simplex_n$ & uniform probability vector & $\sum_{i=1}^n \frac1n = 1$ \\
                $\alpha$ & probability distribution & $\mathcal{N}(0,1)$\\
                $\alpha^{\otimes m }$ & $m$-tuples drawn from $\alpha$ & $\XX \sim \alpha^{\otimes n}$\\
                $w$ & Reweighting function & $w([\frac{1}{2},\frac{1}{6},\frac{1}{3}], (1, 2)) = [\frac{2}{3}, \frac{1}{3}]$\\
                $P$ & Probability law to draw 
                index $m$-tuples & $P(I) = n^{-m}$\\ 
                $h$ & OT kernel & $h = W_p, W_\varepsilon, S_\varepsilon, \GW$\\
                $C^{m,p}$ & Ground cost matrix of size $n$ and $m$ & euclidean distance\\
                $\bar{h}_{w, P}$ & Minibatch kernel OT loss & $\overline{W}_{p,w,P}$\\
                $\widetilde{h}_{w,P}^k$ & Incomplete MBOT loss & $\widetilde{W}_{w^\mathtt{U},P^\mathtt{U}}^k$\\
                $\overline{\Pi}^h_{w,P}$ & MBOT plan & $\overline{\Pi}^h_{W,P}$ \\
                $\widetilde{\Pi}^{h,k}_{w,P}$ & Incomplete MBOT plan & $\widetilde{\Pi}^{W_{\varepsilon},k}_{w,P}$\\
                $\Lambda_{h, w, P}$ & Debiased minibatch loss & $\Lambda_{h=W_2^2}$\\
                $\widetilde{\Lambda}_{h,w,P, C^{n,p}(\XX, \YY)}^k$ & Incomplete debiased MBOT loss & $\widetilde{\Lambda}_{W,w,P}^k$\\
                 \hline
            \end{tabular}
        \end{center}
    \caption{Summary table of defined notations. (Left) notations, (middle) descriptions, (right) examples. }
    \label{tab:summary_def_tab}
\end{table}


\subsubsection{Minibatch Wasserstein definitions}\label{sec:definitions}
\newcommand\bJJ{\mathbb{J}}

To begin with, we define a generic mechanism based on minibatches to define a notion of "distance" between empirical measures. For this, we consider optimal transport losses. OT kernels were defined for continuous probability distributions. We suppose we have fixed data from now on and we consider that OT kernels take probability vectors as inputs instead of probability distributions.

\begin{definition}[OT Kernels]\label{DEF : OT_ker} An OT kernel is a function $h \in \{ W_p, W^{\varepsilon}, S^{\varepsilon},  \mathcal{GW} \}$. If $h \in \{ W_p, W^{\varepsilon}, S^{\varepsilon} \}$ it is a function of the form 
\begin{equation}
h : (\aa,\bb,C) \in (\S_m)^2 \times \mathcal{M}_m(\R)  \mapsto  \R_+
\label{EQ : ker}
\end{equation}
If $h = \mathcal{GW}$ it has the form
\begin{equation}
    \mathcal{GW} : (\aa,\bb,C_1, C_2) \in (\S_m)^2 \times \mathcal{M}_m(\R)^2 \mapsto  \R_+
\label{EQ : GW_ker}
\end{equation}
\end{definition}

Discrete probability distributions can either be represented as a sum of diracs or with a probability vector and the support of measures. We chose the latter as it is easier to define formal mathematical objects when we consider discrete probability distributions and for consistency with \citep{COT_Peyre}. {Indeed sum of diracs are equal for different indices assignations, \emph{i.e., $\sum_{k=1}^n a_{i_k} \delta_{x_{i_k}} = \sum_{k=1}^n a_{k} \delta_{x_k}$}, the result of selecting an element with a given index from the minibatch would depend on the order of diracs. One can define a discrete probability distribution from a probability vector $\aa \in \simplex_n$ and locations $\XX$ in a canonical way by $\a_n = \sum_{i=1}^n a_i \delta_{x_i} $ (see remark 2.1 \cite{COT_Peyre}), and we will often implicitly use this assignment throughout the rest of the article (see remark 2.11 \cite{COT_Peyre}).}

{To define minibatch OT losses, a first ingredient is a "reweighting function" 
$w$
that takes as inputs a discrete probability $\mathbf{a} \in \simplex_n$ and an $m$-tuple of indices $I = (i_1,\ldots,i_m)$ and outputs a 
discrete probability vector
$\mathbf{b} = w(\mathbf{a},I) \in \simplex_m$. 
A second ingredient is  a parametric family of distributions $\{P_{\mathbf{a}} : \mathbf{a} \in \Sigma_n\}$ such that for each $\mathbf{a} \in \Sigma_n$, $P_{\mathbf{a}}$ is a probability distribution over $m$-tuples $I$ of indices. {The law on probability tuples assures that we have a weighted average of OT kernels and its combination with a suited reweighting function assures all samples are transported.} Those ingredients are needed to get unbiased estimator of minibatch OT. Formally, we need:

{

\begin{definition}[Reweighting and probability functions] A reweighting function is a map $w$ of the form :
\begin{equation}
w : (\aa, I) \in \S_n \times \llbracket n \rrbracket^m \mapsto \S_m
\label{EQ : rew_fun}
\end{equation}
A probability function is a map $P$ of the form :
\begin{equation}
P : \aa \in \S_n \mapsto P_{\aa} \in \mathcal{P} \big( \llbracket n \rrbracket^m \big),
\label{EQ : prob_fun}
\end{equation}
where $\mathcal{P} \big( \llbracket n \rrbracket^m \big)$ is the set of probability distributions over the set of $m$-tuples $I$ of indices $\llbracket n \rrbracket^m$.
\end{definition}

We are now ready to give a formal definition of minibatch OT losses. The idea is to compute the expectation of the OT kernels over minibatches $I$, furthermore we need the reweighting functions to assure that the OT kernels has probability vectors as inputs.

\begin{definition}[Minibatch Wasserstein] \label{def:mbw}

Let $C = C^{n,p}(\XX, \YY)$ be a matrix of size $n \times n$. Given a kernel $h \in \{ W_p, W_p^p, W^{\varepsilon}, S^{\varepsilon}\}$ as in \eqref{EQ : ker}, two reweighting functions $w_1, w_2 $ and two probability functions $P^1,P^2$ as in \eqref{EQ : rew_fun} and \eqref{EQ : prob_fun} respectively, we define the minibatch OT loss $\overline{h}_{w_1, w_2, P^1, P^2}$ for any $\aa, \bb \in \simplex_n$  by :
\begin{equation}
 \overline{h}_{w_1, w_2, P^1, P^2, C}(\mathbf{a},\mathbf{b})
 := \expect_{I \sim P^1_{\mathbf{a}}} \expect_{J \sim P^2_{\mathbf{b}}} 
 h \Big( w_1(\mathbf{a},I), w_2(\mathbf{b},J), C_{(I,J)}\Big),
 \label{def:minibatch_wasserstein}
\end{equation}
where for $I,J$ two $m$-tuples, $C_{(I,J)}$ is the matrix extracted from $C$ by keeping the rows and columns corresponding to $I$ and $J$ respectively.
Moreover, we also define for two ground costs $C^1 = C^{n,p}(\XX, \XX)$ and $C^2 = C^{n,p}(\YY, \YY)$ the loss:
\begin{equation}
    \overline{\GW}_{w_1,w_2, P^1, P^2, C^1, C^2} (\aa, \bb ) := \expect_{I \sim P^1_{\mathbf{a}}} \expect_{J \sim P^2_{\mathbf{b}}} 
 \GW \Big( w_1(\mathbf{a},I), w_2(\mathbf{b},J), C^1_{I,I}, C^2_{J,J}\Big),
\label{def:minibatch_GW}
\end{equation}
where $C^1_{(I,I)}$ (resp. $C^2_{(J,J)}$) is the matrix extracted from $C^1$ (resp. $C^2$) by keeping the rows and columns corresponding to $I$ and $I$ (resp. $J$ and $J$).\\
\end{definition}}
While it is easier to get statistical results with the ground cost $C^{n,p}(\XX, \YY)$, which is a square matrix of size $n$, in practice we only need to compute $C_{(I,J)}$ as it is equal to $C^{m,p}(\XX(I), \YY(J))$. In what follows, the dependence of minibatch OT in the ground cost $C$ will often be omitted when there is no possible confusion. When the reweighting functions and the probability laws on tuples are the same (equal to $w$ and $P$ respectively), we use the following shorthand notations for the abover losses : $\overline{h}_{w, P}$ 
With a slight abuse of notation, we also use the notation $\overline{h}_{w, P}$ for the $\GW$ loss.
\begin{remark} In \eqref{def:minibatch_wasserstein} and \eqref{def:minibatch_GW} the dependence in the distribution supports, $\XX$ and $\YY$, is implicit through the euclidean ground cost $C$.
\end{remark}

The loss $\overline{h}(\mathbf{a}, \mathbf{b})$ corresponds to an averaged optimal transport distance between sub-probability distributions of input probability distributions $\mathbf{a}$ and $\mathbf{b}$. The minibatch OT losses define weighted U-statistics and V-statistics \citep{LeeUstats} where the weights depend on the input probability vectors $\mathbf{a}, \mathbf{b}$ and on the laws over index m-tuple $P_{\mathbf{a}}, P_{\mathbf{b}}$. This connection turns out to be central to get quantitative statistical results. Concrete versions of these minibatch OT losses are obtained by specifying its ingredients $h$, $w$, and $P$. We now give a few examples of some \textit{reweighting functions} and \textit{families of distributions}.
\begin{example}[Uniform reweighting function]\label{def:reweight_function_U}
The \emph{uniform reweighting function} $w^\mathtt{U}$ is independent of the input discrete probability $\mathbf{a}$. It is defined coordinatewise for any $m$-tuple $I = (i_1,\ldots,i_m)$ by $w_k^\mathtt{U}(\mathbf{a},I) = \frac{1}{m}$, $1 \leq k \leq m$ and yields to a uniform probability vector in $\simplex_m$.
\end{example}

\begin{example}[Normalized reweighting function]\label{def:reweight_function_N}
The \emph{normalized reweighting function} $w^\mathtt{W}$ normalizes the restriction of the input discrete probability $\mathbf{a}$ to the support of $I$, to ensure it remains a discrete probability. It is defined coordinatewise for any $m$-tuple $I = (i_1,\ldots,i_m)$ by $w_k^\mathtt{W}(\mathbf{a},I) = \frac{a_{i_k}}{\sum_{p=1}^m a_{i_p}}$, $1 \leq k \leq m$, 
which is again a probability vector even if entries in $I$ are repeated.
When $I$ is such that $\sum_{i \in I} a_i=0$, we define $w_k^\mathtt{W}(\mathbf{a},I) = w_k^\mathtt{U}(\mathbf{a},I) = \frac{1}{m}$.

\end{example}

For instance, consider four $\XX=(\xx_1, \xx_2, \xx_3, \xx_4)$ data with weights $\aa = (\frac14, \frac18, \frac18, \frac12 )$, if one picks the batch $I=(2, 4)$, the reweighting functions give $w^\mathtt{W}(\mathbf{a},I) = [\frac15, \frac45]$ and $w^\mathtt{U}(\mathbf{a},I) = [\frac12, \frac12]$. In the case of a uniform discrete probability $\uu \in \simplex_n$, the two reweighting functions are identical. 

Regarding the parametric law on indices, which gives the probability to pick a given batch of samples, we focus on two constructions depending whether sampling is done with or without replacement. Indeed in practice, most of work use a sampling without replacement, and it is easy to design this case with our formalism. We first consider sampling with replacement.

\begin{example}[Drawing indices with replacement]
\label{def:law_indices_rep} 
Drawing $i_\ell \in \llbracket n \rrbracket$, $1 \leq \ell \leq m$ i.i.d. (with replacement) from the discrete probability distribution $\mathbf{a} \in \simplex_n$ yields the law on indices 
\begin{equation}
P_\mathbf{a}^{\mathtt{U}}(I) =\Pi_{i \in I} a_i.
\label{EQ : samp_no_rep}
\end{equation}
\end{example}

Now we give an example of drawing without replacement. The idea is to give a zero probability to pick a batch with repeated indices.

\begin{example}[Drawing indices ``without replacement'']\label{def:law_indices_rep2} 
Given a discrete probability distribution $\mathbf{a} \in \simplex_n$, it is also possible to draw distinct indices $i_\ell \in \llbracket n \rrbracket$, $1 \leq \ell \leq m$, by defining $P_\mathbf{a}^\mathtt{W}(I)=0$ if the $m$-tuple $I$ \emph{has} repeated indices, otherwise
\begin{equation}
P_{\mathbf{a}}^\mathtt{W}(I) = 
\frac{1}{m}
\frac{(n-m)!}{(n-1)!}
\sum_{i \in I} a_i.
\label{EQ : samp_rep}
\end{equation}

With a uniform discrete probability, $a_i= \frac1n$, $1 \leq i \leq n$, this law corresponds to drawing the $m$-tuples without repeated indices $I$ uniformly at random among all possible $m$-tuples without repeated indices, i.e., drawing the $m$ indices $i_p$ without replacement. By abuse of language, we will sometime refer to this law as a draw "without replacement" even for non uniform $\mathbf{a}$.
\end{example}





This formalism is a generalization of minibatch OT losses previously defined in \citep{fatras2019batchwass}. Indeed, for a sampling without replacement, associated to a uniform probability distribution $\uu \in \simplex_n$ and reweighting function $w^\mathtt{U}$, we have:

\begin{proposition}[Minibatch OT loss \citep{fatras2019batchwass}] 
\label{def:law_indices_rep_aistats} 
Denote $\mathcal{P}^{o,m}$ the set of all ordered $m$-tuples without repeated indices. Given a discrete uniform probability distribution $\uu$, the reweighting function $w^\mathtt{U}$ and the probability law on m-tuples $P_{\uu}^\mathtt{W}$, we have that our minibatch OT losses is equal to the minibatch OT losses previously defined in \citep{fatras2019batchwass}. 
Formally,

\begin{equation}
\overline{h}_{w^\mathtt{U},P^\mathtt{W}}(\uu,\uu) ={n \choose m}^{-2}  \sum_{I^o  \in \mathcal{P}^{o,m} } \sum_{J^o \in \mathcal{P}^{o,m}}   h \Big( w_1(\uu,A), w_2(\uu,B), C_{(I,J)}\Big),
\end{equation}

where $C_{(I^o,J^o)}$ is the ground cost matrix between elements in $I^o$ and $J^o$.
\end{proposition}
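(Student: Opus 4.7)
The argument is a direct unfolding of the expectation in \eqref{def:minibatch_wasserstein} under the concrete choices $w=w^{\mathtt{U}}$, $P=P^{\mathtt{W}}$ and the uniform vector $\uu$. The whole thing reduces to three routine simplifications plus one conceptual ingredient (permutation invariance of the OT kernel under uniform marginals).

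\emph{Step 1: unfolding and eliminating tuples with repeated indices.} Starting from Definition~\ref{def:mbw}, I would write
\[
\overline{h}_{w^{\mathtt{U}},P^{\mathtt{W}}}(\uu,\uu) = \sum_{I \in \llbracket n \rrbracket^m}\sum_{J \in \llbracket n \rrbracket^m} P^{\mathtt{W}}_{\uu}(I)\, P^{\mathtt{W}}_{\uu}(J)\, h\bigl(w^{\mathtt{U}}(\uu,I), w^{\mathtt{U}}(\uu,J), C_{(I,J)}\bigr).
\]
By Example~\ref{def:law_indices_rep2}, $P^{\mathtt{W}}_{\uu}(I)=0$ whenever $I$ has repeated indices, so only tuples of pairwise distinct entries contribute. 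For any such $I$, plugging $a_i=1/n$ into \eqref{EQ : samp_rep} gives
\[
P^{\mathtt{W}}_{\uu}(I) = \tfrac{1}{m}\tfrac{(n-m)!}{(n-1)!}\cdot\tfrac{m}{n} = \tfrac{(n-m)!}{n!},
\]
independently of $I$. Likewise $w^{\mathtt{U}}(\uu,I)=(1/m,\dots,1/m)$ does not depend on $I$ at all.

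\emph{Step 2: permutation invariance of $h$ under uniform marginals.} For any permutation $\sigma \in \mathfrak{S}_m$, relabelling the entries of $I$ by $\sigma$ permutes rows of $C_{(I,J)}$ accordingly, and symmetrically for $J$. Since the marginals $w^{\mathtt{U}}(\uu,I)$ are invariant under permutation and each OT kernel $h\in\{W_p,W^{\varepsilon},S^{\varepsilon},\mathcal{GW}\}$ is invariant under simultaneous relabelling of support and marginal weights, the value $h(w^{\mathtt{U}}(\uu,I),w^{\mathtt{U}}(\uu,J),C_{(I,J)})$ depends only on the underlying unordered $m$-subsets $\{i_1,\dots,i_m\}$ and $\{j_1,\dots,j_m\}$. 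Each such unordered pair is therefore the image of exactly $(m!)^2$ ordered tuples of distinct indices.

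\emph{Step 3: collecting orbits and recognizing $\binom{n}{m}^{-2}$.} Grouping the ordered sum into $\mathfrak{S}_m\times\mathfrak{S}_m$-orbits and using the previous two steps, the contribution of each orbit (i.e. each element of $\mathcal{P}^{o,m}$ read as a canonical $m$-subset) equals
\[
(m!)^2\cdot \Bigl(\tfrac{(n-m)!}{n!}\Bigr)^{2} \cdot h\bigl(w^{\mathtt{U}}(\uu,I), w^{\mathtt{U}}(\uu,J), C_{(I,J)}\bigr) = \binom{n}{m}^{-2} h\bigl(\cdots\bigr),
\]
which yields exactly the right-hand side of the claimed identity.

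\emph{Main obstacle.} The only subtle point is the permutation invariance in Step~2: one must verify it uniformly across the four kernels $W_p, W^{\varepsilon}, S^{\varepsilon}, \mathcal{GW}$. This is essentially tautological (the transport polytope $\boldsymbol{U}(\cdot,\cdot)$ is equivariant under row/column relabelling and the cost enters through an inner product with $\pi$), but it should be stated explicitly. Beyond that, the proof is purely combinatorial bookkeeping matching the probability $(n-m)!/n!$ per ordered tuple and the counting factor $(m!)^2$ per orbit to the Bernoulli factor $\binom{n}{m}^{-2}$ and restoring consistency with the formulation of \citep{fatras2019batchwass}.
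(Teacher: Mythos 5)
Your proposal is correct and follows essentially the same route as the paper's proof: compute $P^{\mathtt{W}}_{\uu}(I)=(n-m)!/n!$, use permutation invariance of the OT kernel under uniform marginals to identify the $(m!)^2$ ordered tuples sharing the same underlying $m$-subsets, and match the counting factor to $\binom{n}{m}^{-2}$. The only difference is presentational — you go from the general expectation down to the ordered-tuple sum while the paper expands in the other direction — and you are right (and slightly more explicit than the paper) that the permutation invariance of $h$ is the one step that deserves an explicit justification.
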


\begin{proof}
We prove it for the Wasserstein distance losses and the proof for the $\GW$ loss follows the same steps. In the case of a uniform distribution $\uu$, we have $P_{\uu}^\mathtt{W}(I) = \frac{(n-m)!}{n!} $.
\newline 
We denote $\Pim$ the set of all $m$-tuples without repeated indices and we define $\mathcal{P}^{o,m}$ the set of all ordered $m$-tuples without repeated indices. For each element $I^o$ in $\mathcal{P}^{o,m}$, there are $m!$ permutations of $m$-tuples without replacement $I$. For each $I$, let us denote its corresponding element in $\mathcal{P}^{o,m}$ as $E(I)$. Denote  $I$ (resp. $J$) the $m$-tuples such as $E(I) = I^o$ (resp. $E(J) = J^o$). We can then show that our estimator is equal to the one defined in \citep{fatras2019batchwass}. Let us gather the permutations of $I^o$ and $J^o$ as: 
\begin{equation*}
    h \Big( w^\mathtt{U}(\uu,I^o), w^\mathtt{U}(\uu,J^o), C_{(I^o,J^o)}\Big) = [m!]^{-2} \sum_{I: E(I)=I^o} \sum_{J: E(J)=J^o}  h \Big( w^\mathtt{U}(\uu,I), w^\mathtt{U}(\uu,J), C_{(I,J)}\Big)\text{, }
\end{equation*}
then:
\begin{align*}
&{n \choose m}^{-2}  \sum_{I^o \in \mathcal{P}^{o,m}} \sum_{J^o \in  \mathcal{P}^{o,m}}  h \Big(  w^\mathtt{U}(\uu,I^o), w^\mathtt{U}(\uu,J^o), C_{(I^o,J^o)}\Big) \\
&= {n \choose m}^{-2}  \sum_{I^o \in \mathcal{P}^{o,m}} \sum_{J^o \in \mathcal{P}^{o,m}} [m!]^{-2}  \sum_{I: E(I)=I^o} \sum_{J: E(J) = J^o}  h \Big( w^\mathtt{U}(\uu,I), w^\mathtt{U}(\uu,J), C_{(I,J)}\Big)\\
&= (\frac{(n-m)!}{n!})^{2} \sum_{I \in \Pim} \sum_{J \in \Pim}  h \Big( w^\mathtt{U}(\uu,I), w^\mathtt{U}(\uu,J), C_{(I,J)}\Big) =  \overline{h}_{w^\mathtt{U},P_{\uu}^\mathtt{W}}(\uu,\uu)
\end{align*}
\end{proof}

The general proposed formalism allows one to recover the GAN formalism. We can define a sampling without replacement in the source domain with $P_{\mathbf{a}}^\mathtt{W}$ and a sampling with replacement in the target domain with $P_{\mathbf{b}}^\mathtt{U}$, to get the loss $\overline{h}_{w^\mathtt{W}, w^\mathtt{U}, P_{\mathbf{a}}^\mathtt{W}, P_{\mathbf{b}}^\mathtt{U}}$. After setting a rigorous formalism for minibatch optimal transport, we study its transport plan counter part.


 


\subsubsection{Minibatch transport plan}

Classical OT losses such as the $p$-Wasserstein distance $W_p$ or its entropic variant $W^\varepsilon$ are directly associated with a transport plan between distributions. We now propose to similarly define a transport plan associated to the proposed minibatch losses. The main idea is that for each pair of samples $\xx_i$ and $\yy_j$, one can average the connections provided by all possible ``minibatch transport plans'', with the following definition.

\begin{definition}[minibatch transport plan] \label{def:MBTP}
We will denote by $\Pi(h,C, \aa, \bb)$ the set of all optimal transport plans for a given OT kernel $h$, cost matrix $C$ and marginals $\aa, \bb$. Let $C = C^{n,p}(\XX, \YY)$ be a matrix of size $n \times n$ and let $\mathbf{a}_{I}, \mathbf{b}_{I} \in \simplex_m$ be discrete probability vectors indexed by $m$-tuples $I$ of $\llbracket 1,n\rrbracket$.
For each pair of index $m$-tuples $I = (i_1,\ldots,i_m)$ and $J = (j_1,\ldots,j_m)$ from $\llbracket 1,n\rrbracket^m$, consider $C' := C_{I, J}$ the $m \times m$ matrix with entries $C'_{k\ell} = C_{i_k,j_\ell}$ (repeated entries in $I$ or $J$ imply repeated lines or columns) and denote by
$\Pi^{m} _{ I, J }$ an arbitrary element of $\Pi(h, C_{I,J}, \mathbf{a}_{I}, \mathbf{b}_{J})$. This optimal transport plan is an $m \times m$ matrix satisfying $\Pi^{m}_{I,J} \in U( \mathbf{a}_{I},\mathbf{b}_{J} )$, that is to say
\begin{equation}\label{eq:minibatchTPadmissible}
\Pi^{m}_{I,J} \mathbf{1}_m = \mathbf{a}_I\quad \text{and}\quad
\mathbf{1}_m^\top \Pi^{m}_{I,J} = \mathbf{b}_J^\top.
\end{equation}
It can be lifted to an $n \times n$ matrix where all entries are zero except those indexed in $I \times J$: 
\begin{align}
    \label{eq:minibatchTPlifted}
    \Pi_{I,J} &= Q_I^\top \Pi^{m}_{I,J} Q_J\\
\intertext{where $Q_I$ and $Q_J$ are $m \times n$ matrices defined entrywise as}
(Q_I)_{k i} &= \delta_{i_k,i}, 1 \leq k \leq m, 1 \leq i \leq n\\
(Q_J)_{\ell j} &= \delta_{j_\ell,j}, 1 \leq \ell \leq m, 1 \leq j \leq n.
\end{align}
Each row of these matrices is a Dirac vector, hence they satisfy $Q_I \mathbf{1}_n = \mathbf{1}_m$ and $Q_J \mathbf{1}_n = \mathbf{1}_m$.


\end{definition}
We also define the \textit{averaged minibatch transport matrix} which takes into account all possible minibatch couples.

\begin{definition}[Averaged minibatch transport matrix]
\label{def:AVG_OT_plan}

Consider as in Definition~\ref{def:mbw} an OT kernel $h$, two reweighting functions $w_1, w_2$ and a family of probability distributions  $\{ P_{\mathbf{a}} : \mathbf{a} \in \Sigma_n \}$ over index $m$-tuples from $\llbracket 1,n\rrbracket$, where $1 \leq m \leq n$.
Given discrete probabilities $\mathbf{a},\mathbf{b} \in \simplex_n$ and data tuples $\XX,\YY$, 
 consider for each pair of $m$-tuples $I$, $J$ the discrete probabilities $\mathbf{a}_I = w_1(\mathbf{a},I) \in \simplex_m$, $\mathbf{b}_J = w_2(\mathbf{b},J) \in \simplex_m$, and let $\Pi_{I,J}$ be defined as in Definition~\ref{def:MBTP}.
The averaged minibatch transport matrix is
  \begin{equation}
    \overline{\Pi}^h_{w_1, w_2, P_\aa, P_\bb}(\mathbf{a},\mathbf{b}) \defas \expect_{I \sim P_\mathbf{a}, J \sim P_\mathbf{b}} \Pi_{I, J}
  \label{def:omega_pi_m}
  \end{equation}
  For brevity this is simply denoted $\overline{\Pi}_{w,P}^h$ when $w$ and $P$ are  clear from context.
\end{definition}
The average in the above definition can be expressed as a finite weighted sum of $\Pi_{I,J}$. It is therefore well defined for an arbitrary choice of optimal transport plans $\Pi_{I,J}$, and we do not need to concern ourselves with the measurability of selection of optimal transport plans. The same will be true whenever an average of optimal transport plans will be taken in the rest of this paper, since all results concerning such averages will be nonasymptotic. We will therefore avoid further mentioning this issue, for the sake of brevity.

Note that the Sinkhorn divergence involves three terms, hence three transport plans, which explains why we do not attempt to define an associated averaged minibatch transport matrix. While the $n \times n$ matrix defined in~\eqref{def:omega_pi_m} is candidate to be transport plan between $\mathbf{a}$ and $\mathbf{b}$, we need to check if it is indeed admissible, i.e., if it has the right marginals. This is why it is a priori only called an averaged minibatch transport {\em matrix}.


\begin{proposition}{\label{prop:admi_plan}\label{prop:upper_bound}}
If the reweighting function $w$ and the parametric distribution on $m$-tuples $P_\mathbf{c}$ satisfy the following admissibility condition 
   \begin{equation}
        \expect_{I \sim P_{\mathbf{c}}}  Q_I^\top w(\mathbf{c},I) = \mathbf{c},\qquad \forall \mathbf{c} \in \simplex_n\label{eq:wPadmissible}
    \end{equation}
Then with the notations of Definition~\ref{def:AVG_OT_plan}, the averaged minibatch transport matrix $\overline{\Pi}^h_{w,P}$ is an admissible transport plan between the discrete probabilities $\mathbf{a},\mathbf{b} \in \simplex_n$ in the sense that $\overline{\Pi}^h_{w,P} \mathbf{1}_n = \mathbf{a}$ and $\mathbf{1}_n^\top \overline{\Pi}^h_{w,P} = \mathbf{b}^\top$.
 Considering the Wasserstein kernel $h = W_p^p$, the minibatch loss defined in~\eqref{def:minibatch_wasserstein}, as the associated coupling $\overline{\Pi}^h_{w,P}$ is not the optimal coupling of the full OT problem, it satisfies
\begin{equation}\label{eq:mblowerbound}
    \overline{h}_{w,P}(\mathbf{a}, \mathbf{b}) = \langle \overline{\Pi}^{h}_{w,P}, C \rangle_F \geq h(\mathbf{a}, \mathbf{b}).
\end{equation}
\end{proposition}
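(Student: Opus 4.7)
The proposition splits into two parts: (i) marginal admissibility of $\overline{\Pi}^h_{w,P}$, and (ii) the cost identity plus the lower bound specialized to $h=W_p^p$. My plan is to reduce everything to linear algebraic identities on the lifting matrices $Q_I,Q_J$ from Definition~\ref{def:MBTP}, then conclude by an optimality argument.

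\textbf{Step 1: marginals of $\Pi_{I,J}$.} For a fixed pair $(I,J)$ of $m$-tuples, I would first compute $\Pi_{I,J}\mathbf{1}_n$. Using $\Pi_{I,J} = Q_I^\top \Pi^m_{I,J} Q_J$ and the fact that $Q_J\mathbf{1}_n = \mathbf{1}_m$ (each row of $Q_J$ is a Dirac), together with the inner marginal condition~\eqref{eq:minibatchTPadmissible}, one obtains
\[
\Pi_{I,J}\mathbf{1}_n = Q_I^\top \Pi^m_{I,J}\mathbf{1}_m = Q_I^\top \mathbf{a}_I = Q_I^\top w_1(\mathbf{a},I).
\]
A symmetric computation yields $\mathbf{1}_n^\top \Pi_{I,J} = (Q_J^\top w_2(\mathbf{b},J))^\top$. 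These identities hold regardless of whether $I$ or $J$ has repeated indices, since $Q_I^\top$ simply scatters (and sums into) the corresponding coordinates.

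\textbf{Step 2: taking expectations.} By linearity of expectation, and using that $I\sim P^1_\mathbf{a}$ and $J\sim P^2_\mathbf{b}$ are independent,
\[
\overline{\Pi}^h_{w,P}(\mathbf{a},\mathbf{b})\mathbf{1}_n
= \mathbb{E}_{I,J}\bigl[Q_I^\top w_1(\mathbf{a},I)\bigr]
= \mathbb{E}_{I\sim P^1_\mathbf{a}}\bigl[Q_I^\top w_1(\mathbf{a},I)\bigr] = \mathbf{a},
\]
where the last equality uses the admissibility hypothesis~\eqref{eq:wPadmissible} applied with $\mathbf{c}=\mathbf{a}$. The same argument with $J$ gives $\mathbf{1}_n^\top\overline{\Pi}^h_{w,P}(\mathbf{a},\mathbf{b})=\mathbf{b}^\top$, proving the first part.

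\textbf{Step 3: cost identity for $h=W_p^p$.} The key observation is the lifting identity
\[
\langle \Pi^m_{I,J}, C_{I,J}\rangle_F = \langle Q_I^\top \Pi^m_{I,J} Q_J,\, C\rangle_F = \langle \Pi_{I,J},\, C\rangle_F,
\]
which follows by expanding $\sum_{i,j}C_{ij}\sum_{k,\ell}\delta_{i_k,i}\delta_{j_\ell,j}(\Pi^m_{I,J})_{k\ell} = \sum_{k,\ell}C_{i_k j_\ell}(\Pi^m_{I,J})_{k\ell}$. Since $\Pi^m_{I,J}$ is optimal for $W_p^p$ with cost $C_{I,J}$,
\[
W_p^p(\mathbf{a}_I,\mathbf{b}_J,C_{I,J}) = \langle \Pi^m_{I,J},C_{I,J}\rangle_F = \langle \Pi_{I,J},C\rangle_F.
\]
Taking expectation over $(I,J)$ and using linearity gives $\overline{h}_{w,P}(\mathbf{a},\mathbf{b}) = \langle \overline{\Pi}^{h}_{w,P},C\rangle_F$.

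\textbf{Step 4: the lower bound.} By Step 2, $\overline{\Pi}^h_{w,P}\in U(\mathbf{a},\mathbf{b})$, so it is a feasible transport plan between $\mathbf{a}$ and $\mathbf{b}$. Since $W_p^p(\mathbf{a},\mathbf{b}) = \min_{\Pi\in U(\mathbf{a},\mathbf{b})}\langle \Pi,C\rangle_F$, we conclude $\langle \overline{\Pi}^h_{w,P},C\rangle_F \geq W_p^p(\mathbf{a},\mathbf{b})$, which is~\eqref{eq:mblowerbound}.

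The main subtlety I expect is in Step 1: when $I$ (or $J$) contains repeated indices, $Q_I$ is not an injection, so the lifted plan $\Pi_{I,J}$ has its mass added up at repeated coordinates rather than placed at distinct ones. This is precisely why one needs the admissibility condition~\eqref{eq:wPadmissible} to hold with $Q_I^\top$ on the reweighted vector rather than on the raw indicator of $I$; the conditions on $(w,P)$ are tailored to make this aggregation reproduce $\mathbf{a}$ on average. Once this identity is accepted, the remainder is routine.
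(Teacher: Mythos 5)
Your proof is correct and follows essentially the same route as the paper's own argument: the marginal computation via $Q_J\mathbf{1}_n=\mathbf{1}_m$ and the admissibility condition, the Frobenius adjoint identity $\langle Q_I^\top\Pi^m_{I,J}Q_J,C\rangle_F=\langle\Pi^m_{I,J},C_{I,J}\rangle_F$ giving the cost identity, and feasibility of $\overline{\Pi}^h_{w,P}$ yielding the lower bound. Your closing remark about repeated indices being absorbed by the scattering action of $Q_I^\top$ is a correct reading of why the admissibility condition is stated the way it is.
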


Under assumption~\eqref{eq:wPadmissible} one can safely call $\overline{\Pi}^h_{w,P}(\mathbf{a}, \mathbf{b})$ an averaged minibatch transport \emph{plan}.

Our main examples of reweighting functions and parametric probability distributions indeed satisfy the admissibility condition~\eqref{eq:wPadmissible}.
\begin{lemma}[Admissibility]\label{lem:admissibilitywPusual}
The uniform reweighting function $w^\mathtt{U}$ and the parametric law "with replacement" $P^\mathtt{U}$ satisfy the admissibility condition~\eqref{eq:wPadmissible}.
The admissibility condition also holds for the parametric law without replacement $P^\mathtt{W}$ with the normalized reweighting function $w^\mathtt{W}$.\\
In contrast for $w^\mathtt{U},P^\mathtt{W}$ when $\mathbf{a}$ is not uniform, the resulting OT matrix is not a transportation plan.
\end{lemma}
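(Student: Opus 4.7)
The plan is to unwind the admissibility condition~\eqref{eq:wPadmissible} coordinatewise, then verify it by direct computation for each of the three stated pairs. First I observe that for any $m$-tuple $I=(i_1,\ldots,i_m)$ and any vector $v \in \R^m$, one has $(Q_I^\top v)_i = \sum_{k=1}^m \delta_{i_k,i} v_k$, so the condition~\eqref{eq:wPadmissible} reads, for every $i \in \llbracket n \rrbracket$,
\begin{equation*}
\expect_{I \sim P_{\mathbf{c}}} \sum_{k=1}^m \delta_{i_k,i}\, w_k(\mathbf{c},I) \;=\; c_i.
\end{equation*}

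For $(w^\mathtt{U},P^\mathtt{U})$, the left-hand side becomes $\tfrac{1}{m}\,\E[|\{k: i_k = i\}|]$. Under $P^\mathtt{U}_\mathbf{c}$ the indices $i_k$ are i.i.d.\ with law $\mathbf{c}$, so $\E[|\{k: i_k=i\}|]=mc_i$ and the identity is immediate. For $(w^\mathtt{W},P^\mathtt{W})$, since $P^\mathtt{W}_\mathbf{c}$ is supported on $m$-tuples with pairwise distinct entries, the sum $\sum_k \delta_{i_k,i}w_k^\mathtt{W}(\mathbf{c},I)$ equals $\mathbf{1}_{i\in I}\,c_i/\sum_{p} c_{i_p}$. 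Multiplying by $P^\mathtt{W}_\mathbf{c}(I) = \tfrac{(n-m)!}{m(n-1)!}\sum_p c_{i_p}$ the factor $\sum_p c_{i_p}$ cancels, and the remaining sum is $c_i$ times the number of ordered $m$-tuples without repeated indices that contain $i$, which is $m\cdot (n-1)!/(n-m)!$; the normalization constants precisely cancel and one obtains $c_i$.

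For the negative claim concerning $(w^\mathtt{U}, P^\mathtt{W})$ with non-uniform $\mathbf{c}$, I would compute the left-hand side explicitly. Here the summand $\tfrac{1}{m}\mathbf{1}_{i\in I}$ multiplied by $P^\mathtt{W}_\mathbf{c}(I)=\tfrac{(n-m)!}{m(n-1)!}\sum_p c_{i_p}$ gives, after summing over $m$-tuples $I\ni i$ without repetition and splitting $\sum_{p}c_{i_p} = c_i + \sum_{j\in I,\,j\neq i}c_j$, the closed form
\begin{equation*}
\bigl(\expect_{I\sim P^\mathtt{W}_\mathbf{c}} Q_I^\top w^\mathtt{U}(\mathbf{c},I)\bigr)_i \;=\; \frac{c_i(n-m)+(m-1)}{m(n-1)},
\end{equation*}
using the standard counts $|\{I\ni i\}|=m(n-1)!/(n-m)!$ and $|\{I\ni i,j\}|=m(m-1)(n-2)!/(n-m)!$ for $j\neq i$. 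This expression equals $c_i$ if and only if $c_i=1/n$, so the condition fails as soon as $\mathbf{c}$ is non-uniform (and $m\geq 2$), hence the averaged matrix does not have $\mathbf{c}$ as marginal and is not a transport plan.

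The main obstacle is purely bookkeeping: writing the coordinates of $Q_I^\top w(\mathbf{c},I)$ carefully so that in the $P^\mathtt{W}$ case the seemingly awkward factor $\sum_{p}c_{i_p}$ in $P^\mathtt{W}_\mathbf{c}(I)$ cancels exactly with the denominator of $w^\mathtt{W}$, and recognizing that this cancellation is what breaks when $w^\mathtt{W}$ is replaced by $w^\mathtt{U}$. Everything else reduces to elementary combinatorial counts of ordered tuples without repetition.
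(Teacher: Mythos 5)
Your proof is correct. For the two positive claims your route coincides with the paper's: for $(w^{\mathtt{U}},P^{\mathtt{U}})$ the paper expands the expectation over multiplicity vectors and invokes the multinomial theorem, whereas you simply use that the $m$ indices are i.i.d.\ with law $\mathbf{c}$ and apply linearity of expectation to $|\{k: i_k=i\}|$ — a cleaner derivation of the same identity; for $(w^{\mathtt{W}},P^{\mathtt{W}})$ both arguments rest on the same cancellation of $\sum_p c_{i_p}$ between the weight and the law, followed by the count $\sharp\{I\ni i\}=m(n-1)!/(n-m)!$ (the paper also disposes of the measure-zero tuples with $\sum_{j\in I}c_j=0$, which you may as well mention, though it is harmless). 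Where you genuinely diverge is the negative claim. The paper only bounds $\frac1m\expect_{I}\mathbf{1}_I(i)\le \frac1m$ and asserts that non-uniformity yields an index with $a_i>1/m$ by pigeonhole; that inference is not valid in general (pigeonhole only gives some $a_i>1/n$, and e.g.\ $\mathbf{a}=(0.3,0.3,0.2,0.2)$ with $m=2$ has no coordinate exceeding $1/2$). Your explicit evaluation
$\bigl(\expect_{I\sim P^{\mathtt{W}}_\mathbf{c}} Q_I^\top w^{\mathtt{U}}(\mathbf{c},I)\bigr)_i=\frac{c_i(n-m)+(m-1)}{m(n-1)}$,
obtained from the two tuple counts you cite, is correct and shows equality with $c_i$ holds iff $c_i=1/n$ (for $m\ge2$), so it proves the claim in full generality and in fact repairs the gap in the paper's shortcut.
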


}

\subsubsection{Minibatch subsampling}
In practical settings, since $\overline{h}(\aa,\bb)$ is an expectation over the combinatorial number of all possible pairs of $m$-tuples $I,J$ according to the considered parametric probability law, it is often estimated by drawing only $k$ such pairs of $m$-tuples according to $P$, called  subsample quantity.

\begin{definition}[Minibatch subsampling]\label{def:sub_estimator} Consider the notations from Definition~\ref{def:mbw}. Pick two integers $ k > 0$ and $0 < m \geq n$. Then, we define the incomplete estimator:


\begin{equation}
    \widetilde{h}_{w,P}^k(\mathbf{a}, \mathbf{b}) := \frac{1}{k} \sum_{  (I, J)  \in \mathbb{D}_k  }  h \Big( w(\mathbf{a},I), w(\mathbf{b},J), C_{(I,J)}\Big)
\end{equation}
where $\mathbb{D}_k$ is a set of $k$ pairs of $m$-tuples drawn independently from the joint distribution $P_{\mathbf{a}} \otimes P_{\mathbf{b}}$.
\end{definition}

Incomplete estimators have been widely studied in the U-statistics literature. They can be seen as weighted estimators where the weighted coefficient is equal to 1 if the batch couple has been picked or 0 otherwise. Their variance is always higher than the complete U-statistic (see Theorem 1, section 4.3, \citep{LeeUstats}). It is clear that the incomplete estimator $\widetilde{h}_{w,P}^k(\mathbf{a}, \mathbf{b})$ is closely related to $\overline{h}_{w,P}(\mathbf{a},\mathbf{b})$, it differs with a lack or extra minibatches optimal transport terms. A similar construction holds for minibatch transport plan estimators:

\begin{definition}[Incomplete minibatch transport plan]\label{def:incomplete_avg_OT_plan}
We consider the same definition as above in Definition~\ref{def:sub_estimator} and we define incomplete transport plan estimator. Let two integers $k \geq 1$ and $m \leq n$:



\begin{equation}
    \widetilde{\Pi}^{h,k}_{w,P}(\mathbf{a},\mathbf{b})   := \frac{1}{k} \sum_{  (I, J)  \in \mathbb{D}_k  } \Pi_{I, J},
\end{equation}
where $\Pi_{I, J}$ is the lifted $n \times n$ OT plan between minibatches.
\end{definition}

The next reformulation of the above definitions is useful to prove deviation bounds between the complete and the incomplete estimators. See Lemma \ref{app:inc_U_to_U} and Theorem \ref{thm:dist_marg} below.
\begin{remark}\label{inc_ber} Let $n$, $m \geq n$ and $k$ be positive integers. Let $\big( \mathfrak{b}^{\mathbf{a},\mathbf{b}}_{\ell}(I,J)) \big)_{I,J \in \llbracket n \rrbracket^m, 1 \leq \ell \leq k}$ be a sequence of mutually independent Bernoulli variables of parameter $P_{\mathbf{a}}(I)  P_{\mathbf{b}}(J)$ such that 

\noindent
\begin{align*}
 \mathfrak{b}^{P_\mathbf{a},P_\mathbf{b}}_{\ell}(I,J) =  \begin{cases} 1 \text{ if $(I,J)$ has been selected in the $\ell$-th draw} \\ 0 \text{ otherwise}.  \end{cases}
\end{align*}

\noindent
We then can write 

\noindent
\begin{align*}
     \widetilde{h}_{w,P}^k(\mathbf{a}, \mathbf{b}) & = \frac1k \sum_{\ell = 1}^k \sum_{I,J  \in \llbracket n \rrbracket^m}  \mathfrak{b}^{P_\mathbf{a},P_\mathbf{b}}_{\ell}(I,J) h \Big( w(\mathbf{a},I), w(\mathbf{b},J), C_{(I,J)}\Big)   \\
      \widetilde{\Pi}^{h,k}_{w,P}(\mathbf{a},\mathbf{b})  & = \frac1k\sum_{\ell = 1}^k \sum_{I,J  \in \llbracket n \rrbracket^m}  \mathfrak{b}^{P_\mathbf{a},P_\mathbf{b}}_{\ell}(I,J) \Pi_{I,J}.
\end{align*}
\end{remark}


Note that because incomplete U-statistics are not U-statistics in general, the incomplete minibatch transport plan estimator do not define a transport plan between the full distributions in general, i.e., their marginals are not equal to probability vectors $\mathbf{a}$ and $\mathbf{b}$.
In the following section, we discuss more closely the difference between drawing with or without replacement.

\subsubsection{Drawing data with or without replacement}

Our general flexible formalism allows us to define several minibatch strategies by playing with the probability law on tuples. The laws can be also different between the source and the target distributions. In particular, as given in examples, the cases of drawing with or without replacement. An estimator based on sampling without replacement is the most common practice when we have access to $n$ samples. While this drawing has been investigated for minibatch OT losses, the case with replacement, which appears in the GANs formalism, remains an open question that we aim at answering. The minibatch OT losses represent a weighted sum of Wasserstein distance over batches of size $m$. In the case of sampling without replacement, they are generalized unbiased U-statistics while with a sampling with replacement, we get generalized biased V-statistics. Precisely, they are two sample U-statistics or V-statistics of order $2m$ (see \citep{LeeUstats}) and $h$ is a U-statistic kernel. Interestingly, similar biased and unbiased estimators have been designed to estimate MMD \citep{MMD_Gretton}. 

Finally an important parameter is the value of the minibatch size $m$. In the case of sampling without replacement, we remark that the minibatch procedure allows us to interpolate between OT, when $m=n$ and averaged pairwise distance, when $m=1$. This property is not shared by the sampling with replacement. Indeed when $m=n$, it does not correspond to original OT due to the repetition of data. It only converges to the true OT when $n \rightarrow \infty$. This effect will be illustrated later on the averaged transport plan and on toy examples in the following section.

\subsection{Illustration on simple examples}

To illustrate the effect of the minibatch paradigm on the transport plan and the connections between source and target samples, we compute the minibatch OT plans for several values of $m$ on two simple examples. Furthermore, we also compare the minibatch OT plans of the different $P$ laws we defined in example \ref{def:law_indices_rep} and \ref{def:law_indices_rep2}. For experiments, we define two estimators. $\overline{h}^\mathtt{W}$ (resp. $\overline{\Pi}^h_\mathtt{W}$) with law $P_\mathbf{a}^\mathtt{W}$ and reweighting function $w^\mathtt{W}$ stands for the minibatch Wasserstein loss (resp. minibatch OT plan) over the m-tuples without repetitions. And $\overline{h}^\mathtt{U}$ (resp. $\overline{\Pi}^h_\mathtt{U}$) with law $P_\mathbf{a}^\mathtt{U}$ and reweighting function $w^\mathtt{U}$ stands for the minibatch Wasserstein loss (resp. minibatch OT plan) over the m-tuples $I$.

\paragraph{Distributions in 1D} The 1D case is an interesting problem because we have access to a closed-form of the optimal transport solution which allows us to calculate the closed-form of a minibatch paradigm. Indeed, the solution can be computed with a sorting algorithm which gives an appealing $\mathcal{O}(nlog(n))$ complexity compare to the initial $\mathcal{O}(n^3log(n))$. 

We suppose that we have a probability vector $\uu$ and we recall that $\overline{h}^\mathtt{W}$ corresponds to the minibatch OT losses defined in \citep{fatras2019batchwass}.
We assume (without loss of generality) that the points are ordered in their own distribution. In such a case, we can compute the 1D Wasserstein 1 distance with cost $c(\xx,\yy)=|\xx-\yy|$ as:
$ W(\uu, \uu) = \frac{1}{n} \sum_{i=1}^n \vert \xx_i - \yy_j \vert$ and the OT matrix is simply an identity matrix scaled by $\frac{1}{n}$ (see remark 2.28 \citep{COT_Peyre} for more details).
 After a short combinatorial calculus (given in appendix), the 1D minibatch transport matrix coefficient between sorted samples $(\overline{\Pi}^W_\mathtt{W})_{j,k}$ can be computed as:
\begin{align*}
(\overline{\Pi}^W_\mathtt{W})_{j,k} = \frac{1}{m} \dbinom{n}{m}^{-2} \sum_{i=i_{\text{min}}}^{i_{\text{max}}} \dbinom{j-1}{i-1} \dbinom{k-1}{i-1} \dbinom{n-j}{m-i} \dbinom{n-k}{m-i}
\end{align*}
where $i_{\text{min}} = \text{max}(0, m-n+j, m-n+k)$ and $i_{\text{max}} = \text{min}(j, k)$. $i_{\text{min}}$ and $i_{\text{max}}$ represent the sorting constraints.

We show on the first row of Figure \ref{fig:1D_unif} the minibatch OT plans $\overline{\Pi}^W_\mathtt{W}$ with $n=20$ samples for different values of the minibatch size $m$. On the second row of the figure a plot of the distributions in several rows of $\overline{\Pi}^W_\mathtt{W}$, to illustrate the number of connections. We give the OT plans for entropic and quadratic regularized OT between full distributions for comparison purpose. It is clear from the figure that the OT matrix densifies when $m$ decreases, which is a similar effect to entropic regularization. Note the more localized spread of mass of quadratic regularization that preserves sparsity as discussed in \citep{blondel2018}.

While the entropic regularization spreads the mass in a similar manner for all samples, minibatch OT concentrates the mass at the extremities. Note that the minibatch OT matrices solution is for ordered samples and do not depend on the position of the samples once ordered, as opposed to the regularized OT methods. This will be better illustrated in the next example.

Finally, a close form is also available in the case of drawing with replacement. We provide it in appendix. Unfortunately, its computational complexity makes it hard to use in practice.

\begin{figure*}[t]
    \centering
    \includegraphics[scale=0.4]{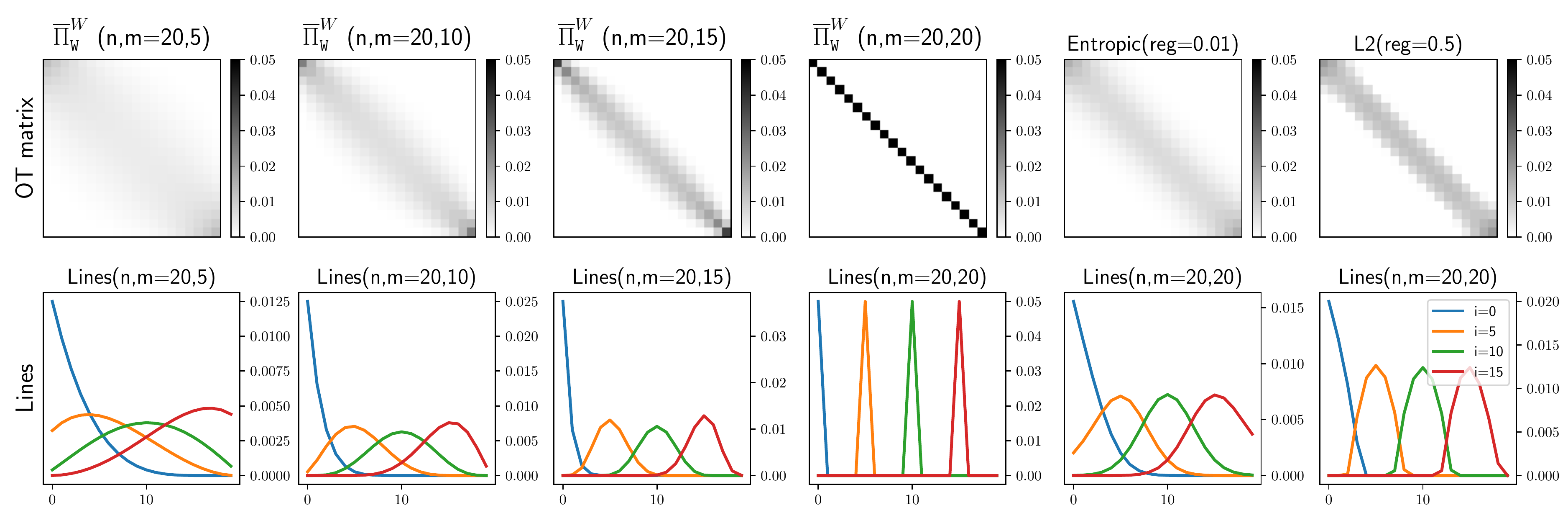}
    \caption{Several OT matrices between distributions with $n=20$ samples in 1D. The first row shows the minibatch OT matrices $\overline{\Pi}^W_\mathtt{W}(\uu, \uu)$ for different values of $m$, the second row provides the shape of the distributions on the rows of $\overline{\Pi}^W_\mathtt{W}(\uu, \uu)$. The two last columns correspond to classical entropic and quadratic regularized OT.}
    \label{fig:1D_unif}
\end{figure*}

\paragraph{Minibatch Wasserstein in 2D} We illustrate several OT matrices between two empirical distributions of 10 2D-samples each in Figure \ref{fig:2D_gauss}. We consider the MBOT transport plan for several batch sizes, the entropic and quadratic regularized OT between full distributions. We use two 2D empirical distributions (point cloud) where the samples have a cluster structure and the samples are sorted \emph{w.r.t.} their cluster. We first discuss the sampling without replacement. We can see from the OT matrices in the first row of the figure that the cluster structure is more or less recovered with the regularization effect of the minibatches (and also regularized OT). On the second row one can see the effect of the geometry of the samples on the spread of mass. Similarly to 1D, for Minibatch OT, samples on the border of the simplex cannot spread as much mass as those in the center and have darker rows. This effect is less visible on regularized OT.

\begin{figure*}[t]
    \centering
    \includegraphics[scale=0.4]{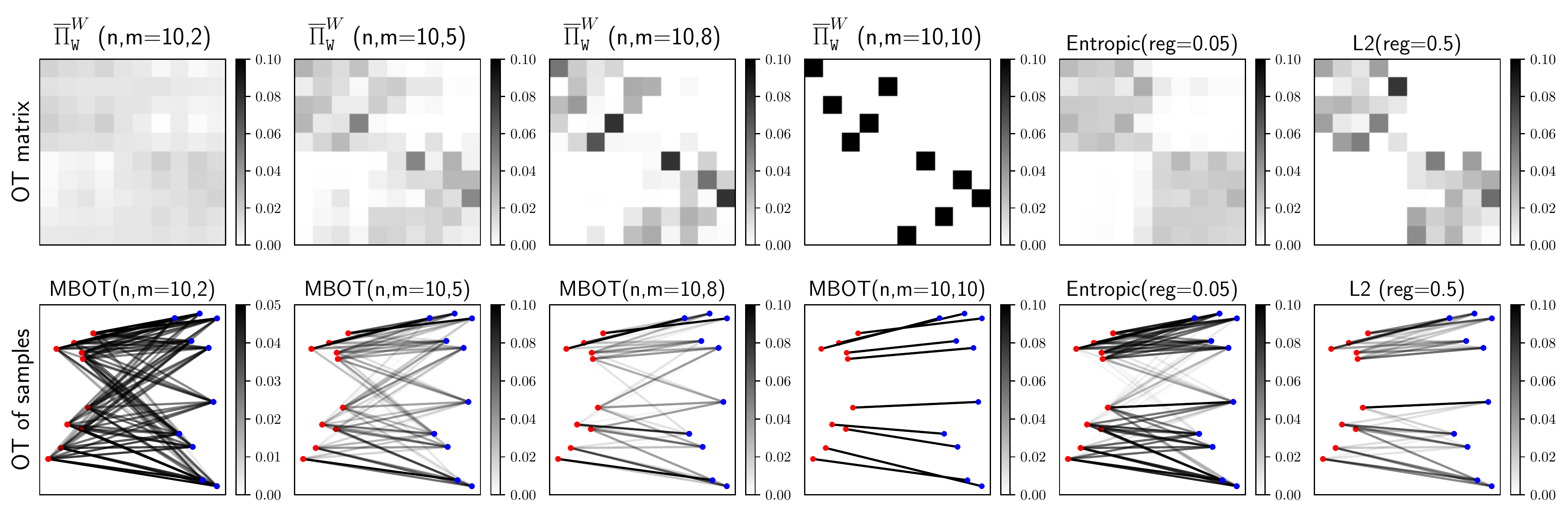}
    \caption{ Several OT matrices between 2D distributions with $n=10$ samples. The first row shows the minibatch OT matrices $\overline{\Pi}^W_\mathtt{W}(\uu, \uu) $ for different values of $m$. The second row provide a 2D visualization of where the mass is transported between the 2D positions of the sample.}
     \label{fig:2D_gauss}
\end{figure*}
 
We also illustrate the difference of transport plans between sampling with or without replacement. We consider the same setting as above but with 5 empirical data. On each column we show the transport plan and the shape of connection between samples. We can see that the estimator $\overline{\Pi}^W_\mathtt{U}(\uu, \uu)$ has always a denser plan, i.e. a bigger number of connections, than the estimator $\overline{\Pi}^W_\mathtt{W}(\uu, \uu)$. In particular, when $m=n=5$, we get the optimal transport plan with $\overline{\Pi}^W_\mathtt{W}(\uu, \uu)$ while we do not recover it with $\overline{\Pi}^W_\mathtt{U}(\uu, \uu)$ due to the fact that samples can be repeated. Now that we have rigorously defined how we can build minibatch Wasserstein losses between empirical measures, we study its loss properties.

\begin{figure*}[t]
    \centering
    \includegraphics[scale=0.4]{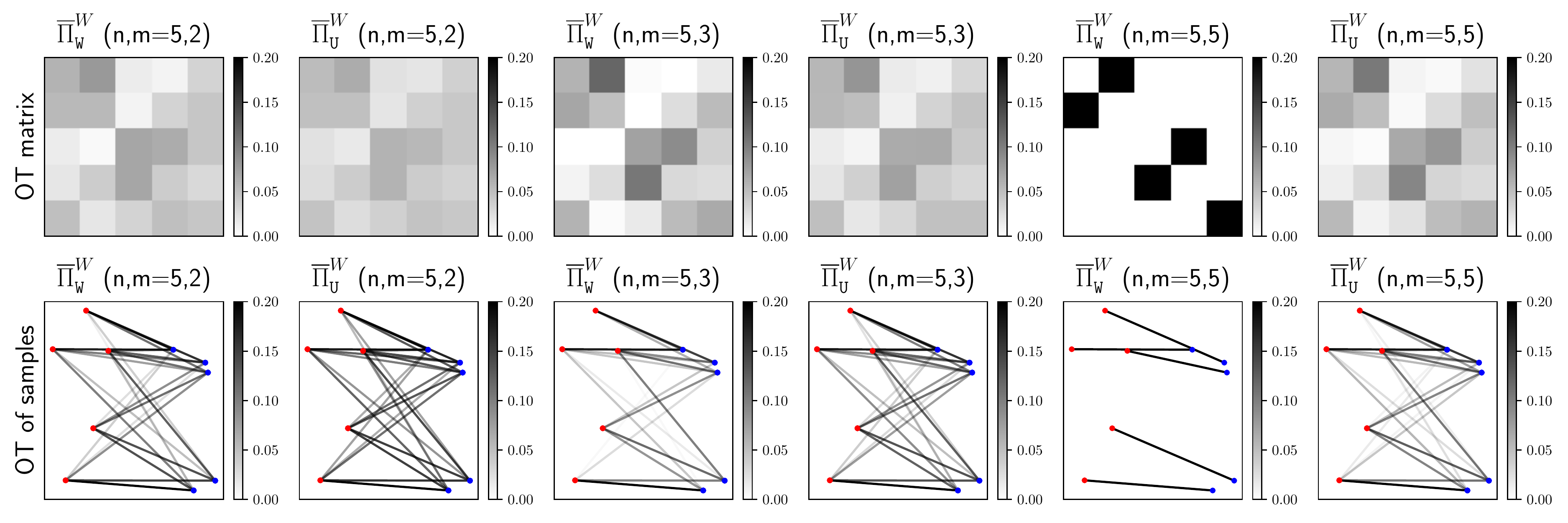}
    \caption{ Difference between transport plan estimators with 2D distributions and $n=5$ samples. Each column gives the OT plan $\overline{\Pi}^W_\mathtt{W}(\uu, \uu)$ or $\overline{\Pi}^W_\mathtt{U}(\uu, \uu)$ (top) and the shape of the distributions on the rows of the OT matrices (bottom). }
    
\end{figure*}

\subsection{Loss properties}\label{subsec:basic_prop}
We now review basic properties for our general minibatch OT losses formalism.

\begin{proposition}[Estimator properties]\label{prop:convexity}
The minibatch OT losses enjoy the following properties:
\begin{itemize}
    \item The losses are not distances
    \item The losses are symmetric
\end{itemize}

\begin{proof}
We give the proof that minibatch OT losses are not distances. Consider a uniform probability vector  and random $3$-data tuple $\XX=(\xx_1, \xx_2, \xx_3 )$ with distinct vectors. As $\overline{h}_{w,P}$ is a weighted sum of positive terms, it is equal to 0 if and only if each of its term is 0. But consider the minibatch term $I_1 = ( i_1, i_2 )$ and $ I_2 = ( i_1, i_3 )$, then obviously $h(w(\uu, I_1),w(\uu, I_2), C(\XX(I_1), \XX(I_2))) \ne 0$ as $\xx_2 \ne \xx_3$, finishing the proof.
\end{proof}
\end{proposition}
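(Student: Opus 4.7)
The plan is to establish both claims directly from the definition of $\overline{h}_{w,P}$ in~\eqref{def:minibatch_wasserstein}, using two elementary properties of every OT kernel $h \in \{W_p, W^\varepsilon, S^\varepsilon, \mathcal{GW}\}$: nonnegativity, and the swap identity $h(\mathbf{a}', \mathbf{b}', C') = h(\mathbf{b}', \mathbf{a}', (C')^\top)$ (with the analogous simultaneous swap of $C^1,C^2$ for $\mathcal{GW}$).

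For the non-distance claim, the strategy is to produce a counterexample showing that the separation axiom fails. I would take $n=3$, $m=2$, the uniform vector $\uu \in \Sigma_3$, and a data tuple $\XX = (\xx_1,\xx_2,\xx_3)$ with pairwise distinct entries. Since $\overline{h}_{w,P}(\uu,\uu)$ is a nonnegative combination of terms $h(w(\uu,I),w(\uu,J),C_{(I,J)})$ weighted by $P_\uu(I)P_\uu(J)$, vanishing of $\overline{h}_{w,P}(\uu,\uu)$ forces every term carrying positive weight to vanish. I would then fix $I=(1,2)$ and $J=(1,3)$, a pair that has positive probability under both $P^\mathtt{U}$ and $P^\mathtt{W}$. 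The extracted cost matrix $C_{(I,J)}$ contains the strictly positive entry $\|\xx_2-\xx_3\|_2^p$ in an off-diagonal position, so for $h \in \{W_p, W^\varepsilon, \mathcal{GW}\}$ the kernel value is strictly positive. For $h=S^\varepsilon$, the two reweighted marginals $w(\uu,I)$ and $w(\uu,J)$ are supported on different index subsets and therefore differ as discrete measures on $\XX$; the divergence property of $S^\varepsilon$ then gives $S^\varepsilon(w(\uu,I),w(\uu,J),C_{(I,J)})>0$. In every case we reach a contradiction with $\overline{h}_{w,P}(\uu,\uu)=0$, so separation fails and $\overline{h}_{w,P}$ is not a distance.

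For symmetry, I would start from
\[
\overline{h}_{w,P}(\mathbf{a},\mathbf{b}) = \sum_{I,J \in \llbracket n \rrbracket^m} P_\mathbf{a}(I)\,P_\mathbf{b}(J)\, h\bigl(w(\mathbf{a},I),\,w(\mathbf{b},J),\,C_{(I,J)}\bigr),
\]
swap the two finite summations and relabel $I \leftrightarrow J$. Using the identity $C_{(I,J)} = (C_{(J,I)})^\top$ (which follows from $C^{n,p}(\XX,\YY)=C^{n,p}(\YY,\XX)^\top$ for the Euclidean ground cost) together with the kernel-level symmetry, the sum rewrites as
\[
\sum_{J,I} P_\mathbf{b}(J)\,P_\mathbf{a}(I)\, h\bigl(w(\mathbf{b},J),\,w(\mathbf{a},I),\,C_{(J,I)}\bigr) = \overline{h}_{w,P}(\mathbf{b},\mathbf{a}).
\]
For $\mathcal{GW}$ the same argument applies, replacing the single cost matrix by the pair $(C^1,C^2)$ which gets swapped simultaneously.

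The main obstacle is the non-distance argument for the Sinkhorn divergence, because $S^\varepsilon$ is \emph{designed} to vanish on coinciding distributions; the claim could superficially look threatened. What saves it is that the two minibatches $I$ and $J$ drawn from $\uu$ induce different reweighted empirical measures on $\XX$ (different supports of size $m<n$), so $S^\varepsilon$'s divergence property gives strict positivity. This is the one step that uses structure beyond nonnegativity, and in a formal write-up I would isolate it as a one-line observation.
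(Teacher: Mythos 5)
Your proof is correct and rests on essentially the same counterexample as the paper: uniform weights on three distinct points with the minibatch pair $I=(1,2)$, $J=(1,3)$, whose kernel term carries positive weight and is strictly positive because $\xx_2 \neq \xx_3$. You are in fact a bit more careful than the paper on two points it leaves implicit: the separate treatment of $h=S^{\varepsilon}$ via its divergence property (the paper's ``obviously $\ne 0$'' glosses over the fact that the two reweighted marginals are both $[\tfrac12,\tfrac12]$ and positivity must come from the differing supports), and the explicit swap-and-relabel argument for symmetry, which the paper only asserts in prose after the proposition.
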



The symmetry of the losses is inherited from the optimal transport problem which is itself symmetric. The loss of the separability distance axiom means that for data fitting problems, the final solution will not match the target distribution. The axiom is recovered for minibatches without replacement when $m = n$ as we recover the original OT formulation. 

We defined minibatch OT losses and reviewed their basic properties. In what follows, we propose an elegant formulation which fixes this loss.

\subsection{Debiasing minibatch Wasserstein losses}\label{sec:DMBW}

As we have shown before, the minibatch OT losses are not distances, for general probability vectors and data n-tuple $\XX$, $\overline{h}_{w,P}(\mathbf{b}, \mathbf{b}) > 0$. This leads to an undesirable situation when one uses it for learning purposes as the final solution is not the target distribution but a shrunk version of it. Hence, we would like to debias the losses to get $\overline{h}_{w,P}(\mathbf{b}, \mathbf{b})=0$. We debias the minibatch OT losses by following the same idea as the Sinkhorn divergence, we remove half of each self term $\overline{h}_{w,P}(\mathbf{a}, \mathbf{a})$ and $\overline{h}_{w,P}(\mathbf{b}, \mathbf{b})$.

\begin{definition}[Debiased Minibatch Wasserstein estimators] Let $C = C^{n,p}$. Consider $1 \leq m \leq n$ be an integer and $h$ be the Wasserstein distance $W$, the entropic loss $W_{\varepsilon}$, the Sinkhorn divergence $S_{\varepsilon}$, or the Gromov-Wasserstein distance $GW$ for some ground cost $c(x,y)$, we define the following quantities:
\begin{equation}
\Lambda_{h,w,P, C(\XX, \YY)}(\mathbf{a},\mathbf{b}) :=  \overline{h}_{w,P, C(\XX, \YY)}(\mathbf{a},\mathbf{b}) - \frac12  \big( \overline{h}_{w,P, C(\XX, \XX)}(\mathbf{a},\mathbf{a}) + \overline{h}_{w,P, C(\YY, \YY)}(\mathbf{b},\mathbf{b}) \big),
\label{def:DebiasedMinibatchEstimators}
\end{equation}

That we note when it is clear of context $\Lambda_{h,w,P}$ and its incomplete counter part:
\begin{equation}
\widetilde{\Lambda}_{h,w,P, C(\XX, \YY)}^k(\mathbf{a},\mathbf{b}) :=  \widetilde{h}_{w,P, C(\XX, \YY)}^k(\mathbf{a}, \mathbf{b}) - \frac12 \big(  \widetilde{h}_{w,P, C(\XX, \XX)}^k(\mathbf{a}, \mathbf{a}) + \widetilde{h}_{w,P, C(\YY, \YY)}^k(\mathbf{b}, \mathbf{b}) \big).
\label{def:DebiasedMinibatchIncompleteGW}
\end{equation}
\end{definition}

\begin{remark} We keep making the slight abuse of notation to consider all OT kernels with $\overline{h}$, but we explicit the loss $\Lambda$ for a Gromov-Wasserstein loss. We note the ground cost $C^{n,p, 1}$ and $C^{n,p, 2}$ as $C^1$ and $C^2$ for sake of readability. With the $\GW$ kernel, the loss $\Lambda$ is equal to:
\begin{align}
&\Lambda_{h,w,P, C^1(\XX, \XX), C^2(\YY, \YY)}(\mathbf{a},\mathbf{b}):=\overline{h}_{w,P, C^1(\XX, \XX), C^2(\YY, \YY)}(\mathbf{a},\mathbf{b})  \nonumber\\
&-\frac12  \big( \overline{h}_{w,P, C^1(\XX, \XX), C^1(\XX, \XX)}(\mathbf{a},\mathbf{a}) + \overline{h}_{w,P, C^2(\YY, \YY), C^2(\YY, \YY)}(\mathbf{b},\mathbf{b}) \big),
\end{align}
\end{remark}

It is straight forward to see that $\Lambda_{h,w,P}(\mathbf{b}, \mathbf{b}) = 0$. A similar loss has been proposed in \citep{salimans2018improving} as a generalized energy distance using the entropic Wasserstein distance as metric. While their loss debiased the minibatch bias, it still had a bias from the entropic regularization. They then relied on the energy distance properties to argue positiveness. The downside of this loss is that it needs to rely on a metric to be positive, however the entropic regularized Optimal Transport is not a metric between probability distributions as $W^\varepsilon(\mathbf{a}, \mathbf{a}) \neq 0$.

We bring insights to this debiased loss and compare its differences to the minibatch OT losses both mathematically and empirically. We use our loss $\Lambda_{h,w,P}$ with the Wasserstein distance and Sinkhorn divergence because they respect the distance separability axiom. Unfortunately, we prove that even if we consider the Wasserstein distance, this loss is not positive and we will give counter examples.

\paragraph{Positivity}

The loss function $\Lambda_{h,w,P}$ is composed of three terms of the form of $\overline{h}_{w,P}$, then it is possible to estimate it with the different estimators $\overline{h}^\mathtt{W}$ and $\overline{h}^\mathtt{U}$ we defined in section \ref{sec:definitions}. When estimated with $\overline{h}^\mathtt{W}$ (resp $\overline{h}^\mathtt{U}$), we denote $\Lambda_{\overline{h}^\mathtt{W}}$ (resp. $\Lambda_{\overline{h}^\mathtt{U}}$). Let us consider 8 points on the unit circle equally distributed. Then let us add a perturbation as a rotation to each point position, where the rotation vary from 0 to $\pi$. The perturbed distribution becomes our target distribution. When computing the quantity $\Lambda_{W_p}(\uu,\uu)$, with $p \geq 2$ and an euclidean ground cost, it can return a negative value. We give the variations of the debiased minibatch OT losses in function of the pertubartion in figure \ref{fig:comparison_debias_estimator} for both the estimators $\overline{h}^\mathtt{W}$ and $\overline{h}^\mathtt{U}$. 
\begin{figure}[t]
    \centering
    \includegraphics[scale=0.425]{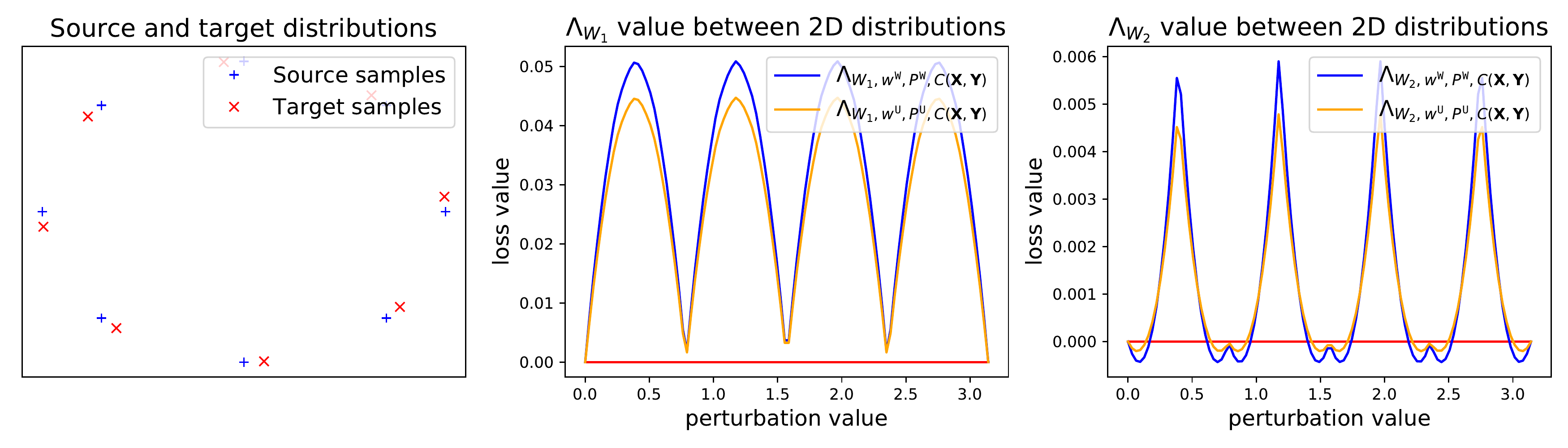}
    \caption{Positivity counter example. (Left) source and target distribution for a given perturbation. (Middle and right) Comparison of different estimator values for $\Lambda_{W_1}$ and $\Lambda_{W_2}$ with an euclidean ground cost between the distributions. The red line is the y-axis equal to 0.}
    \label{fig:comparison_debias_estimator}
\end{figure}
\newline

The loss function might not be always positive for particular case but in practice, we always had a positive loss and it performed better than the biased minibatch OT losses. Furthermore, while we have been able to find counter examples for $p\geq 2$, we have not found any counter example for $W_1$. Hence $\Lambda_{W_1}$ might be a positive loss function, and the proof is left as future work.

We have defined a loss which is based on the minibatch Wasserstein distance and which respect the separability axiom of distance. These desirable properties come with downsides as the loss function might not be always positive in practice. We now take a statistical point of view and we will carry concentration inequalities and optimization properties.

\section{Learning with minibatch OT: statistical and optimization properties}\label{sec:learning}

In this section, we aim at developing quantitative statistical and optimization results and we start with statistical bounds.
\subsection{Concentration bound}

In the case of sampling without replacement bounded and uniform measures, we were able to show concentration bounds between our estimator and its expectation (Theorem 1, \cite{fatras2019batchwass}). We first do a parallel between the losses defined in our previous paper and the losses $\overline{h}_{w,P}$.

\begin{remark}
As proven in the case of the reweighting function $w^\mathtt{U}$, the probability distributions over $m$-tuples without replacement $P^\mathtt{W}$ and the uniform probability vectors $\mathbf{a}$ and $\mathbf{b}$, our estimator corresponds to the discrete-discrete loss of our previous paper \citep{fatras2019batchwass}. Furthermore, taking its expectation over minibatches gives the continuous-continuous loss.
\end{remark}

We are now interested to find similar and more general results for the asymptotic behavior of our estimator $\widetilde{h}_{w,P}^k(\mathbf{a}, \mathbf{b})$ and its deviation to its expectation $\mathbb{E}\widetilde{h}_{w,P}^k(\mathbf{a}, \mathbf{b})$. We will give a bound for several scenarios. For bounded measures, we will prove that we have a Hoeffding inequality such as in \citep{fatras2019batchwass}. Then we relax the boundness condition to give a concentration bound for subgaussian measures.

In this context, the probability vectors $(\aa^{(n)})$ and $(\bb^{(n)})$ are sequences which depend on the number of data $n$. More precisely $(\aa^{(n)})$ and $(\bb^{(n)})$ are sequences of vectors such that for each $n \in \N$, $\aa^{(n)}, \bb^{(n)} \in \Sigma_n$, we denote the space of these sequences as $(\aa^{(n)}), (\bb^{(n)}) \in \Sigma$. The sequences of probability vectors $(\aa^{(n)})$ and $(\bb^{(n)})$ can not be taken arbitrarily if we want to guarantee convergence. Hence we rely on local constraints that the probability vectors $\aa$ and $\bb$ must verify. 

\begin{definition}[Local averages conditions]\label{def:loc_cond}  Let $(\aa^{(n)}) \in \Sigma$ and two integers $n,m \in \mathbb{N}^*$ such as $n \geq m$. We say that $(\aa^{(n)})$ verifies the \textit{local mean condition} if there exists a constant $D>0$ and $\gamma \in (0,1]$ such that for any $n \in \mathbb{N}^*$ and $I \subset \llbracket n\rrbracket $ with $|I| = m$ we have:
\begin{equation}\label{DEF : loc_mean}
    \frac{1}{m} \sum_{i \in I} \aa^{(n)}_i \leq \frac{D}{n^{\gamma}}. 
\end{equation}
We write that $(\aa^{(n)})$ satisfies $\mathtt{Loc_A}(m,\gamma,D))$ (or $\mathtt{Loc_A}(m,\gamma)$) when the constant $D$ is implicit).\\
\textup{(ii)} Analogously, $(\aa^{(n)})$ is said to verify the local geometric mean condition if there exists a constant $D>0$ and $\gamma > 0$ such that for any $n \in \mathbb{N}^*$ and $I  \in  \llbracket n\rrbracket^m $ we have
\begin{equation}\label{DEF : loc_prod}
   \Big( \Pi_{i \in I} \aa^{(n)}_i \Big)    ^{\frac{1}{m}} \leq \frac{D}{n^{\gamma}}. 
\end{equation}

\noindent 
We write that $(\aa^{(n)})$ verifies $\mathtt{Loc_G}(m,\gamma,D))$ (or $\mathtt{Loc_G}(m,\gamma)$) when the constant $D$ is implicit).
\end{definition}
A straight forward example is the uniform vector $\uu^{(n)}$ which respects $\mathtt{Loc_A}(m,1,1)$ local mean condition and $\mathtt{Loc_G}(m,1,1)$ for the local product condition. Thus Eq.\eqref{DEF : loc_mean} naturally extends and quantifies the fact that a sequence has uniformly controlled $m$-averages. We also observe that for any generic sequence $(\aa^{(n)})$ in $\Sigma$ verifies $\mathtt{Loc_A}(m,0, \frac{1}{m})$. Regarding the local product condition, Eq.\eqref{DEF : loc_prod} extends the fact that a sequence has uniformly controlled $m$-products. We illustrate the local constraints on the simplex in figure \ref{fig:local_constraints} with python ternary \citep{pythonternary}. We have the following result about the local constraints:

\begin{lemma}\label{lemma:local_constraints} Let $m \in \N^*$, $\gamma >0$ and $D > 0$. Let $(\aa^{(n)}) \in \Sigma$ be a sequence of probability vectors. The following statements hold:\\
\textup{(i)} If $(\aa^{(n)})$ verifies $\mathtt{Loc_A}(m,\gamma,D)$ or $\mathtt{Loc_G}(m,\gamma,D)$ then $\gamma \leq 1$.\\
\textup{(ii)} If $(\aa^{(n)})$ is $\mathtt{Loc_A}(m,\gamma,D)$ then $(\aa^{(n)})$ is $\mathtt{Loc_G}(m,\gamma,D)$.
\end{lemma}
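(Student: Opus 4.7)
My plan is to treat the two claims separately. Claim (i) will follow from evaluating each local condition on a cleverly chosen subset or tuple, while (ii) will follow from AM-GM together with a short reduction to handle tuples with repeated indices.

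For (i), in the $\mathtt{Loc_A}$ case I would exploit the identity $\sum_{i=1}^n \aa^{(n)}_i = 1$: applying $\mathtt{Loc_A}$ to the $m$-subset $I \subset \llbracket n \rrbracket$ made of the $m$ largest entries of $\aa^{(n)}$, the average of those entries is at least the overall average $1/n$ (otherwise the remaining $n-m$ entries, being even smaller, could not close the sum to $1$). The condition then forces $1/n \leq D/n^\gamma$ for every $n \geq m$, so $n^{\gamma-1} \leq D$, and letting $n \to \infty$ yields $\gamma \leq 1$. In the $\mathtt{Loc_G}$ case I would instead test the condition on the constant tuple $I^\star = (i^\star, \dots, i^\star)$, where $i^\star \in \operatorname{argmax}_i \aa^{(n)}_i$; then $\bigl(\Pi_{k=1}^m \aa^{(n)}_{i^\star}\bigr)^{1/m} = \aa^{(n)}_{i^\star} \geq 1/n$ by pigeonhole, so again $1/n \leq D/n^\gamma$ and $\gamma \leq 1$.

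For (ii), the main tool will be AM-GM applied to the $m$ numbers of the tuple: for any $I = (i_1, \dots, i_m) \in \llbracket n \rrbracket^m$,
\[
\Bigl(\Pi_{k=1}^m \aa^{(n)}_{i_k}\Bigr)^{1/m} \leq \frac{1}{m}\sum_{k=1}^m \aa^{(n)}_{i_k}.
\]
When the entries of $I$ are pairwise distinct, $\{i_1,\dots,i_m\}$ is an $m$-subset and $\mathtt{Loc_A}$ bounds the right-hand side directly by $D/n^\gamma$. When $I$ has repeated entries with underlying support $J$ of size $s < m$ and multiplicities $r_j \geq 1$ summing to $m$, I would rewrite the right-hand side as $\frac{1}{m}\sum_{j \in J} r_j \aa^{(n)}_j$, bound it by $\max_i \aa^{(n)}_i$, and then control $\max_i \aa^{(n)}_i$ by applying $\mathtt{Loc_A}$ to any $m$-subset containing $\operatorname{argmax}_i \aa^{(n)}_i$ (such a subset exists because $n \geq m$).

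The main subtlety will lie in the repetition case of (ii): $\mathtt{Loc_A}$ is formulated only on subsets of pairwise distinct indices, whereas $\mathtt{Loc_G}$ quantifies over all $m$-tuples, so the reduction through the maximum picks up a multiplicative factor of at most $m$. Absorbing this factor into the implicit constant is consistent with the convention stated in Definition \ref{def:loc_cond} that $D$ may be left implicit in the notation $\mathtt{Loc_G}(m,\gamma)$; alternatively, one may refine the bound by averaging over all $m$-subsets extending the support $J$, which gives a sharper control on $\max_i \aa^{(n)}_i$.
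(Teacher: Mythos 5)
Your proof is correct and follows essentially the same strategy as the paper: for (i), evaluate the local condition on a well-chosen tuple and play it against $\sum_i a^{(n)}_i = 1$ (the paper uses the constant tuples $I=(\ell,\dots,\ell)$ and sums over $\ell$; your top-$m$-subset and argmax arguments are equivalent), and for (ii), apply the AM--GM inequality. One remark on your ``main subtlety'': in the version of Definition~\ref{def:loc_cond} actually used in the appendix, $\mathtt{Loc_A}$ is quantified over \emph{all} $m$-tuples $I\in\llbracket n\rrbracket^m$, not only over subsets of pairwise distinct indices, so AM--GM yields (ii) with the same constant $D$ directly and your detour through $\max_i a^{(n)}_i$ is unnecessary; under the subset-only reading your argument only gives $\mathtt{Loc_G}(m,\gamma,mD)$, which falls slightly short of the statement as written (same $D$), so it is worth being explicit about which convention you adopt.
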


\begin{figure}
    \centering
    \includegraphics[scale=0.6]{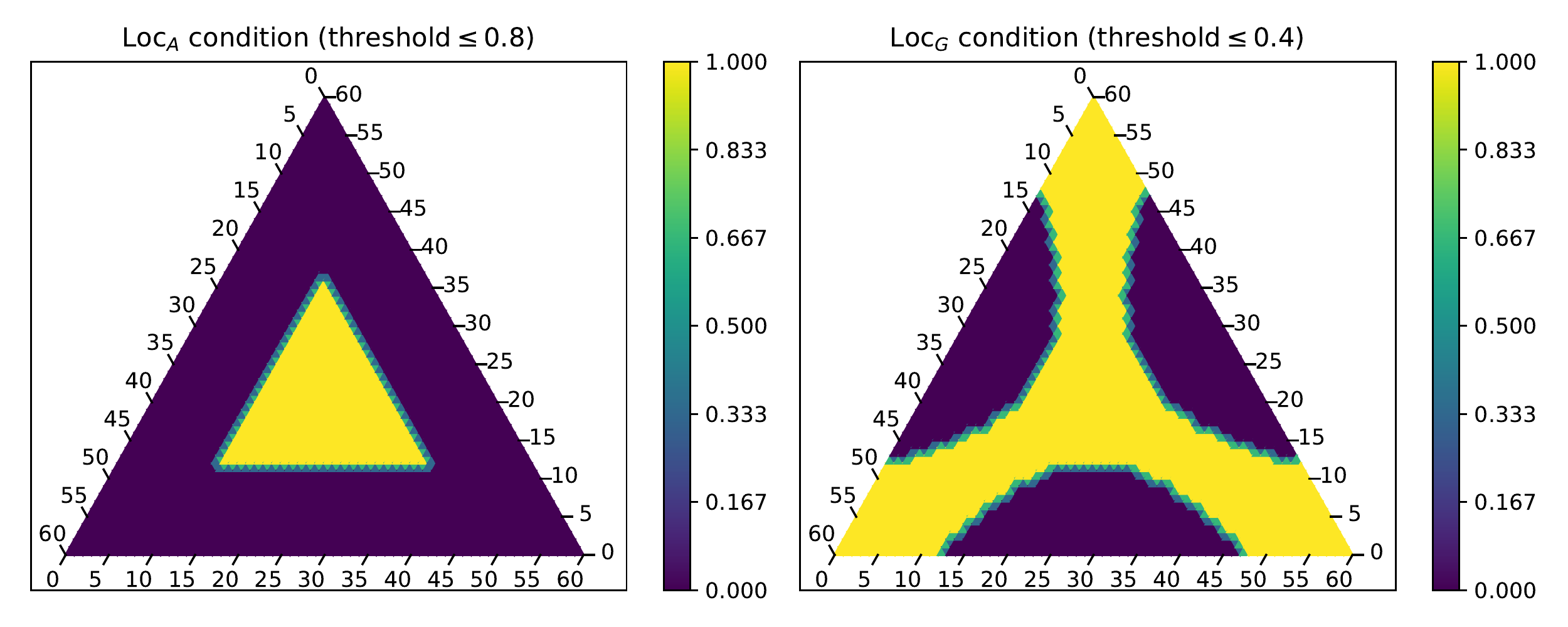}
    \caption{$\mathtt{Loc_A}$ and $\mathtt{Loc_G}$ local constraints illustrations on the simplex with $m=2$ and $n=3$.}
    \label{fig:local_constraints}
\end{figure}

\paragraph{Bounded data.}
For bounded data, we show that in order to obtain reasonable convergence properties of the estimators $\overline{h}_{w,P}(\aa^{(n)}, \bb^{(n)})$ we need to ensure that the sequences $(\aa^{(n)})$ and $(\bb^{(n)})$ verify the \textit{local condition} with enough decay, e.g. $(\aa^{(n)}), (\bb^{(n)})$ are $\mathtt{Loc_A}(m,\gamma)$ or $\mathtt{Loc_G}(m,\gamma)$ for a $\gamma$ sufficiently close to $1$.

\begin{theorem}[Maximal deviation bound for compactly supported distributions]\label{thm:inc_U_to_mean} Let $\delta \in (0,1) $, $k \geqslant 1$ an integer and $m \geqslant 1$ be a fixed integer. Let $C=C^{n,p}$ be as in \eqref{DEF : mb_matrix_map}. Consider two distributions $\alpha,\beta$, two n-tuples of empirical data $\XX \sim \alpha^{\otimes n}, \YY \sim \beta^{\otimes n}$ and a kernel $h \in \{W_p, W_p^p, W_\epsilon, S_\epsilon, \GW$\}. Let the reweighting function $w$ and the probability law over $m$-tuple $P$ be as in examples \ref{def:reweight_function_U}, \ref{def:reweight_function_N}, \ref{def:law_indices_rep}, \ref{def:law_indices_rep2}. Let the sequences of probability vectors $(\aa^{(n)}) \in \Sigma$ and $(\bb^{(n)}) \in \Sigma$ satisfy $\mathtt{Loc_A}(m,\gamma,D)$ and let $D>0$ and $\gamma \in (\frac34 ,1]$. We have a deviation bound for the sampling without replacement between $\widetilde{h}_{w,P}^k(\aa^{(n)}, \bb^{(n)})$ and $\E \overline{h}_{w,P}(\aa^{(n)}, \bb^{(n)})$ depending on the number of empirical data $n$ and the number of batches $k$:

\begin{equation}
\P \left( \vert \widetilde{h}_{w^\mathtt{W},P^\mathtt{W}}^k(\aa^{(n)}, \bb^{(n)}) - \E \overline{h}_{w^\mathtt{W},P^\mathtt{W}}(\aa^{(n)}, \bb^{(n)})\vert \geq M \Big(2\frac{ D^2 m^{ \frac12 }}{n^{2( \gamma - \frac{3}{4}) }} \sqrt{ 2 \log(\frac{2}{\delta})} + \sqrt{\frac{2 \log(\frac{2}{\delta})}{k}} \Big) \right) \leq \delta,
\label{eq:hoeffding_bounded_case_without_rep}
\end{equation}

where $M$ is a constant depending on the diameters of distribution supports. And for the sampling with replacement, let the sequences of probability vectors $(\aa^{(n)}), (\bb^{(n)})$ verify $\mathtt{Loc_G}(m,\gamma,D)$ for some $\gamma \in (1-\frac{1}{4m}, 1]$ and $D>0$, we have:
\begin{equation}
\P \left( \vert \widetilde{h}_{w^\mathtt{U},P^\mathtt{U}}^k(\aa^{(n)}, \bb^{(n)}) - \E \overline{h}_{w^\mathtt{U},P^\mathtt{U}}(\aa^{(n)}, \bb^{(n)})\vert \geq M \Big(  2\frac{ D^{2m} m^{ \frac12 }}{n^{2m(\frac{1}{4m} - 1 + \gamma)}} \sqrt{ 2 \log(\frac{2}{\delta})} + \sqrt{\frac{2 \log(\frac{2}{\delta})}{k}} \Big) \right) \leq \delta.
\label{eq:hoeffding_bounded_case_with_rep}
\end{equation}
\end{theorem}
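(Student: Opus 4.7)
}

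The plan is to split the deviation into a ``batch-sampling'' error and a ``data-sampling'' error via the triangle inequality
\begin{equation*}
    \bigl|\widetilde{h}_{w,P}^k(\aa^{(n)},\bb^{(n)}) - \E\,\overline{h}_{w,P}(\aa^{(n)},\bb^{(n)})\bigr|
    \;\leq\;
    \underbrace{\bigl|\widetilde{h}_{w,P}^k - \overline{h}_{w,P}\bigr|}_{\text{(A): batch randomness}}
    \;+\;
    \underbrace{\bigl|\overline{h}_{w,P} - \E\,\overline{h}_{w,P}\bigr|}_{\text{(B): data randomness}},
\end{equation*}
and then control each term by an exponential concentration inequality at level $\delta/2$, finishing with a union bound.

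First I would dispatch (A). Conditionally on $(\XX,\YY)$, the incomplete estimator $\widetilde{h}_{w,P}^k$ is the empirical mean of $k$ i.i.d.\ draws of the bounded random variable $h\bigl(w(\aa,I),w(\bb,J),C_{(I,J)}\bigr)$ with $(I,J)\sim P_{\aa}\otimes P_{\bb}$, whose conditional mean is exactly $\overline{h}_{w,P}$. Because $\alpha,\beta$ are compactly supported, one has a uniform deterministic bound $0 \leq h \leq M$ (with $M$ depending on the diameters of the supports, through $C^{n,p}$ and, in the Gromov case, through both cost matrices). A standard Hoeffding bound then gives $|\widetilde{h}_{w,P}^k - \overline{h}_{w,P}| \leq M\sqrt{2\log(2/\delta)/k}$ with probability at least $1-\delta/2$, yielding the second summand in \eqref{eq:hoeffding_bounded_case_without_rep}--\eqref{eq:hoeffding_bounded_case_with_rep}.

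Next I would attack (B) via McDiarmid's bounded-difference inequality applied to the function $F(\XX,\YY) := \overline{h}_{w,P}(\aa^{(n)},\bb^{(n)})$ of the $2n$ independent samples $(\xx_1,\dots,\xx_n,\yy_1,\dots,\yy_n)$. Replacing $\xx_i$ by $\xx_i'$ modifies only those terms of the defining sum \eqref{def:minibatch_wasserstein} for which $I\ni i$, and each such term varies by at most $2M$, so the bounded difference satisfies $c_i \leq 2M\sum_{I\ni i} P_{\aa}(I)$ (and symmetrically for $\yy_j$). The heart of the proof is to bound $\sum_{I\ni i} P_{\aa}(I)$ using the local assumption on $(\aa^{(n)})$:
\begin{itemize}
\item In the without-replacement case, $P_{\aa}^{\mathtt{W}}(I) = \tfrac{1}{m}\tfrac{(n-m)!}{(n-1)!}\sum_{j\in I}a_j$, so combining the $\mathtt{Loc_A}(m,\gamma,D)$ bound $\sum_{j\in I}a_j \leq mD/n^{\gamma}$ with the count of $m$-tuples containing a fixed index produces $\sum_{I\ni i}P^\mathtt{W}_{\aa}(I) \lesssim D/n^{\gamma}$ up to factors of $m$.
\item In the with-replacement case, $P_{\aa}^{\mathtt{U}}(I) = \prod_{j\in I} a_j$, and $\mathtt{Loc_G}(m,\gamma,D)$ yields $\prod_{j\in I}a_j\leq D^m/n^{m\gamma}$ for every $I$; multiplying by the cardinality of $\{I\in\llbracket n\rrbracket^m: I\ni i\}\asymp m n^{m-1}$ gives the analogous bound.
\end{itemize}
Substituting the resulting $\sum_i c_i^2$ into $\P(|F-\E F|\geq t)\leq 2\exp(-2t^2/\sum_i c_i^2)$ and solving for $t$ at confidence $\delta/2$ produces precisely the first summand in \eqref{eq:hoeffding_bounded_case_without_rep} and \eqref{eq:hoeffding_bounded_case_with_rep}, respectively; the exponents in $n$ (namely $2(\gamma-3/4)$ and $2m(\tfrac{1}{4m}-1+\gamma)$) are exactly what the local conditions force once the combinatorial counts are injected.

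The main obstacle will be obtaining the bounded differences with the sharp dependence stated in the theorem: one must treat $w^\mathtt{W}$ and $w^\mathtt{U}$ uniformly (the normalized reweighting introduces an extra denominator $\sum_{j\in I}a_j$ whose lower bound has to be reconciled with the local conditions), handle the degenerate case $\sum_{j\in I}a_j=0$ via the convention recalled in Example~\ref{def:reweight_function_N}, and in the Gromov case account for the quartic dependence of $\GW$ on both cost matrices so that $M$ remains finite. The restrictions $\gamma\in(3/4,1]$ and $\gamma\in(1-\tfrac{1}{4m},1]$ respectively appear precisely so that the exponents of $n$ in the deviation term remain negative, ensuring convergence as $n\to\infty$.
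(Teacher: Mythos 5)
Your decomposition into (A) batch-sampling error and (B) data-sampling error, and your treatment of (A) by conditional Hoeffding given $(\XX,\YY)$, coincide exactly with the paper (Lemma~\ref{app:inc_U_to_U} and the triangle-inequality assembly in Appendix~\ref{app_subsubsec:inc_U_to_mean_proof}). For (B), however, you take a genuinely different route. The paper follows Hoeffding's classical two-sample U-statistic argument: it averages over permutations $\sigma_x,\sigma_y$ to write $\overline{h}_{w,P}-\E\overline{h}_{w,P}$ as a convex combination of sums of $r=\lfloor n/m\rfloor$ \emph{independent} centered blocks $T_k$, bounds each block via the local conditions, and controls the moment generating function with Jensen plus Hoeffding's lemma. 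You instead apply McDiarmid's bounded-difference inequality directly to $F(\XX,\YY)=\overline{h}_{w,P}$, with the local conditions entering through the bound on $\sum_{I\ni i}P_{\aa}(I)$. Both are standard and both work here; your route is more elementary and avoids the permutation-averaging bookkeeping, while the paper's route is the one that generalizes to their subgaussian truncation argument in Appendix~\ref{app_sec:subgauss_concentration_bounded}.

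One correction: your claim that McDiarmid ``produces precisely the first summand'' is not accurate. Carrying out your computation in the without-replacement case gives $c_i\lesssim M m D n^{-\gamma}$ for each of the $2n$ coordinates, hence $\sum_i c_i^2\lesssim M^2m^2D^2n^{1-2\gamma}$ and a deviation of order $MmD\,n^{\frac12-\gamma}\sqrt{\log(2/\delta)}$, whereas the stated bound is $MD^2m^{\frac12}n^{\frac32-2\gamma}\sqrt{\log(2/\delta)}$. Since $\frac12-\gamma\le\frac32-2\gamma$ for $\gamma\le 1$, your exponent in $n$ is at least as good, but the prefactor $mD$ is not dominated by $m^{\frac12}D^2$ in general, so you prove a cousin of \eqref{eq:hoeffding_bounded_case_without_rep} rather than the inequality verbatim; the same remark applies with replacement, where the sharper count $\sum_{I\ni i}P^{\mathtt{U}}_{\aa}(I)=1-(1-a_i)^m\le m a_i$ is preferable to multiplying the worst-case product by the cardinality $\asymp mn^{m-1}$. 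You should either restate the theorem with your own constants or add the (easy, but necessary) argument that your bound implies the stated one for the admissible range of $\gamma$ and $n$ large. Also note that running both tails at level $\delta/2$ with a union bound, as you propose, turns the $\log(2/\delta)$ factors into $\log(4/\delta)$; the paper glosses over this as well.
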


\begin{remark}

In the case of uniform measures, we recover the sampling without replacement bounds of \citep{fatras2019batchwass} for both sampling with or without replacement:
\begin{align}
&\P \left( \vert \widetilde{h}_{w^\mathtt{W},P^\mathtt{W}}^k(\uu^{(n)}, \uu^{(n)}) - \E \overline{h}_{w^\mathtt{W},P^\mathtt{W}}(\uu^{(n)}, \uu^{(n)})\vert \geq M \Big( 2\sqrt{ 2 \frac{m}{n}\log(2/\delta)} + \sqrt{\frac{2 \log(2/\delta)}{k}} \Big) \right) \leq \delta, \nonumber\\
&\P \left( \vert \widetilde{h}_{w^\mathtt{U},P^\mathtt{U}}^k(\uu^{(n)}, \uu^{(n)}) - \E \overline{h}_{w^\mathtt{U},P^\mathtt{U}}(\uu^{(n)}, \uu^{(n)})\vert \geq M \Big( 2 \sqrt{2 \frac{m}{n}\log(2/\delta)} + \sqrt{\frac{2 \log(2/\delta)}{k}} \Big) \right) \leq \delta. \nonumber
\label{eq:hoeffding_unif_bounded}
\end{align}

\end{remark}

The proof is based on the U-statistics concentration bound proof \citep{Hoeffding1963} and can be found in appendix \ref{app_sec:concentration_bounded} with the proof of constant $M$. The proof idea is to rewrite our minibatch OT losses as a sum of independent terms and then to apply Hoeffding's lemma to the rewritten sum. The local constraints were necessary for a generalization of the concentration bounds to non uniform probability vectors $\aa^{(n)}$ and $\bb^{(n)}$.
These concentration bounds are also valid for our debiased minibatch OT loss as it is composed of three terms of the form $\overline{h}$. Furthermore, it is possible to extend this concentration inequality with an expectation over the batch couples and empirical data.

\begin{corollary}
    With the same hypothesis and notations as in Theorem \ref{thm:inc_U_to_mean}. The following inequality holds:
    \begin{align}
      &\mathbb{E}[ \vert   \widetilde{h}_{w^\mathtt{W},P^\mathtt{W}}^k(\aa^{(n)}, \bb^{(n)}) - \E \overline{h}_{w^\mathtt{W},P^\mathtt{W}}(\aa^{(n)}, \bb^{(n)}) \vert  ] \leqslant 20 \cdot M \max \Big(2\sqrt{2} D^2 \frac{m^{ \frac12 }}{n^{2( \gamma - \frac{3}{4}) }}, \sqrt{\frac{2}{k}} \Big),\\
      &\mathbb{E}[ \vert   \widetilde{h}_{w^\mathtt{U},P^\mathtt{U}}^k(\aa^{(n)}, \bb^{(n)}) - \E \overline{h}_{w^\mathtt{U},P^\mathtt{U}}(\aa^{(n)}, \bb^{(n)}) \vert  ] \leqslant 20 \cdot M \max \Big(2\sqrt{2} D^{2m} \frac{m^{ \frac12 }}{n^{1/2 - 2m + 2m\gamma}}, \sqrt{\frac{2}{k}} \Big).
    \label{thm:expectation_concentration_inequality}
    \end{align}
    And for our debiased minibatch OT loss:
    \begin{align}
      &\mathbb{E}[\vert \widetilde{\Lambda}_{h,w^\mathtt{W},P^\mathtt{W}}^k(\mathbf{a}^{(n)}, \mathbf{b}^{(n)}) - \mathbb{E}\widetilde{\Lambda}_{h,w^\mathtt{W},P^\mathtt{W}}^k(\mathbf{a}^{(n)}, \mathbf{b}^{(n)}) \vert] \leqslant 40 \cdot M \max\left(2\sqrt{2} D^2 \frac{m^{ \frac12 }}{n^{2( \gamma - \frac{3}{4}) }}, \sqrt{\frac{2}{k}}\right),\\
      &\mathbb{E}[\vert \widetilde{\Lambda}_{h,w^\mathtt{U},P^\mathtt{U}}^k(\mathbf{a}^{(n)}, \mathbf{b}^{(n)}) - \mathbb{E}\widetilde{\Lambda}_{h,w^\mathtt{U},P^\mathtt{U}}^k(\mathbf{a}^{(n)}, \mathbf{b}^{(n)}) \vert] \leqslant 40 \cdot M \max\left(2\sqrt{2} D^{2m} \frac{m^{ \frac12 }}{n^{1/2 - 2m + 2m\gamma}}, \sqrt{\frac{2}{k}}\right).
    \end{align}
\end{corollary}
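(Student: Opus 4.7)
The corollary is a routine consequence of Theorem~\ref{thm:inc_U_to_mean}: the two tail bounds \eqref{eq:hoeffding_bounded_case_without_rep} and \eqref{eq:hoeffding_bounded_case_with_rep} are both of the sub-Gaussian form $\P(|X|\geq c\sqrt{\log(2/\delta)})\leq \delta$ with the constant $c$ depending on the sampling scheme. First, I would unify notation by setting, in the sampling-without-replacement case, $a := 2\sqrt{2}\,M D^2 m^{1/2}/n^{2(\gamma-3/4)}$ and $b := M\sqrt{2/k}$, so that the inequality of Theorem~\ref{thm:inc_U_to_mean} can be rewritten (after the change of variable $s = (a+b)\sqrt{\log(2/\delta)}$) as $\P(|X|\geq s)\leq 2\exp(-s^2/(a+b)^2)$ for all $s \geq (a+b)\sqrt{\log 2}$, where $X := \widetilde{h}^{k}_{w^{\mathtt W},P^{\mathtt W}}(\aa^{(n)},\bb^{(n)}) - \E\overline{h}_{w^{\mathtt W},P^{\mathtt W}}(\aa^{(n)},\bb^{(n)})$. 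The sampling-with-replacement case is identical after redefining $a := 2\sqrt{2}\,M D^{2m} m^{1/2}/n^{1/2-2m+2m\gamma}$.

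Next, I would apply the layer-cake identity $\E|X| = \int_0^\infty \P(|X|\geq s)\,ds$ and split the integral at $s_0 := (a+b)\sqrt{\log 2}$, where the sub-Gaussian bound crosses the trivial bound $\P(|X|\geq s)\leq 1$:
\begin{equation*}
\E|X| \;\leq\; s_0 + \int_{s_0}^{\infty} 2\exp\!\Bigl(-\tfrac{s^2}{(a+b)^2}\Bigr)ds \;\leq\; (a+b)\bigl(\sqrt{\log 2} + \sqrt{\pi}\bigr).
\end{equation*}
Since $a+b \leq 2\max(a,b)$ and $2(\sqrt{\log 2}+\sqrt{\pi})\leq 20$, one obtains $\E|X| \leq 20\max(a,b)$, which is precisely the first (resp.\ second) inequality of the corollary after substituting the explicit $a$ and $b$.

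Finally, for the debiased loss, I would exploit the linear decomposition
\begin{equation*}
\widetilde{\Lambda}_{h,w,P}^{k}(\aa^{(n)},\bb^{(n)}) = \widetilde{h}_{w,P}^{k}(\aa^{(n)},\bb^{(n)}) - \tfrac{1}{2}\widetilde{h}_{w,P}^{k}(\aa^{(n)},\aa^{(n)}) - \tfrac{1}{2}\widetilde{h}_{w,P}^{k}(\bb^{(n)},\bb^{(n)}),
\end{equation*}
take expectations, subtract, and apply the triangle inequality. Each of the three resulting terms is bounded by the previous step with constant $20 M\max(a,b)$, and the coefficients $1,\tfrac12,\tfrac12$ combine to yield the factor $20+10+10=40$. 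The main obstacle, if any, is merely bookkeeping the truncation of the sub-Gaussian integral cleanly enough to land on the round constants $20$ and $40$; no new probabilistic input is needed beyond Theorem~\ref{thm:inc_U_to_mean}.
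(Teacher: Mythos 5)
Your proof is correct and follows essentially the same route as the paper's: convert the tail bound of Theorem~\ref{thm:inc_U_to_mean} into a sub-Gaussian tail with scale $a+b\leq 2\max(a,b)$, integrate via the layer-cake formula, and handle the debiased loss by the triangle inequality over its three terms. If anything you are slightly more careful than the paper (you keep the factor $2$ in the tail and truncate the integral at $s_0$ explicitly, where the paper absorbs these into the loose constant $10$), but the argument is the same.
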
 

This deviation bound shows that if we increase the number of data $n$ and batches $k$ while keeping the minibatch size fixed, we get closer to the expectation. Remarkably for all OT kernel $h$, the bound does not depend on the dimension of $\mathcal{X}$, which is an appealing property when data lie in high dimensional space. A similar property was proven but only for $W_p^p$ (see proposition 20, \cite{weed2019}). Another nice property of the bounds above is that for a fixed minibatch size $m$, if one chooses $k$ proportional to the number of samples, the convergence of $\widetilde{h}_{w,P}^k$ to its mean is in $O(n^{-1/2})$ for a $O(n)$ computational complexity. 

Now let us consider a small experiments. To illustrate the dependence to the dimension, we consider 2 empirical data $n$-tuple, $\XX$ and $\YY \sim \alpha^{\otimes n}$, where $\alpha$ is the uniform distribution on the unit cube $[0,1]^{d}$, and compute $\Lambda_h(\uu,\uu)$ as a function of $n$. For a first experiment, we fix the batch size $m=128$ and we consider several values of dimension $d$. For a second experiment, we now fix the dimension $d$ and consider several batch sizes. Both experiments highlight no dependence of $\Lambda_h(\uu,\uu)$ to the dimension. To the best of our knowledge, it is the first time that a loss using the exact Wasserstein distance has no dependence on the dimension, making it a good candidate for learning problems.

We gave concentration bounds in the bounded data case and now we extend these results to the unbounded data case.

\begin{figure}[t]
    \centering
    \includegraphics[scale=0.5]{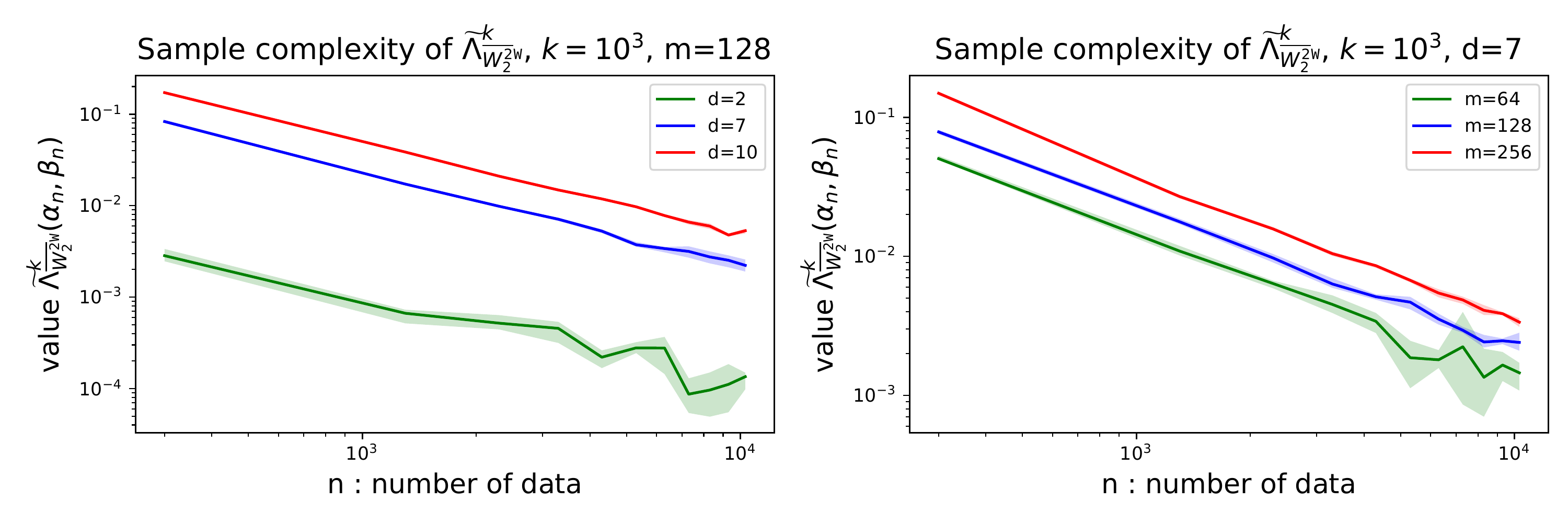}
    \caption{$\widetilde{\Lambda}_{\overline{h}^\mathtt{W}}^k(\aa,\bb)$ as a function of $n$ in log-log space. Here $\left(\aa, \bb\right)$ are two probability vectors associated to $\XX$ and $\YY \sim \alpha^{\otimes n}$, where $\alpha$ is the uniform distribution on the unit cube $[0,1]^{d}$. (Left) $\widetilde{\Lambda}_{\overline{h}^\mathtt{W}}^k(\aa,\bb)$ is tested for several values of $d \in\{2,7,10\}$ and a fix $m=128$ or (right) $\widetilde{\Lambda}_{\overline{h}^\mathtt{W}}^k(\aa,\bb)$ is tested for several values of $m \in\{64, 128, 256\}$ and a fix $d=7$. The experiments were run 5 times and the shaded bar corresponds to the 20\% and 80\% percentiles.}
    \label{fig:bary_map_toy_2D}
\end{figure}

\paragraph{Unbounded data.}

We supposed in the previous results that the distributions have a bounded support. We can relax this condition by supposing they have light tails, i.e., they are subgaussian. We consider the euclidean norm ($\|.\|_2$) and give a formal definition:

\begin{definition}[Subgaussian random vectors] A random vector $\xx \in \mathbb{R}^{d}$ is subGaussian, if there exists $\sigma \in \R$ so that:
$$
\mathbb{E} e^{\langle\mathbf{v}, \xx-\mathbb{E} \xx\rangle} \leq e^{\frac{\|\yy \|^{2} \sigma^{2}}{2}}, \quad \forall \yy  \in \mathbb{R}^{d}
$$
\end{definition}

The proof uses a related class of subgaussian random vectors and a discussion of the difference is available in appendix. In the case of subgaussian data, we can not rely on the Hoeffding inequality anymore as the data are not bounded. However we are able to get a similar concentration inequality. Hereafter we write $A \ll_{\gamma} B$ for $A,B >0$ if there exists a large constant $\tau = \tau( \gamma) >0$ such that $A \leq \tau B$.

\begin{theorem}[Concentration inequality sub-Gaussian data]\label{thm:sugaussian_concentration}
Let the cost $C=C^{n,p}$ be defined as in \eqref{DEF : mb_matrix_map}. Let $(\xx_i)_{1 \leq i \leq n} $ and $(\yy_i)_{1 \leq i \leq n} $ be two \textit{i.i.d.} sequences of random vectors such that $\xx_1 \in \operatorname{normSG}(\rho_\xx, \sigma^2 _\xx)$ and $ \yy_1 \in \operatorname{normSG}(\rho_\yy, \sigma^2 _\yy)$ with $\s_\xx, \s_\yy > 0$ and $\rho_\xx, \rho_\yy \in \R
^d$. Let us introduce 
\begin{align*}\label{thm_sub1}
    \s & := \min(\s_\xx, \s_\yy) \\
    \rho &  := \|\rho_\xx - \rho_\yy\|_2
\end{align*}
Let the sequence probability vectors $(\aa^{(n)}), (\bb^{(n)})$ verify $\mathtt{Loc_A}(m,\gamma,D)$ for some $\gamma \in (\frac{3}{4}, 1]$ and $D>0$. We assume that $n$ verifies the following condition:
\begin{equation}\label{thm_sub2}
    n \geq \tau(m, \s, \rho, D,p).
\end{equation}

\noindent
Consider $m \geqslant 1$ be a fixed integer and a kernel $h \in \{W_p, W_\epsilon, S_\epsilon$\}.  Let the reweighting function $w^\mathtt{W}$ and the probability law over $m$-tuple $P^\mathtt{W}$ be as in examples \ref{def:reweight_function_N} and \ref{def:law_indices_rep2}. Then we have the following concentration bound for the sampling without replacement:
\begin{equation}\label{thm_sub3}
 \P \left( \Big| \bar{h}_{w^\mathtt{W}, P^\mathtt{W}}(\aa^{(n)}, \bb^{(n)}) - \E \overline{h}_{w^\mathtt{W}, P^\mathtt{W}}(\aa^{(n)} , \bb^{(n)}) \Big| \geq  ( 2^{3p+4} m)^{\frac 12} \s^{p} D^2 \cdot \frac{\log(4n)^{\frac{p+1}{2}}}{n^{2 ( \gamma - \frac 34 )}}  \right) \leq 4 n^{-\frac{1}{2^p}},
\end{equation}
\end{theorem}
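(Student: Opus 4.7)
The plan is to reduce the sub-Gaussian setting to the compactly supported case treated in Theorem~\ref{thm:inc_U_to_mean} via a truncation argument. Concretely, I would fix a truncation radius $R_n = c\,\s\sqrt{\log(4n)}$ for a suitable constant $c = c(p)$ and define the event $\mathcal{E}_n$ on which every sample $\xx_i$ (resp.\ $\yy_i$) lies within distance $R_n$ of its mean $\rho_\xx$ (resp.\ $\rho_\yy$). Using the sub-Gaussian tail bound on $\|\xx_i - \rho_\xx\|_2$ together with a union bound over the $2n$ samples, one obtains $\mathbb{P}(\mathcal{E}_n^c) \leq 2 n^{-1/2^p}$ provided $c$ is chosen large enough; the quantitative assumption $n \geq \tau(m, \s, \rho, D, p)$ in \eqref{thm_sub2} ensures that $R_n$ dominates the scale $\rho$ of the means so that the deterministic truncation is meaningful.

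On the event $\mathcal{E}_n$, all pairwise distances $\|\xx_i - \yy_j\|_2$ are bounded by $\rho + 2R_n = O\bigl(\s\sqrt{\log(4n)}\bigr)$, so the cost matrix $C^{n,p}(\XX,\YY)$ has entries bounded by $M_n := O\bigl(\s^p \log(4n)^{p/2}\bigr)$. I would then introduce truncated data $\tilde\XX, \tilde\YY$, setting any sample outside its ball to the corresponding mean, so that $\bar{h}_{w^\mathtt{W},P^\mathtt{W}}(\tilde\XX,\tilde\YY) = \bar{h}_{w^\mathtt{W},P^\mathtt{W}}(\XX,\YY)$ on $\mathcal{E}_n$. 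Invoking Theorem~\ref{thm:inc_U_to_mean} in the full-sum regime (letting $k \to \infty$ so the $\sqrt{2/k}$ term vanishes) for the truncated sample with effective diameter constant $M_n$ and failure probability $\delta = 2n^{-1/2^p}$, the resulting $\sqrt{\log(2/\delta)} = O(\sqrt{\log n})$ factor, multiplied by $M_n \propto \log(4n)^{p/2}$, produces exactly the $\log(4n)^{(p+1)/2}$ scaling appearing in \eqref{thm_sub3}. A final union bound with $\mathbb{P}(\mathcal{E}_n^c)$ yields the target probability $4 n^{-1/2^p}$.

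The main obstacle will be controlling the mismatch $\bigl|\mathbb{E}\bar{h}(\XX,\YY) - \mathbb{E}\bar{h}(\tilde\XX,\tilde\YY)\bigr|$ caused by truncation, which must also be absorbed into the right-hand side of \eqref{thm_sub3}. I would handle this by applying Cauchy--Schwarz (or more generally Hölder) to $\bar{h}\cdot\mathbf{1}_{\mathcal{E}_n^c}$, using that (i) $|\bar{h}|$ grows at most polynomially in the data diameter, whose moments are controlled by the sub-Gaussian assumption through the bound $\mathbb{E}\|\xx_i - \rho_\xx\|_2^q \lesssim (q\s^2)^{q/2}$, and (ii) $\mathbb{P}(\mathcal{E}_n^c)$ is polynomially small. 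Calibrating the Hölder exponents against the target rate $n^{-1/2^p}$ is precisely what forces the $1/2^p$ decay in the final probability bound and dictates the lower bound on $n$ in \eqref{thm_sub2}. Checking that the polynomial growth hypothesis on $h$ is satisfied for each kernel $h \in \{W_p, W^{\varepsilon}, S^{\varepsilon}\}$ — in particular for the Sinkhorn divergence, where the entropic term has to be dominated by the transport cost on the truncated configuration — is the most delicate piece of the argument.
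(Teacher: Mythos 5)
Your overall strategy -- truncate at radius $O(\sigma\sqrt{\log n})$, apply the compactly supported deviation bound on the good event, and separately control the expectation shift caused by truncation -- is exactly the strategy of the paper's proof, so the architecture is right. The execution differs in both key steps, and your version is arguably more modular. Where you replace each out-of-ball sample by its mean and invoke the compact-case bound (Proposition~\ref{thm:U_to_mean}, i.e.\ the $k$-independent half of Theorem~\ref{thm:inc_U_to_mean}) as a black box on the truncated i.i.d.\ sample, the paper instead re-runs the Hoeffding/permutation decomposition with the indicator $\mathbf{1}_{A_\delta}$ inserted, which destroys the centering of the blocks $T_k$ and forces it to bound $|\E[T_k^{\delta}]|$ by hand via a dyadic decomposition of the tails (Lemma~\ref{LEM:exp_T}); it then chooses $\delta$ by balancing the two error terms through a polynomial equation. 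Your Hölder-plus-moment-bound treatment of the bias $|\E\bar h(\XX,\YY)-\E\bar h(\tilde\XX,\tilde\YY)|$ is a legitimate substitute for that lemma, and since the paper itself notes that the constants and the exponent $1/2^p$ in \eqref{thm_sub3} are artifacts of the proof, you are not obliged to reproduce them exactly.

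One concrete calibration point does need fixing. You set the truncation radius so that $\P(\mathcal{E}_n^c)\leq 2n^{-1/2^p}$ and then feed $\P(\mathcal{E}_n^c)^{1-1/q}$ into the Hölder bound for the bias. For $p\geq 2$ and $\gamma$ near $1$ this gives a bias of order $n^{-1/2^p(1-1/q)}$ (e.g.\ roughly $n^{-1/4}$ for $p=2$), which is \emph{larger} than the target deviation scale $n^{-2(\gamma-3/4)}=n^{-1/2}$, so the bias cannot be absorbed into the right-hand side of \eqref{thm_sub3} as written. The remedy is to decouple the two roles of the bad event: enlarge the constant $c$ in $R_n=c\sigma\sqrt{\log(4n)}$ so that $\P(\mathcal{E}_n^c)$ is polynomially small of sufficiently high order (say $O(n^{-4p})$), which only inflates the constant in $M_n$ and leaves the $\log(4n)^{(p+1)/2}$ scaling intact, and reserve $\delta=2n^{-1/2^p}$ for the compact-case deviation bound alone. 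With that adjustment, and using Lemma~\ref{app:lemma_upper_bound} to justify the polynomial-growth bound $|h|\leq 2\max_{i,j}\|\xx_i-\yy_j\|_2^p$ uniformly over the kernels (which already disposes of your worry about the entropic term in $S^{\varepsilon}$), your argument goes through.
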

See the supplementary for a proof.

The proof uses a truncation argument where we split data which lie in a compact and data which do not. We can remark the following facts about our bounds in the unbounded case:

\begin{itemize}
    \item The decay and the constants in \eqref{thm_sub3} are artificial consequences of the constants chosen in the proof.
    
    \item The condition \eqref{thm_sub2} seems to be necessary. In the bounded case of theorem \ref{thm:inc_U_to_mean} such a condition was not needed.
    
    \item We loose a $\sqrt{\log(n)}$ factor between \eqref{thm_sub3} and the deviation bound between the complete estimator $\bar{h}$ and its mean from the compactly supported data case. We report to the appendix for a longer discussion on this comparison.
    

\end{itemize}

After studying concentration bounds for the minibatch OT loss in the bounded and unbounded data cases, we give similar bounds for the minibatch OT plan.

\paragraph{Minibatch Transport plan.}
As discussed before an interesting output of Minibatch Wasserstein is the minibatch OT plan $\overline{\Pi}^h_{w,P}$, but since it is hard to compute in practice we instead use $\widetilde{\Pi}^{h,k}_{w,P}$ and we investigate the error on the marginal constraints.  In our previous work, we were able to show a deviation between the marginals of our incomplete estimator on uniform measures and its expectation, we now aim at extending our previous result in a more general case.
In what follows, we denote $\Pi_{(i)}$ the $i$-th row of matrix $\Pi$ and $ \mathbf{1} \in \R^n $ the vector whose entries are all equal to $1$.

\begin{theorem}[Distance to marginals]\label{thm:dist_marg} Let $ \delta \in (0,1) $, two integers $m \leq n $ and consider two sequences of probability vectors $(\aa^{(n)}),(\bb^{(n)}) \in \Sigma$. Let a ground cost $C = C^{m,p}$ for some $p \geq 1$. Consider an OT kernel $h \in \{ W_p, W_p^p, W^{\varepsilon}, S^{\varepsilon}, \GW\}$. Let $C=C^{n,p}$ be as in \eqref{DEF : mb_matrix_map}. Suppose now that the probability law over $m$-tuples $P$ and the reweighting function $w$, as defined in \eqref{EQ : rew_fun} and \eqref{EQ : prob_fun}, satisfy the admissibility condition \eqref{eq:wPadmissible}. For all integers $ k \geqslant 1 $ and all integers $ 1 \leqslant i \leqslant n  $, we have:
\begin{equation}
\P \left(
\Big|  \widetilde{\Pi}^{h,k}_{w,P}(\aa^{(n)}, \bb^{(n)})_{(i)} \mathbf{1} - a_i^{(n)} \Big| \geq  \sqrt{\frac{2 \log(2/\delta)}{k}} \right) \leq \delta
\end{equation}
\end{theorem}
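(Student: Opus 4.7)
The plan is to reduce the marginal deviation to a standard Hoeffding concentration for an average of $k$ i.i.d.\ bounded random variables. The key observation is that for any pair $(I,J)$ of $m$-tuples, using the structure $\Pi_{I,J} = Q_I^{\top} \Pi^{m}_{I,J} Q_J$ from Definition~\ref{def:MBTP}, the row-sum is
\begin{equation*}
\Pi_{I,J}\mathbf{1}_n \;=\; Q_I^{\top} \Pi^{m}_{I,J} (Q_J \mathbf{1}_n) \;=\; Q_I^{\top} \Pi^{m}_{I,J} \mathbf{1}_m \;=\; Q_I^{\top} w(\mathbf{a}^{(n)}, I),
\end{equation*}
so the $i$-th row sum of $\Pi_{I,J}$ depends on $I$ only and equals $\sum_{k : i_k = i} w_k(\mathbf{a}^{(n)}, I)$, which lies in $[0,1]$ since $w(\mathbf{a}^{(n)},I) \in \Sigma_m$.

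First, I will use Definition~\ref{def:incomplete_avg_OT_plan} to write
\begin{equation*}
\widetilde{\Pi}^{h,k}_{w,P}(\mathbf{a}^{(n)},\mathbf{b}^{(n)})_{(i)}\mathbf{1}
\;=\; \frac{1}{k}\sum_{\ell=1}^{k} X_\ell,
\qquad
X_\ell \defas \bigl(Q_{I_\ell}^{\top} w(\mathbf{a}^{(n)}, I_\ell)\bigr)_i,
\end{equation*}
where the pairs $(I_\ell, J_\ell)$ are drawn i.i.d.\ from $P_{\mathbf{a}^{(n)}} \otimes P_{\mathbf{b}^{(n)}}$, so the $X_\ell$ are i.i.d.\ and take values in $[0,1]$. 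Next, I will invoke the admissibility assumption~\eqref{eq:wPadmissible} to compute
\begin{equation*}
\mathbb{E}[X_\ell] \;=\; \bigl(\mathbb{E}_{I \sim P_{\mathbf{a}^{(n)}}} Q_I^{\top} w(\mathbf{a}^{(n)},I)\bigr)_i \;=\; a_i^{(n)}.
\end{equation*}

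Finally I will apply Hoeffding's inequality to the centered i.i.d.\ variables $X_\ell - a_i^{(n)}$, which take values in the interval $[-a_i^{(n)}, 1 - a_i^{(n)}] \subset [-1,1]$ of length at most $2$. This yields, for any $t>0$,
\begin{equation*}
\mathbb{P}\!\left(\Bigl|\,\tfrac{1}{k}\textstyle\sum_{\ell=1}^{k} X_\ell - a_i^{(n)}\Bigr| \geq t \right) \;\leq\; 2 \exp\!\left(-\tfrac{k t^{2}}{2}\right),
\end{equation*}
and solving $2\exp(-kt^{2}/2) \leq \delta$ gives the stated threshold $t = \sqrt{2\log(2/\delta)/k}$.

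There is no real obstacle here once the row-sum identity $\Pi_{I,J}\mathbf{1}_n = Q_I^{\top} w(\mathbf{a}^{(n)},I)$ is observed: this turns the problem into concentration of a bounded empirical mean around its true mean, with the admissibility condition supplying exactly the ``correct mean'' $a_i^{(n)}$. The only subtle point is checking that one does not need to worry about the coupling with $J$ (which disappears thanks to $Q_J \mathbf{1}_n = \mathbf{1}_m$ and $\Pi^{m}_{I,J} \mathbf{1}_m = w(\mathbf{a}^{(n)},I)$), and that the bound on $|X_\ell - a_i^{(n)}|$ is $\leq 1$ so the Hoeffding range parameter is $2$.
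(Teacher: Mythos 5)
Your proof is correct and follows essentially the same route as the paper's: both reduce the marginal error to a Hoeffding bound on an average of $k$ i.i.d.\ $[0,1]$-valued draws, with the admissibility condition~\eqref{eq:wPadmissible} identifying the mean as $a_i^{(n)}$. Your explicit observation that $\Pi_{I,J}\mathbf{1}_n = Q_I^{\top}w(\mathbf{a}^{(n)},I)$ depends only on $I$ (not on $J$, the data, or the particular optimal plan selected) is a nice clarification that the paper handles implicitly by conditioning on $\XX,\YY$, and your slightly loose range estimate (length $2$ rather than $1$) is exactly what reproduces the constant stated in the theorem.
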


The proof uses the convergence of $\widetilde{\Pi}^{h,k}_{w,P}$ to $\overline{\Pi}^h_{w,P}$ and the fact that 
$\overline{\Pi}^h_{w,P}$ is a transport plan and respects the marginals. It is far easier to get this result as we always have bounded transport plan.
Let us now study the practical differences with the minibatch Wasserstein distance.
Thanks to the statistical properties of our estimator, we now know that minibatch OT losses can be used to measure similarities between distributions. We now study their behaviour with modern optimization techniques.

\subsection{Gradient and optimization}

Consider a standard parametric data fitting problem in the space of probability measures. Given discrete samples \(\left(\xx_{i}\right)_{i=1}^{n} \in \R^d\) from an unknown distribution \(\pi\), we want to fit a parametric model \(\theta \mapsto \beta_{\theta} \in \mathcal{M}_{1}^{+}(\R^d) \) to \(\pi\) using a contrast function $\rho : \mathcal{M}_{1}^{+}(\R^d) \times \mathcal{M}_{1}^{+}(\R^d) \rightarrow \R_{+}$. We thus look for the solution of
\begin{equation}\label{eq:opt_problem}
\hat{\theta} = \text{arg}\min _{\theta \in \Theta}\quad  \rho(\alpha, \beta_{\theta}) 
\end{equation}
where $\alpha$ is the empirical distribution of sample $(x_i)_{i=1}^n$.  When the contrast function is chosen to be the Wasserstein distance, the above optimization problem is known as Minimal Wasserstein estimation \citep{bernton2019parameter}.  Learning many generative models can also be framed as solving \eqref{eq:opt_problem} with the contrast function being equal to some (possibly regularized) Optimal Transport cost \citep{genevay_2018}. 

One way to compute the estimator $\hat{\theta}$ given by \eqref{eq:opt_problem}, is to use a stochastic solver for semi-discrete optimal transport (chapter 5 \citep{COT_Peyre}). 
This strategy is unfortunately not efficient in practice \citep{Genevay_2016, seguy2018large}. A common alternative approach is to use stochastic gradient descent with stochastic gradients computed based on minibatches sampled from $\alpha, \beta_{\theta}$ as was done for example in \citep{genevay_2018, salimans2018improving}. It was noted in \citep{fatras2019batchwass}, that such stochastic gradients are biased, but they can nevertheless be treated as unbiased stochastic gradients of a Minibatch Wasserstein loss. The following theorems combined are a generalization of that result, that is applicable also when the OT kernel is not regularized by an entropic term, and the cost matrix is not necessarily differentiable. The full formal statement statement and proof of those theorems can be found in Appendix \ref{app_sec:optimization}.  
\begin{theorem}  \label{thm:derivative_OT_cost}
Let $\aa, \bb \in \Sigma_m$. Let $\XX$ be a $\R^{dm}$-valued random variable, and $\{\YY_{\theta} \}$ a family of $\R^{dm}$-valued random variables defined on the same probability space, indexed by $\theta \in \Theta$, where $\Theta \subset \R^{q}$ is open. Assume that $\theta \mapsto \YY_{\theta}$ is $C^1$. Denote $C = C^{m,p}$ for some $p \geq 1$ and let $h \in \{W, W^{\epsilon}\}$. Then the function $\theta \mapsto -h(\aa,\bb,C(\XX, \YY_{\theta}))$ is Clarke regular and for all $1 \leq i \leq q$ we have:
\begin{align} \label{eq:exchange_theorem_eq1}
\partial_{\theta_i} h(\aa,\bb,C(\XX, \YY_{\theta})) = \{ -\text{tr}(P \cdot D^{T})\cdot (\nabla_{\theta_i} Y): & P \in \Pi(h, C(\XX, \YY_{\theta}), \aa, \bb), \\
D \in \R^{m, m}, \hspace{2pt} & D_{j,k} \in \partial_{Y} C_{j,k}(\XX, \YY_{\theta}) \} \nonumber
\end{align}
where $\partial_{\theta_i}$ is the Clare subdifferential with respect to $\theta_i$, $\partial_Y C_{j,k}$ is the subdifferential of the cell $C_{j,k}$ of the  cost matrix with respect to $Y$ and $\Pi(h, C,\aa, \bb)$ is defined in definition \ref{def:MBTP}. For $h = GW$, when the cost matrix is differentiable (that is $p > 1$), the function $-h(\aa, \bb, C(\XX, \XX), C(\YY_{\theta},\YY_{\theta}))$ is also regular, and an analogous formula holds. 
\end{theorem}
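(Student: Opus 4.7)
The plan is to combine two classical tools of Clarke nonsmooth analysis: Danskin's theorem for the subdifferential of a parametric value function, applied with respect to the cost-matrix variable $C$, and the chain rule for Clarke subdifferentials along the $C^1$ map $\theta \mapsto \YY_\theta$.

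First I would fix $\theta$ and analyze the outer map $C \mapsto h(\aa,\bb,C)$. For $h = W$, writing $h(\aa,\bb,C) = \min_{P \in U(\aa,\bb)} \langle P, C\rangle_F$ exhibits $h(\aa,\bb,\cdot)$ as the minimum of a family of linear functions on a compact convex set, hence concave in $C$; so $-h(\aa,\bb,\cdot)$ is convex and therefore Clarke regular, with convex subdifferential $\partial_C(-h)(\aa,\bb,C) = \{-P : P \in \Pi(h,C,\aa,\bb)\}$ by Danskin's theorem. The same reasoning applies to $h = W^\varepsilon$ (the entropy term does not depend on $C$), with the bonus that strict concavity in $P$ yields uniqueness of the optimal plan and differentiability of $-h$ in $C$ with $\nabla_C(-h) = -P^*(C)$.

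Second I would analyze the inner composition $\theta \mapsto \YY_\theta \mapsto C(\XX,\YY_\theta)$. For any $p \geq 1$ each entry $C_{jk}(\XX,\YY) = \|X_j - Y_k\|^p$ is convex in $Y_k$, hence Clarke regular with convex subdifferential $\partial_{Y_k} C_{jk}$. Since $\theta \mapsto Y_k(\theta)$ is $C^1$, the standard chain rule for the composition of a Clarke regular function with a $C^1$ function (Clarke 1983, Thm.~2.3.10) gives that each $\theta \mapsto C_{jk}(\XX,\YY_\theta)$ is Clarke regular with $\partial_\theta C_{jk}(\XX,\YY_\theta) = (\nabla_\theta Y_k(\theta))^T \partial_{Y_k} C_{jk}(\XX,\YY_\theta)$.

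Third I would glue these together. In the entropic case $h = W^\varepsilon$ the outer $-h(\aa,\bb,\cdot)$ is $C^1$ in $C$ and the inner $\theta \mapsto C(\XX,\YY_\theta)$ is componentwise Clarke regular, so the composition is Clarke regular and the chain rule produces the formula directly. In the unregularized case $h = W$ I would use the max representation $-h(\aa,\bb,C(\XX,\YY_\theta)) = \max_{P \in U(\aa,\bb)} \langle P, -C(\XX,\YY_\theta)\rangle_F$ and invoke a non-smooth Danskin theorem: the maximum over a compact convex set yields a Clarke regular function whose Clarke subdifferential at $\theta$ is the convex hull of the subgradients at the maximizers, which are precisely the optimal plans in $\Pi(h, C(\XX,\YY_\theta), \aa, \bb)$. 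Selecting an optimal $P$, a matrix $D$ with $D_{jk} \in \partial_{Y_k} C_{jk}$, and the derivative $\nabla_{\theta_i}Y_k$, then combining via the bilinear pairing $\langle P, C\rangle$ recovers the announced trace-product formula. For the Gromov--Wasserstein case, the assumption $p > 1$ ensures the cost entries are $C^1$, and because the GW objective is quadratic in $P$ the chain rule produces a bilinear form $\sum_{i,j,k,\ell} D_{ijk\ell} P_{ij} P_{k\ell}$ in place of the trace, but the overall strategy is unchanged.

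The main obstacle is propagating Clarke regularity through the composition when both the outer function $-h$ (for $h = W$) and the inner cost entries (for $p = 1$) fail to be $C^1$, since composition of two merely Clarke regular functions need not preserve regularity. The structural feature that makes the argument go through is the bilinearity of $\langle P, C\rangle$: for each fixed $P$ the dependence on $C$ is linear, so the two sources of non-smoothness---in the max over $P$ and in the cost entries as functions of $Y$---are decoupled and can be handled one at a time, by Danskin's theorem outside and the standard Clarke regular$\circ\,C^1$ chain rule inside. Verifying that the selections of $P$ and of $D_{jk} \in \partial_{Y_k}C_{jk}$ can be done independently, so that the resulting set-valued chain rule realizes every element advertised in~\eqref{eq:exchange_theorem_eq1}, is the technical heart of the proof.
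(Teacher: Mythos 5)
Your two main ingredients are exactly the paper's: (1) Danskin's theorem applied to $C \mapsto h(\aa,\bb,C)$, a minimum of linear functionals over the compact set $U(\aa,\bb)$, so that $-h(\aa,\bb,\cdot)$ is convex with subdifferential given by the optimal plans; and (2) convexity of each entry $Y \mapsto C_{j,k}(\XX,\YY) = \|\xx_j-\yy_k\|_2^p$ together with Clarke's regular-composed-with-$C^1$ chain rule (Theorem 2.3.10) to get regularity of $\theta \mapsto C_{j,k}(\XX,\YY_\theta)$ and the formula $\partial_{\theta_i}C_{j,k} = \{D_{j,k}\cdot\nabla_{\theta_i}Y : D_{j,k}\in\partial_Y C_{j,k}\}$. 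Up to here you match the paper step for step, including the GW branch (where $p>1$ makes the cost tensor $C^1$ and the quadratic dependence on $P$ replaces the trace by the bilinear form).

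The divergence, and the genuine gap, is in the gluing step, which you rightly call the technical heart but do not carry out; moreover the two devices you sketch for it fail as stated. For $h=W^\varepsilon$ you argue "outer $C^1$ in $C$, inner Clarke regular in $\theta$, hence the composition is regular"; this implication is false in general because the sign of the outer gradient matters (take $g(u)=-u$ and $h(x)=|x|$: $g\circ h=-|x|$ is not regular at $0$ although $g$ is $C^1$ and $h$ is convex). For $h=W$ you propose a pointwise-max/Danskin argument in $\theta$ over $P\in U(\aa,\bb)$; Clarke's max-function theorem requires each $\theta\mapsto\langle P,-C(\XX,\YY_\theta)\rangle$ to be regular, but for $P\geq 0$ these are \emph{negatives} of sums of regular functions, so the hypothesis fails and only the inclusion "$\subset$" survives, not the equality claimed in the statement. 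The paper closes this step with a single tool: Clarke's Chain Rule II (Theorem 2.3.9(i)) combined with Proposition 2.3.1, viewing $-h(\aa,\bb,C(\XX,\YY_\theta))$ as the composition of the convex (hence regular) value function in $C$ with the vector of regular entries $C_{j,k}(\XX,\YY_\theta)$; that theorem is precisely what delivers simultaneously the regularity of the composition and the equality in the set-valued chain rule, subject to sign conditions on the outer subgradients that must be checked. To complete your argument you would need to verify those conditions or supply an equivalent equality statement; the bilinearity of $\langle P,C\rangle$ by itself does not decouple the two sources of nonsmoothness in the way you hope.
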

\begin{theorem} \label{thm:exchange_grad_exp_sm}
Let $\aa, \bb, \XX, \YY$ be as in theorem \ref{thm:derivative_OT_cost}, and assume in addition that the random variables $\XX, \{Y_{\theta}\}_{\theta \in \Theta}$ have finite $p$-moments. For $h \in \{W, W^{\epsilon} \}$, under an additional integrability assumption, we have:
\begin{align} \label{eq:exchange_theorem_eq2}
\partial_{\mathbf{\theta}} \expect \left[ h(\aa, \bb, C(\XX, \YY_{\theta})) \right] =  \expect \left[ \partial_{\mathbf{\theta}} h(\aa, \bb, C(\XX, \YY_{\theta})) \right].
\end{align}
with both expectation being finite. Furthermore the function $\theta \mapsto - \expect \left[ h(\aa, \bb, C(\XX, \YY_{\theta})) \right]$ is also Clarke regular. An analogous results holds for $h = GW$,  given that the cost is differentiable (that is $p > 1$) and random variables $\XX, \{Y_{\theta}\}$ have finite $2p$-moments.
\end{theorem}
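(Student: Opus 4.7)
The plan is to apply a classical interchange theorem for the Clarke subdifferential and the Lebesgue integral (see e.g.\ Theorem~2.7.2 of Clarke's \emph{Optimization and Nonsmooth Analysis}): if an integrand is almost surely Clarke regular and locally Lipschitz in the parameter, and admits an integrable local Lipschitz envelope, then the integral is Clarke regular, its Clarke subdifferential coincides with the Aumann integral of the pointwise Clarke subdifferentials, and both sides are finite.

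First I would invoke Theorem~\ref{thm:derivative_OT_cost} to obtain pathwise, for each realization of $(\XX, \YY_\theta)$, both the Clarke regularity of $\theta \mapsto -h(\aa, \bb, C(\XX, \YY_\theta))$ and the explicit subdifferential formula \eqref{eq:exchange_theorem_eq1}. Next I would verify almost sure local Lipschitz continuity: since every transport plan $P \in \Pi(h, \cdot, \aa, \bb)$ has nonnegative entries summing to one, both $W$ and $W^{\varepsilon}$ are $1$-Lipschitz in the cost matrix for the entrywise sup norm, and combining this with the chain rule and the $C^{1}$ assumption on $\theta \mapsto \YY_\theta$ yields local Lipschitz continuity of $\theta \mapsto h(\aa, \bb, C(\XX, \YY_\theta))$ almost surely.

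The main obstacle, which is precisely what the ``additional integrability assumption'' in the statement encodes, is to produce an \emph{integrable} local Lipschitz envelope. Using \eqref{eq:exchange_theorem_eq1} and $\sum_{j,k} P_{j,k} = 1$, one controls $|\partial_{\theta_i} h(\aa, \bb, C(\XX, \YY_\theta))|$ by $\max_{j,k} \|\partial_Y C_{j,k}(\XX, \YY_\theta)\| \cdot \|\nabla_{\theta_i} \YY_\theta\|$. Since $C_{j,k} = \|\xx_j - \yy_k\|^{p}$, we have $\|\partial_Y C_{j,k}\| \leq p\|\xx_j - \yy_k\|^{p-1}$, whose $L^{1}$ norm is finite under the $p$-moment assumption on $\XX$ and $\YY_\theta$ via H\"older's inequality. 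A local uniform bound on $\|\nabla_{\theta} \YY_\theta\|$ on a neighborhood of any fixed $\theta_0$, promoted to an integrable envelope by the extra integrability hypothesis, finishes the step.

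Applying the interchange theorem then gives both Clarke regularity of $\theta \mapsto \expect[h(\aa, \bb, C(\XX, \YY_\theta))]$ and the identity \eqref{eq:exchange_theorem_eq2}; finiteness of both expectations follows from the same envelope together with $h \geq 0$. The Gromov-Wasserstein case proceeds identically; the only change is that its cost $|c_\XX(\xx_j,\xx_{j'}) - c_\YY(\yy_k,\yy_{k'})|^{p}$ involves a \emph{product} of two distance factors under differentiation (one from the outer power $p$, one from the inner derivative of the distance, glued together by the double transport plan $P_{i,j} P_{k,l}$ which still sums to one), so the pointwise upper bound is a product of two norms raised to powers summing to roughly $2p-1$; hence the $L^{1}$ integrability of the envelope now requires finite $2p$-moments rather than finite $p$-moments, exactly as stated.
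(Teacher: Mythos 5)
Your proposal is correct and follows essentially the same route as the paper's proof: pathwise Clarke regularity from Theorem~\ref{thm:derivative_OT_cost}, the $1$-Lipschitz dependence of $h$ on the cost matrix to transfer the Lipschitz envelope from $C(\XX,\YY_\theta)$ to $h$, Clarke's interchange theorem (Theorem 2.7.2), and the elementary bound $\|\xx-\yy\|^p \le 2^{p-1}(\|\xx\|^p+\|\yy\|^p)$ with the finite $p$-moment (resp.\ $2p$-moment for $GW$) assumption to guarantee finiteness of the expectations. The only cosmetic difference is that the paper states the ``additional integrability assumption'' directly as an integrable Lipschitz envelope for $\theta\mapsto C(\XX,\YY_\theta)$, whereas you partially re-derive such an envelope from the subdifferential formula and moment bounds.
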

\begin{remark}
If the cost matrix in Theorem \ref{thm:derivative_OT_cost} is differentiable with respect to $\YY$ (that is for $p > 1$) and $h = W^{\epsilon}$, then all the Clarke derivatives in \eqref{eq:exchange_theorem_eq1}, \eqref{eq:exchange_theorem_eq2} are sets consisting of one element, which is the gradient of respective functions. In that case we may deduce for $h = S^{\epsilon}$ a formula for the gradient from Theorem \ref{thm:derivative_OT_cost} and an interchange of expectation and integration from Theorem \ref{thm:exchange_grad_exp_sm}. The above results are therefore a strict generalization of Theorem 3 in \citep{fatras2019batchwass}.
\end{remark}
Suppose that in the above theorem the random variable $\XX$ is distributed according to $\alpha^{\otimes m}$, each random variable $\YY_{\theta}$ is distributed according to $\beta^{\otimes m}$ and $\XX$ is independent of family of variables $\{\YY_{\theta} \}_{\theta \in \Theta}$. Then theorem \ref{thm:derivative_OT_cost} implies that it is easy to compute unbiased stochastic gradients of a Minibatch Wasserstein loss, defined as follows:

\begin{definition}[Minibatch Wasserstein] Let $\alpha, \beta \in \mathcal{P}_p(\R^n)$ be two measures on an Euclidean space with finite $p$-moments, for $p \geq 1$. Chose an integer $m \in \mathbb{N}$ and let $h \in \{W, W^{\epsilon}, S^{\epsilon} \}$. Given the ground cost the ground cost $C^{m,p}$ defined in Eq.\eqref{DEF : mb_matrix_map}, we define the following quantity:
\begin{equation}
U_h^m(\aa, \bb, \alpha, \beta) :=  \expect_{(\XX, \YY) \sim  \alpha^{\otimes m } \otimes \beta ^{\otimes m}} \big[ \overline{h}_{w,P, C^{m,p}(\XX,\YY)}(\aa, \bb) \big]
\label{def:expectationMinibatch}
\end{equation}
for any $\aa,\bb \in \R^n$. We define an analogous quantity For $h = GW$. Assuming that $\alpha, \beta \in \mathcal{P}_{2p}(\R^n)$, we denote
\begin{equation}
U_h^m(\aa, \bb, \alpha, \beta) :=  \expect_{(\XX, \YY) \sim  \alpha^{\otimes m } \otimes \beta ^{\otimes m}} \big[ \overline{h}_{w,P, C^{m,p}(\XX,\XX), C^{m,p}(\YY,\YY)}(\aa, \bb) \big]
\label{def:expectationMinibatchGW}
\end{equation}
for any $\aa, \bb \in \R^n$.
\end{definition}
The fact that the above is well follows trivially from the assumption that measures $\alpha, \beta$ have finite $p$-moments (or finite $2p$-moments for $h=GW$) and a standard bound \eqref{eq:p_moment_bound} used in the proof of Theorem \ref{thm:exchange_grad_exp_sm_app}. In fact, the finiteness of \eqref{def:expectationMinibatch} and \eqref{def:expectationMinibatchGW} is show in that proof. We finish this section by noting, that Theorem \ref{thm:exchange_grad_exp_sm} implies that if we use the Minibatch Wasserstein loss with $h \in \{W, W^{\epsilon}, GW\}$  (or $h=S^{\epsilon}$ for $p>1$) as a contrast function in \eqref{eq:opt_problem}, then the objective function is minus Clarke regular. In this case, it is known that SGD with decreasing step sizes converges almost surely to the set of critical points of Clarke generalized derivative \citep{davis2020stochastic}, \citep{majewski2018analysis}. Finally note that on contrary to \citep{fatras2019batchwass}, we were able to relax the assumptions on the compactness support of distributions to exchange gradients and expectations with instead supposing finite moments.
\section{Numerical experiments}\label{sec:exp}
After presenting the formalism of minibatch Wasserstein, studied its statistical and optimization properties and defining a new unbiased  loss function, we now explore different applications of our methods. To compare the minibatch OT losses and their debiased counter parts, we set two qualitative experiments and a quantitative one. The first experiment is a gradient flow between male and female images and the second is a Monge map estimation between male and female images. The quantitative experiment consists in learning a GAN where we investigate the inception score of several minibatch OT losses. Our fourth experiment is a color transfer experiment that we introduced in \citep{fatras2019batchwass}, we complete it by investigating the sparsity degree of the resulting minibatch OT plan. Finally, our two last experiments are dedicated to the minibatch Gromov-Wasserstein loss where we investigate the inherited properties from the Gromov-Wasserstein distance. As our experiments are learning scenarios, we have uniform measures and consider the reweighting function $w^\mathtt{U}$, regarding the probability laws on tuples, we investigate both $P^\mathtt{W}$ and $P^\mathtt{U}$. Note that $w^\mathtt{U}$ and $P^\mathtt{W}$ check the admissibility condition \eqref{prop:admi_plan} Finally, experiments were computed on a single GTX Titan GPU.

\subsection{Gradient Flow between human faces}
The first experiment we conducted is a gradient flow of a source distribution towards a target distribution. It corresponds to the nonparametric setting of a data fitting experiments such as GANs. For two given probability vectors $\aa$ and $\bb$, and support $\YY$ associated to $\bb$, the goal of gradient flows is to model a support $\xx_t$  which at each iteration follows the loss gradient $\xx_t \mapsto h(\aa, \bb, C(\XX_t, \YY))$. This experiment has been investigated in \citep{liutkus19a, Peyre2015}. 
In this non parametric setting, $\aa$ is parametrized by a vector position $\boldsymbol{x}(t)$ which encodes its
support. We apply it between male and female images from the celebA dataset \citep{liu2015faceattributes} where we seek a natural evolution along iterations. CelebA is a large-scale face attribute dataset with 202,599 face images, 5 landmark locations, and 40 binary attribute annotations per image. We only considered 5000 male images and 5000 female images. We build the training dataset by cropping and scaling the aligned images to 64 x 64 pixels.

Following the procedure in \citep{feydy19a, fatras2019batchwass}, the gradient flow algorithm uses an Euler 
scheme and we start from an initial distribution at time $t=0$. At each iteration we numerically integrate the ordinary differential equation:
$$
\dot{\boldsymbol{\XX}}(t)=- \nabla_{\boldsymbol{x}} \widetilde{\Lambda}_{h,w,P, C(\XX(t), \YY)}^k(\aa,\bb).
$$
As our losses take probability vectors as inputs, we need to correct an inherent scaling when we calculate the gradient.
The scaling comes from the sample weights $a_i$, which is equal to $1/m$ in our case.
To correct the scaling, we apply a re-scaling to the gradient equal to $m$. Finally, for a n-tuples of
data $\XX$ we integrate:
\begin{equation}
\dot{\boldsymbol{\XX}}(t)=-m \nabla_{\boldsymbol{x}}\left[\widetilde{h}_{P,w, C(\XX(t), \YY)}^k(\aa, \bb) - \frac12 \Big( \widetilde{h}_{P,w, C(\XX(t), \YY)}^k(\aa, \aa) + \widetilde{h}_{P,w, C(\XX(t), \YY)}^k(\bb, \bb) \Big)\right].
\end{equation}

We conducted gradient flow experiments for both minibatch OT loss and debiased 
minibatch OT loss with the Wasserstein distance as OT kernel and probability laws on $m$-tuples $P^\mathtt{W}$ and $P^\mathtt{U}$. However, as our images lie in high dimension, the euclidean ground 
cost is not meaningful anymore, that is why we followed the experiments of \citep{liutkus19a}
where they considered gradient flows in the latent space of a pre-trained AutoEncoder.
We considered a pre-trained DFC-VAE \citep{hou2017deep} with $64 \times 64$ image and perform gradient flow in the encoder's latent space. In addition of the typical \citep{KingmaW13} loss, DFC-VAE considers the difference between features of the input image and the reconstructed image through a pre-trained neural network. In our case, we considered a pre-trained VGG-19 network and the layers 1-2-3. We trained the DFC-VAE with a batch size of 64 for 5 epochs over the training dataset and use Adam method for optimization with initial learning rate of 0.0005, see \citep{hou2017deep} for more details. With the feature extraction, we are able to improve the quality of final distribution's images. 

\begin{figure}[t]
    \centering
    \includegraphics[scale=0.4]{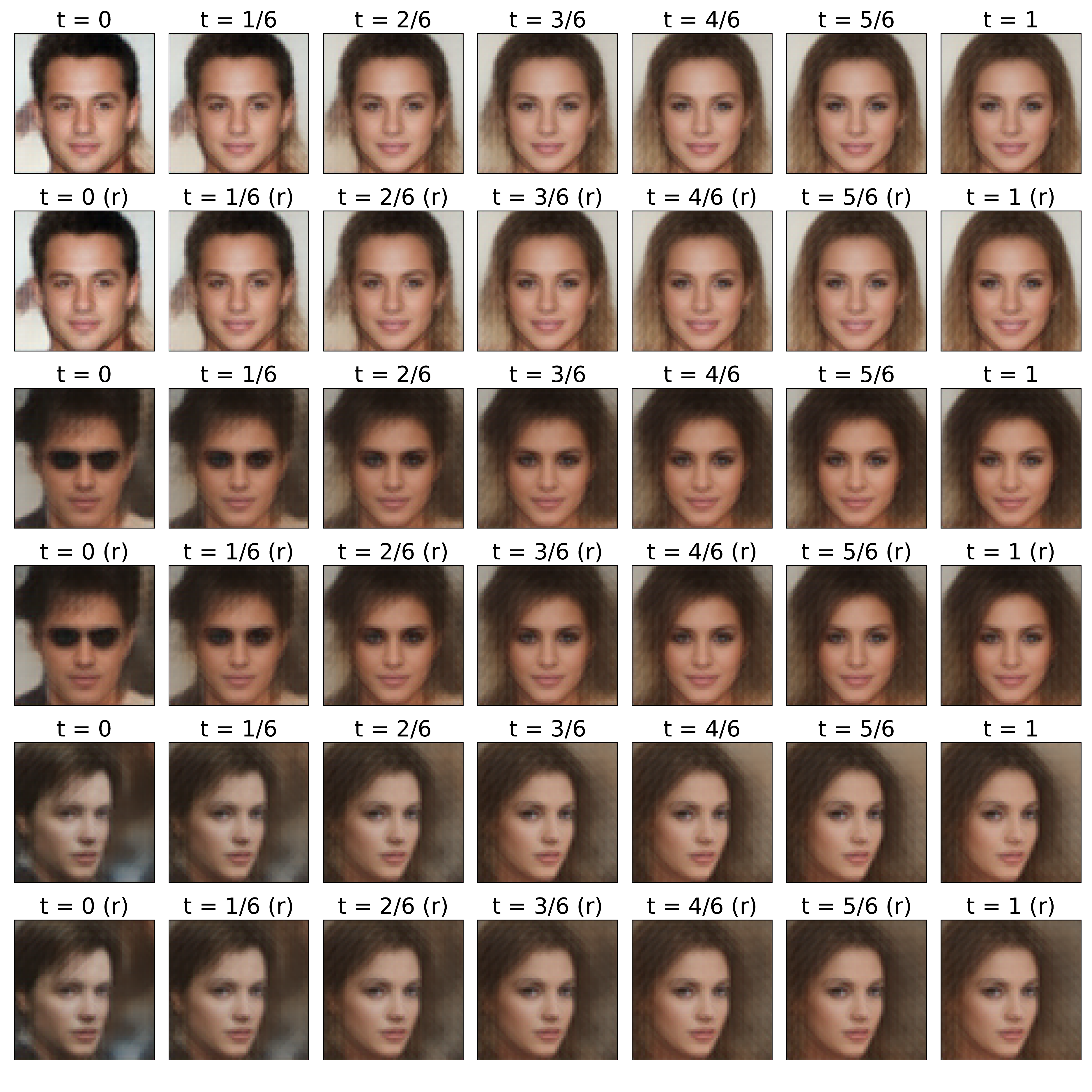}
    \caption{minibatch Wasserstein gradient flow on the CelebA dataset in a DFC-VAE latent space. Source data are 5000 male images while target data are 5000 female images. The batch size m is set to 200 and the number of minibatch k is set to 10. (r) means that the probability law on $m$-tuple is $P^\mathtt{U}$ otherwise it is $P^\mathtt{W}$.}
    \label{fig:GF_VAE_BW}
\end{figure}

The minibatch Wasserstein loss produces blurred images at the end of the flow as shown in Figure \ref{fig:GF_VAE_BW}, especially at the back of the image where all details are lost. This is due to the fact that the minibatch Wasserstein shrinks the distribution. On the contrary,
the debiased minibatch Wasserstein reported in Figure \ref{fig:GF_VAE_UnbiasedBW} produced images with high background details, quality and coherence with respect to the original background. Moreover, the evolution seems more natural between the source and the target distribution.

\begin{figure}[t]
    \centering
    \includegraphics[scale=0.4]{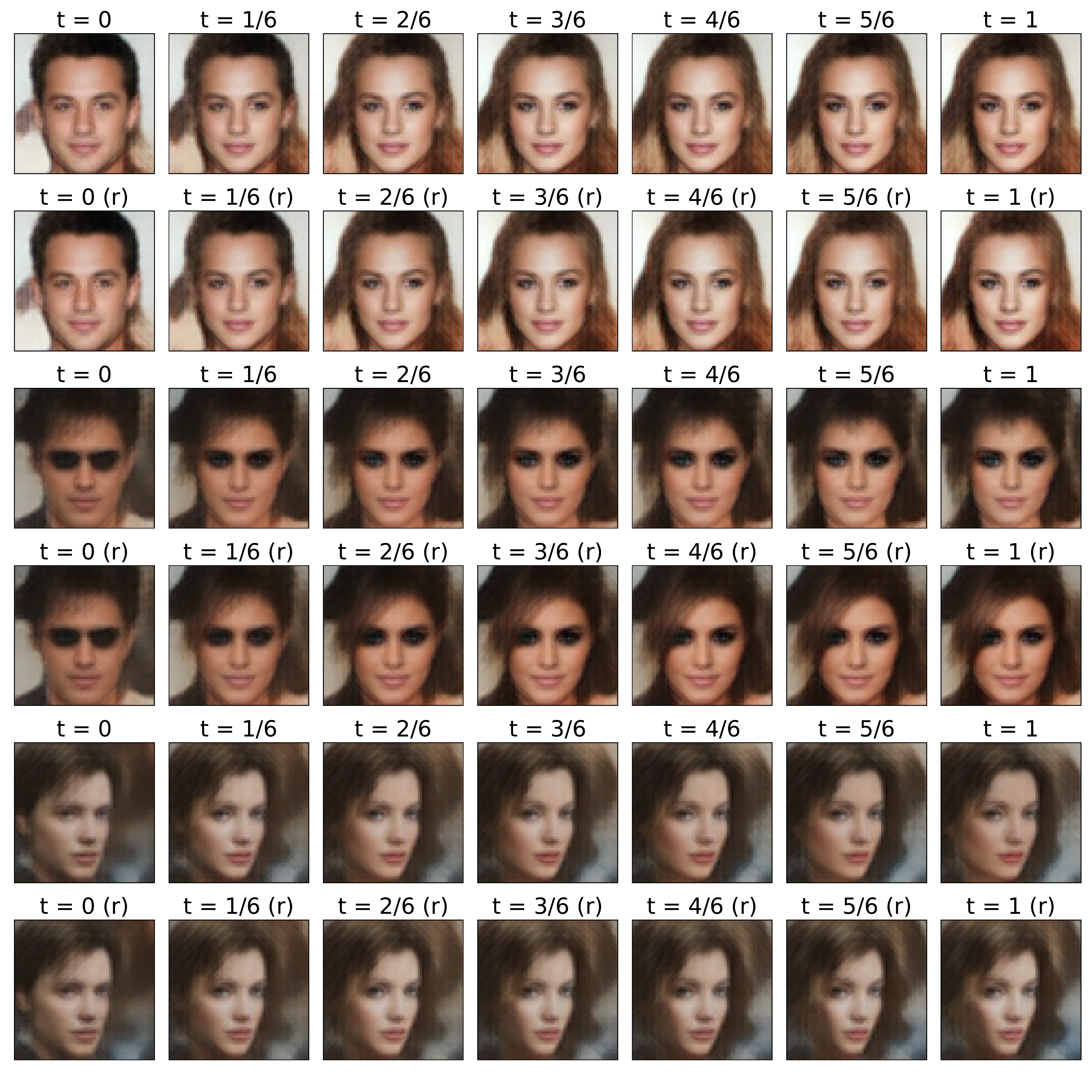}
    \caption{Unbiased minibatch Wasserstein gradient flow on the CelebA dataset in a DFC-VAE latent space. Source data are 5000 male images while target data are 5000 female images. The batch size m is set to 200 and the number of minibatch k is set to 10. (r) means that the probability law on $m$-tuple is $P^\mathtt{U}$ otherwise it is $P^\mathtt{W}$.}
    \label{fig:GF_VAE_UnbiasedBW}
\end{figure}

\subsection{Mapping estimation}
While the previous application focused on updating samples, the second application is a continuous mapping estimation between source and target distributions that will allow transforming new samples that are not in the original training data. The map is parametrized by a neural network $f_{\varphi}: \mathcal{X} \rightarrow \mathbb{R}^{p}$ between the source data and the target data. The objective is to minimize the loss:
\begin{equation}
    \underset{\varphi}{\text{min }}\Lambda_{h,w,P, C(f_{\varphi}(\XX), \YY)}(\uu, \uu),
\end{equation}
Where $f_{\varphi}(\XX) = \{f_{\varphi}(\xx_1), \cdots, f_{\varphi}(\xx_n) \}$. We apply this problem on the celebA dataset \citep{liu2015faceattributes}. 
We considered 5000 male and 5000 female images. The image size is $64 \times 64$. The goal is to learn how to transform a male 
image into a female one. Unfortunately, 
in order to avoid blurry images, we once again relied on
the latent space, of dimension 100, of a pre-trained DFC-VAE \citep{hou2017deep}. We used the same setting as described in the Gradient Flow section. We performed the training in the latent space and then 
we decoded the transform samples. We consider a 4 dense layer neural network with relu activation function ($100\rightarrow 1024 \rightarrow 1024 \rightarrow 512 \rightarrow 100$). The minibatch size $m$ is set to $128$, and we used the Adam optimizer \citep{Kingmaadam14} with a step size of $1e^{-4}$ and the coefficients $\beta_1 = 0$ and $\beta_2 = 0.9$.

We conducted the experiments for 
minibatch Wasserstein loss and for the debiased loss $\Lambda_h$. We spotted once again that the 
transformed samples with the minibatch Wasserstein losses are blurred (figure \ref{fig:monge_vae_BW}). 
However, the results with the unbiased minibatch Wasserstein loss are more diverse and more realistic. It shows the effectiveness of the unbiased loss to debiased the minibatch Wasserstein losses 
(figure \ref{fig:monge_vae_BW}). It is interesting to note that the estimated mapping are quite different on some images between losses which use a sampling with or without replacement.

\begin{figure}[t]
    \centering
    \includegraphics[scale=0.65]{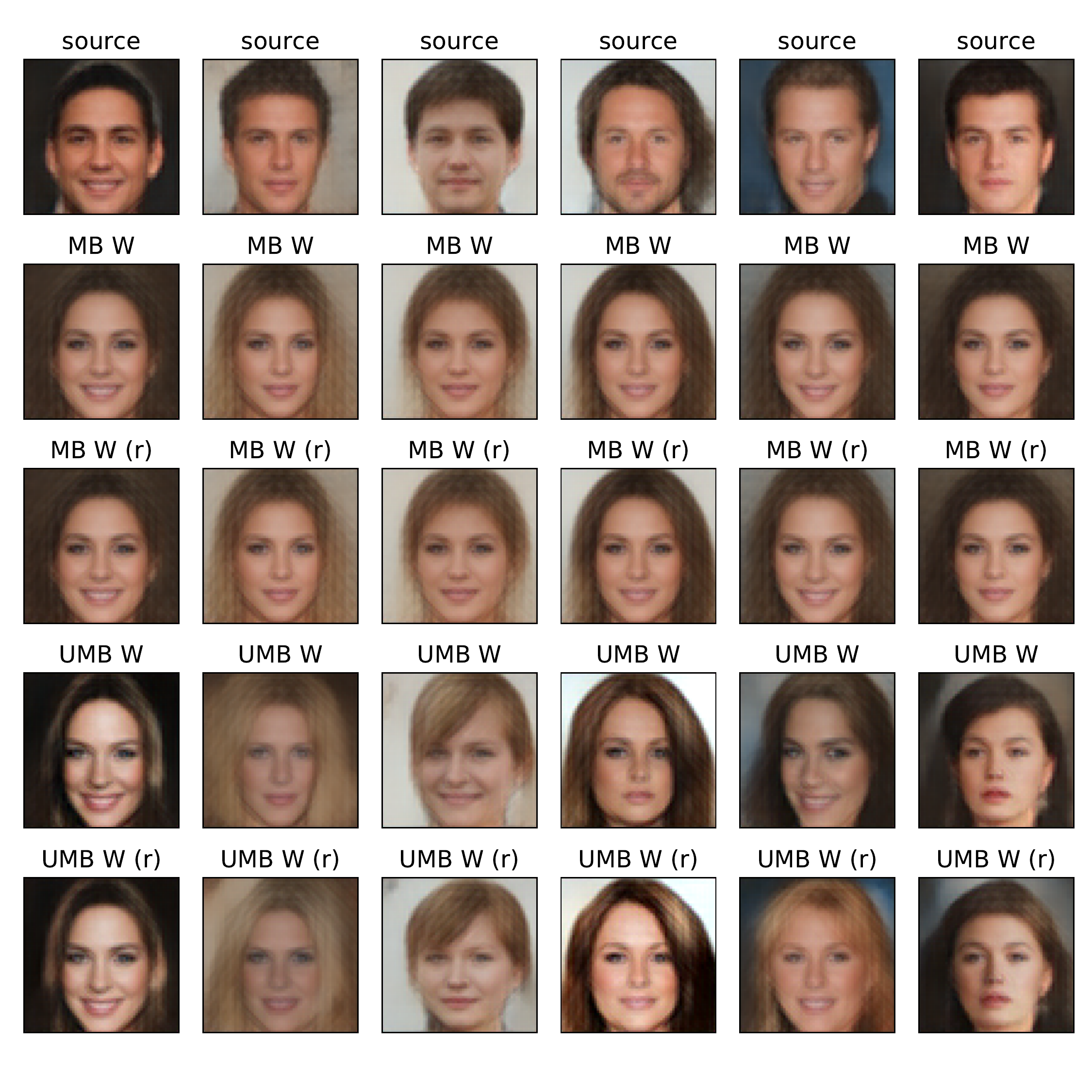}
    \caption{Map learning between 5000 male source images and 5000 female target images. The batch size $m$ is set to 128 and the number of batch couple k is set to 1. (First row) Source data. (Second and third rows) Respectively minibatch Wasserstein without replacement and with replacement (r) mapping on the CelebA dataset in a DFC-VAE latent space. (Fourth and fifth rows) Respectively unbiased minibatch Wasserstein without and with replacement (r) mapping on the CelebA dataset in a DFC-VAE latent space.}
    \label{fig:monge_vae_BW}
\end{figure}
\subsection{Generative Adversarial Networks (GANs) on Cifar10}

Image generation have become a popular machine learning applications with Generative Adversarial 
Networks (GANs) \citep{goodfellow_gan, arjovsky_2017, li2017mmd, genevay_2018, bunne19a} or AutoEncoders (AEs) \citep{KingmaW13, Patrini2019SinkhornA, kolouri2018sliced, tolstikhin2018wasserstein}.
Some state of the art image generation methods have successfully used the Wasserstein distance and its variants as loss functions \citep{arjovsky_2017, Gulrajani2017}.
Recently, \citep{genevay_2018, salimans2018improving} developed Sinkhorn GAN, a GAN variant which uses minibatch Sinkhorn
divergence as a loss function and performed well in practice. Hence, we want to learn a GAN
using our loss function which is a debiased version of minibatch SD. 

The objective of a GAN is to train a neural network $G_{\theta}$ that can generate realistic data which are close to real data $\XX$.
To generate data, the generator $G_{\theta}$ takes a random input $\zz$ in a latent space from $\ZZ \in \mathcal{Z}$.
We want to measure and minimize the distance between the generated data and the real data. 
For the ground cost of the Wasserstein distance, we could rely on an euclidean cost between images.
Unfortunately, using an euclidean cost on high dimensional images generates blurred versions of the real images \citep{Aggarwal01onthe, Wang2005, Kulis2013}. Hence, we will learn adversarially a critic networks $f_{\varphi}$ which extracts meaningful feature vectors for input images. Then we will apply the euclidean distance between the encoded generated data and encoded real data. Other methods relied on a feature extractor such as MMD GAN \citep{li2017mmd} of Sinkhorn GAN \citep{genevay_2018}. We can summarize our learning problem as the following:
\begin{align}
    &\underset{\theta}{\text{min }} \underset{\varphi}{\text{max }} \quad \mathbb{E}\Lambda_{h,w,P, C_{\varphi}(\XX, G_{\theta}(\ZZ))}(\uu, \uu),\\
    \text{where } \quad &C_{\varphi}(\xx_i, \yy_j) \stackrel{\text { def. }}{=}\left\|f_{\varphi}(\xx_i)-f_{\varphi}(\yy_j)\right\|_2 \quad \text { and } \quad f_{\varphi}: \mathcal{X} \rightarrow \mathbb{R}^{p}.\nonumber
\end{align}
Where $G_{\theta}(\ZZ) = \{  G_{\theta}(\zz_1), \cdots, G_{\theta}(\zz_m) \}$ and $\uu \in \mathbb{R}^m$. We train GAN for image generation of CIFAR-10 data \citep{cifar}. The number of data is 50K of size $32 \times 32$. Regarding the implementation detail, we consider the same setting as \citep{li2017mmd, genevay_2018}. The input noise is of dimension 100. The generator and the critic have 4 convolution layers (full detail in tab \ref{tab:gan_arch}). We clip the parameters of the critic in order to have a lipschitz constant bounded by 1 as done in \citep{li2017mmd, genevay_2018}. The batch size $m$ we considered is 64 and we set the number of batch couple to $k=1$ for each SGD update. The optimizer we used is RMSProp \citep{rmsprop} with a learning rate of $5.10^{-4}$. Regarding the entropic regularization parameter for the Sinkhorn divergence, we set it in $\{10, 100, 1000\}$. We update the discriminator 5 times before one update of the generator. 

We compare our method to 4 different methods: WGAN-GP \citep{Gulrajani2017}, Sinkhorn GAN, OT-GAN \citep{salimans2018improving} and MMD GAN. Regarding Sinkhorn GAN we use a batch size of 256 as done in their work. We also compared our GAN to the effective WGAN-GP, we considered the same architecture as above but we used the hyperparameters described in their paper \citep{Gulrajani2017}. Finally for OT-GAN \citep{salimans2018improving}, we used an entropic regularization parameter set to 500 and for fair comparison with other methods, we set the batch size to 256. In their paper authors used batch size of 8000 images to get a more stable training, however this method is not reproducible in our setting with a single GPU. We report Inception scores in Table \ref{tab:inception_cifar}. As we can see, the debiased minibatch Sinkhorn divergence gives the best Inception score showing the relevance of this new loss function. Comparing to the typical Sinkhorn GAN, the debiased strategy increases the inception score by 1 point. Furthermore, it seems that regularizing the problem with the entropic regularization helps to get better performance as already suggested in previous work \cite{genevay_2018}. We also report in Figure \ref{fig:gan_examples} some generated examples from MBSD, UMBSD and WGAN models and we can see that UMBSD lead to slightly more detailed samples than MBSD and more realistic than WGAN-GP.

\begin{table}
    \centering
    \begin{tabular}{|c|c|}
     \hline
        Generator & Critic \\
        \hline
        INPUTS: 100  & INPUTS: $3 \times 32 \times 32$ \\
                Conv2D$^\intercal$ nc=256 k=4 stride=1, BN, ReLU  & Conv2D nc=64 k=4 stride=2, LReLU(slope=0.2) \\
                Conv2D$^\intercal$ nc=128 k=4 stride=2, BN, ReLU  & Conv2D nc=128 k=4 stride=2, LReLU(slope=0.2) \\
                Conv2D$^\intercal$ nc=64 k=4 stride=2, BN, ReLU  & Conv2D nc=256 k=4 stride=2, LReLU(slope=0.2) \\
                Conv2D$^\intercal$ nc=3 k=4 stride=2, TanH  & Conv2D nc=100 k=4 stride=1, LReLU(slope=0.2) \\
            \hline
    \end{tabular}
    
    \caption{Generator (left) and critic (right) 4 convolutional layer architectures used in our experiments to generate CIFAR10 data.}
    \label{tab:gan_arch}
\end{table}

\begin{table}[t!]
    \centering
    \begin{tabular}{|c|c|}
        \hline
        Methods & Inception score\\
        \hline
        WGAN-GP & $4.59 \pm 0.07$\\
        \hline
        MBSD ($\varepsilon=10$) & $3.57 \pm 0.03$\\
        \hline
        MBSD ($\varepsilon=100$) & $3.61 \pm 0.05$\\
        \hline
        MBSD ($\varepsilon=1000$) & $3.83 \pm 0.05$\\
        \hline
        OT-GAN ($\varepsilon=500$) & $4.13 \pm 0.08$\\
        \hline
        MMD & $4.29 \pm 0.06$\\
        \hline
        \hline
        UMBW (ours) & $4.38 \pm 0.08$\\
        \hline
        UMBSD ($\varepsilon=10$) (ours) & $4.72 \pm 0.07$\\
        \hline
        UMBSD ($\varepsilon=100$) (ours) & $\textbf{4.76} \pm 0.08$\\
        \hline
        UMBSD ($\varepsilon=1000$) (ours) & $4.67 \pm 0.08$\\
        \hline
      \end{tabular}
    \caption{Inception Scores on CIFAR10 for several GAN variants trained with a batch size of 64. Biggest score is in bold.}
    \label{tab:inception_cifar}
\end{table}

\begin{figure}[!t]
    \centering
    \includegraphics[scale=0.3]{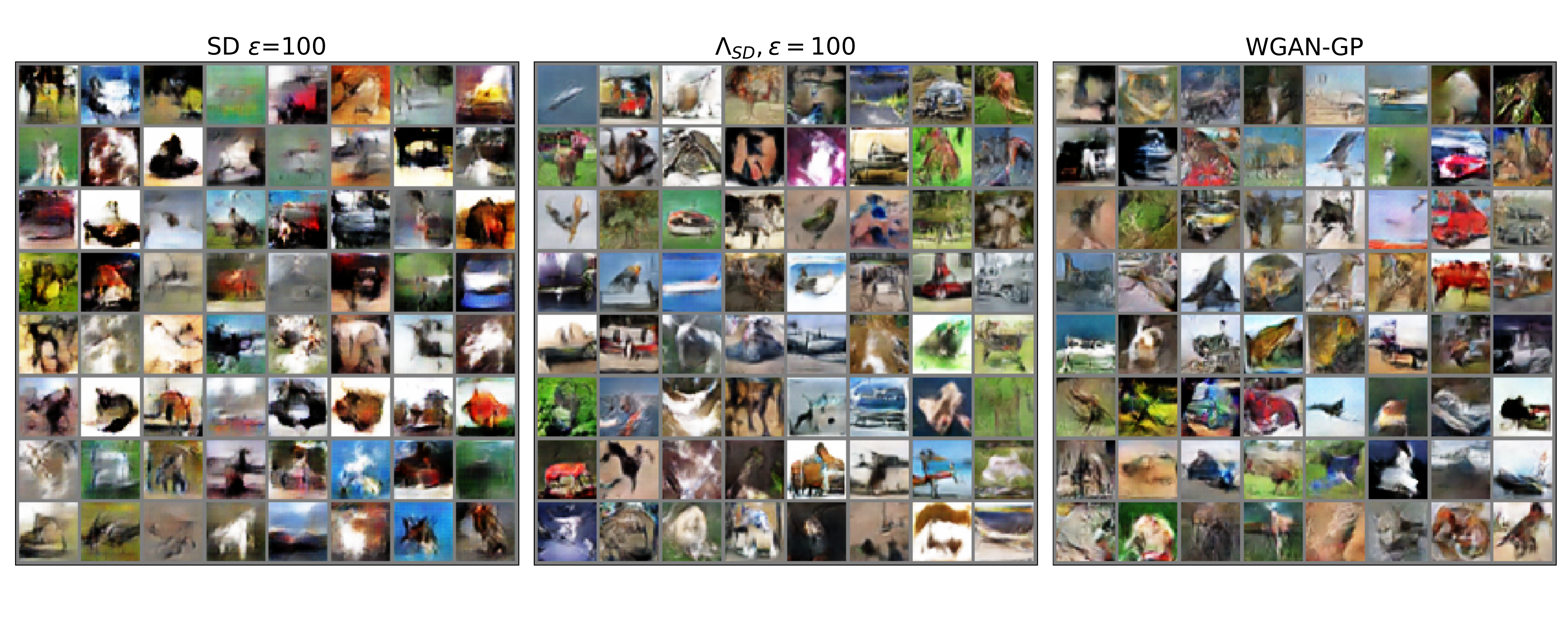}
    \caption{Generated samples with the same latent vectors from different GANs.}
    \label{fig:gan_examples}
\end{figure}

\subsection{Large scale barycentric mapping for color transfer}
\begin{figure*}[!t]
    \centering
    \includegraphics[scale=0.315]{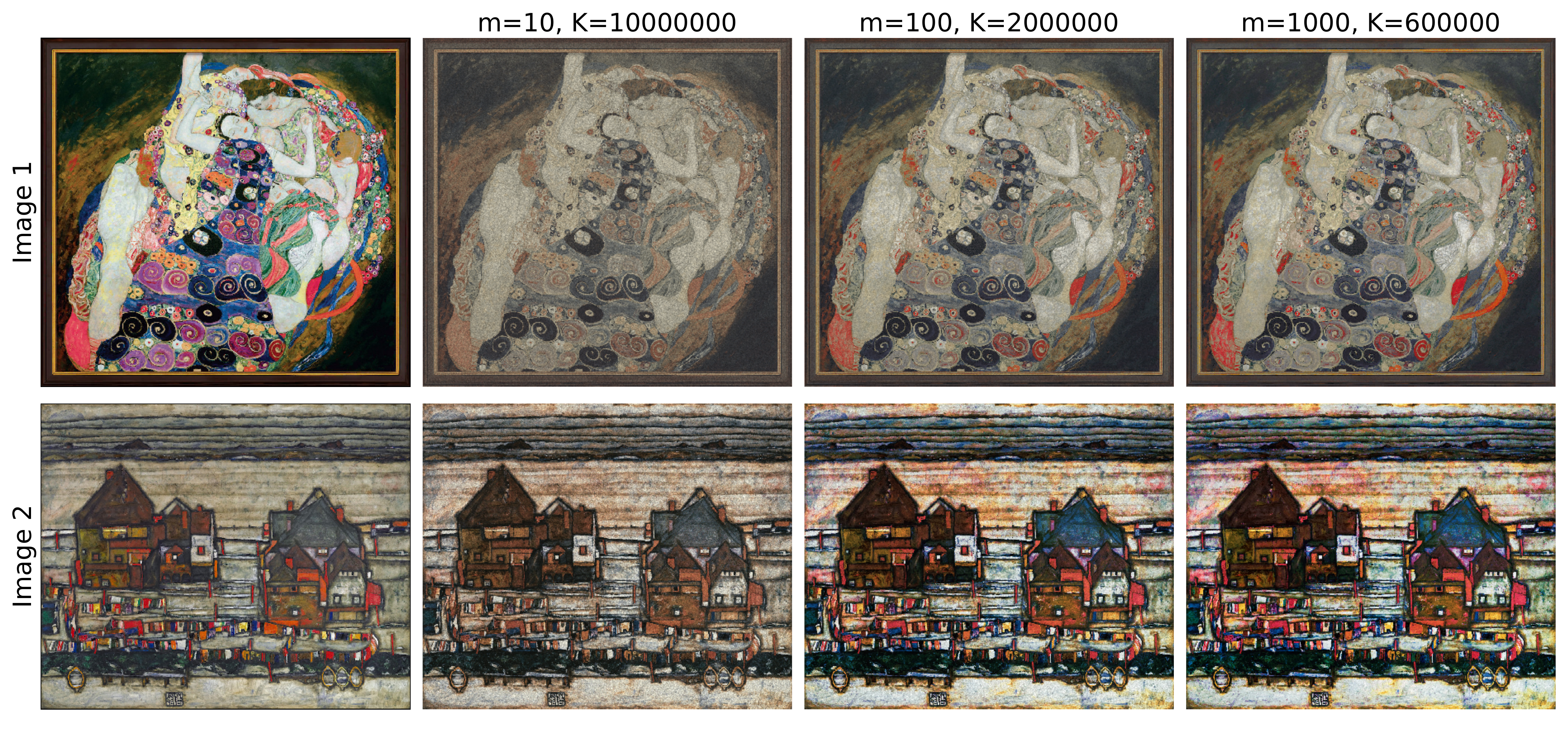}\vspace{-5mm}
    \caption{Color transfer between full images for different batch size and number of batches. (Top) color transfert from image 1 to image 2. (Bottom) color transfer from image 2 to image 1.}
    \label{fig:CT_full_img}
\end{figure*}

The purpose of color transfer is to transform the color of a source image so that it follows the color of a target image. Optimal Transport is a well known method to solve this problem and has been studied before in (\cite{Ferradans2013, blondel2018}). Images are represented by point clouds in the RGB color space. Then by calculating the transport plan between the two point clouds, we get a transfer color mapping by using a barycentric projection. As the number of pixels might be huge, previous work selected a subset of pixels using k-means clusters for each point cloud. This strategy allows to make the problem memory tractable but looses some information to the quantification. With MB optimal transport, we can compute a barycentric mapping for all pixels in the image by incrementally updating the full transported vector at each minibtach. 
When one selects a source indices m-tuple $I_1$ and a target m-tuple $I_2$, she just needs to update the transported vector between the considered minibatches as $Q_{I_1} Y_s = \sum_{I_2}  \Pi_{  I_1, I_2 }^m Q_{I_2} X_t$, with matrix $Q_{I_1}$ and $Q_{I_2}$ defined as in definition \ref{def:MBTP}. Indeed, the incremental computation can be rewritten as:
\begin{equation}
    Y_s = n_s \widetilde{\Pi}^{W_2^2,k}_\mathtt{W}(\uu, \uu) X_t,
\end{equation}
when we use the incomplete MBOT plan $\widetilde{\Pi}^{W_2^2,k}_\mathtt{W}$. To the best of our knowledge, it is the first time that a barycentric mapping algorithm has been scaled up to 1M pixel images. About the required memory for experiments, the memory cost to store data is $O(n)$. The minibatch OT calculus requires $O(m^2)$ because we need to store the ground cost and the OT plan. The marginal experiment requires $O(n)$, as we just need to average the marginals of the plan. Finally, the memory cost is $O(n)$ while exact OT would be $O(n^2)$.

The source image has (943000, 3) RGB dimension and the target image has RGB dimension (933314, 3). For this experiments, we compare the results between the minibatch framework with the Wasserstein distance for several m and k. We used batch of size 10, 100 and 1000. We selected $k$ so as to obtain  a good visual quality and observed that a smaller $k$ was needed when using large minibatches. Also note that performing MB optimal transport can be done in parallel and can be greatly speed-up on multi-CPU architectures.
 One can see in Figure \ref{fig:CT_full_img} the color transfer (in both directions) provided with our method. We can see that the diversity of colors falls when the batch size is too small as the entropic solver would do for a large regularization parameter. However, even for 1M pixels, a batch size of 1000 is enough to keep a good diversity of colors. 
\newline

From now on for speed constraints, we consider a selected subset of 1000 pixels using k-means clusters for each point cloud. We reproduced empirically the results of Theorem \ref{thm:dist_marg} about the marginal errors, as shown in Figure \ref{fig:marg_sparsity} we recover the $O(k^{-1/2})$ convergence rate on the marginal with a constant depending on the batch size $m$.  

\begin{figure}[t]
    \centering
    \includegraphics[scale=0.5]{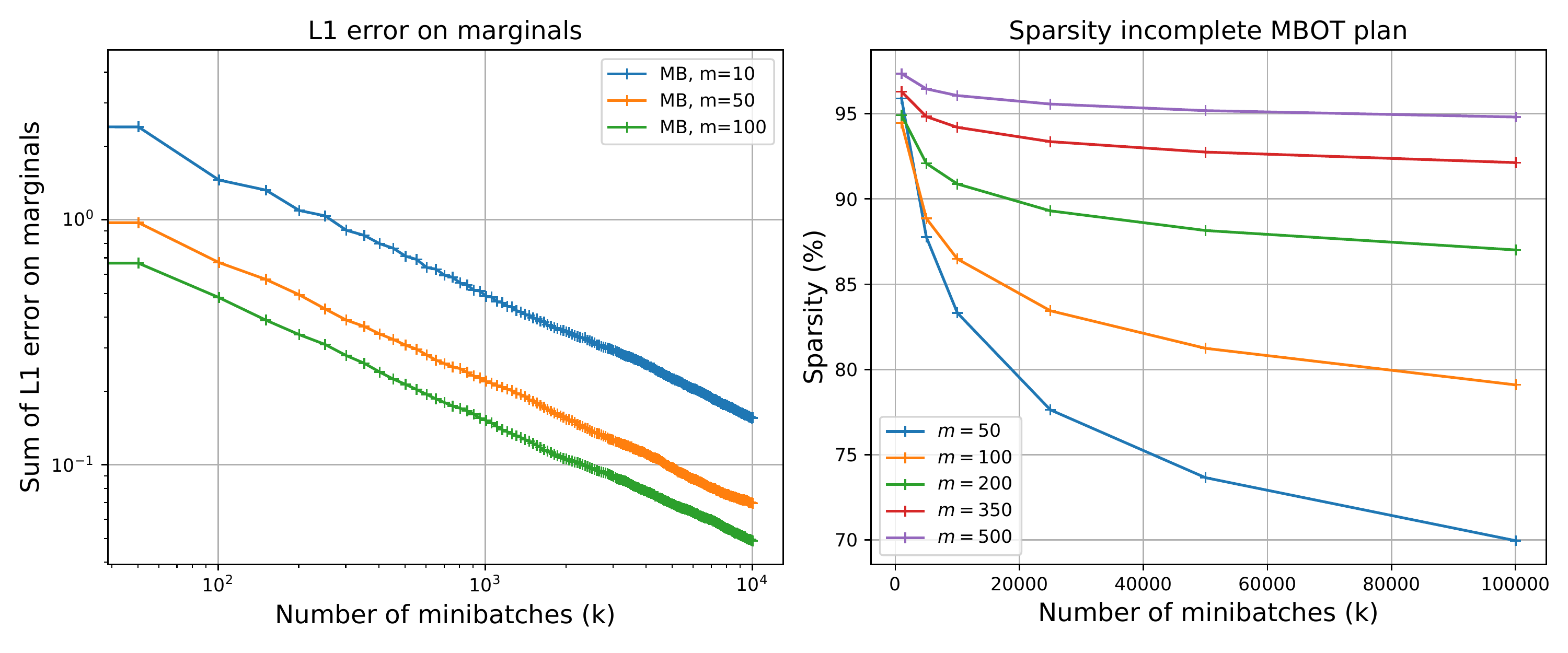}
    \caption{(left) L1 error on both marginals (loglog scale). We selected 1000 points from original images and computed the error on marginals for several m and k (loglog scale). (Right) Sparsity of incomplete minibatch OT plan $\widetilde{\Pi}^{W_2^2,k}_\mathtt{W}(\uu, \uu)$. We selected 1000 points from original images and computed the sparsity of $\widetilde{\Pi}^{W_2^2,k}_\mathtt{W}(\uu, \uu)$ for several $k$ and $m$.}
    \label{fig:marg_sparsity}
\end{figure}
As we stated above, minibatch Wasserstein loss increases the number of connection similarly to regularized OT variants. Hence, we want to conduct a sparsity experiment of the minibatch Wasserstein transport plan and we report it for several settings. We considered batch sizes of  50, 100, 200, 350 and 500 and computed the sparsity of the incomplete minibatch OT plan with respect to several number of minibatches $k$. The results are gathered in figure \ref{fig:marg_sparsity}. We see that as $m$ gets smaller, the degree of sparsity decreases and that the sparsity reaches a limit as the number of minibatches increases. Intuitively, it is expected as when $m$ gets smaller, the number of connections increases. The results can be justified with the following facts. When the minibatch size between the source and target batches is the same and with uniform weights, then $m$ coefficients of the transport matrix will be non null for the exact Wasserstein distance. 
As we draw $k$ batch couples, such as $k.m < n$ and if we suppose that the batches define a disjoint union of the samples $\XX$ and $\YY$, then we have at most $km$ coefficients of $\widetilde{\Pi}^{W_2^2,k}_\mathtt{W}(\uu, \uu)$ non zero. In the case of non uniform weights $\aa, \bb$, the positive linear program has a solution with at most $2m-1$ non zero coefficients. Then we have at most $k.(2m-1)$ coefficients of $\widetilde{\Pi}^{W_2^2,k}_\mathtt{W}(\aa, \bb)$ non zero.

\subsection{Minibatch Gromov-Wasserstein rotation and translation invariance}


The Gromov-Wasserstein distance has the nice properties to be rotational and translation invariant, so in this section we study if the minibatch Gromov-Wasserstein loss (MBGW) shares the same properties. To the best of our knowledge, it is the first time that minibatch Gromov-Wasserstein loss properties have been investigated theoretically and empirically. As shown in the previous section, our statistical results can be extended to the Gromov-Wasserstein distance. We start with a spiral experiment where we compute the value of the MBGW loss for several rotations of the spirals. Then, we aim at checking if the MBGW loss is able to recover the motion of a galloping horse on a dataset containing a sequence of shapes \citep{solomon2016}. 
\newline

{\bfseries Rotational invariance.} 
Our first result shows the stability of rotation and translation invariances with minibatches. We have the following results:
\begin{proposition}[Invariance]
  The minibatch Gromov-Wasserstein is rotation and translation invariant.
  \label{prop:invariance_rot_gw}

\begin{proof}
Let $\aa$ and $\bb$ be two probability vectors with support $\XX$ and $\YY$ respectively. Consider now the support $\YY^\prime$ which is a rotation and a translation of $\YY$. Consider three ground costs $C^1 = C(\XX, \XX)$, $C^2 = C(\YY, \YY)$ and $C^3 = C(\YY^\prime, \YY^\prime)$. For fixed minibatches $I$ and $J$, as $\YY^\prime$ is a translation and rotations of $\YY$, we have: $$\GW \Big( w_1(\mathbf{a},I), w_2(\mathbf{b},J), C^1_{I,I}, C^2_{J,J}\Big) = \GW \Big( w_1(\mathbf{a},I), w_2(\mathbf{b},J), C^1_{I,I}, C^3_{J,J}\Big),$$ summing over all minibatch couples finishes the proof.

\end{proof}
\end{proposition}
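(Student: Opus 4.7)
The plan is to reduce the invariance of the minibatch Gromov--Wasserstein loss to the corresponding invariance of the intra-domain cost matrices. Recall that $\GW$ is a function of the reweighted probability vectors and of the two intra-domain cost matrices only; it does not see the ambient positions $\XX, \YY$ themselves. Thus if a rigid motion leaves those cost matrices unchanged, it necessarily leaves the minibatch loss unchanged.

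First I would fix probability vectors $\aa, \bb$ with supports $\XX$ and $\YY$, and let $\YY' = (R\yy_1 + t, \ldots, R\yy_n+t)$ where $R$ is an orthogonal matrix and $t \in \R^d$. Writing $C^2 = C^{n,p}(\YY,\YY)$ and $C^3 = C^{n,p}(\YY',\YY')$, the key observation is the pointwise identity
\begin{equation*}
\|(R\yy_i + t) - (R\yy_j + t)\|_2^p = \|R(\yy_i - \yy_j)\|_2^p = \|\yy_i - \yy_j\|_2^p,
\end{equation*}
which uses only the orthogonality of $R$ and the fact that the translation cancels in a difference. Hence $C^2 = C^3$ entrywise, and for every pair of index $m$-tuples $(I,J)$ we have the submatrix equality $C^2_{J,J} = C^3_{J,J}$.

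Next, I would invoke Definition~\ref{def:mbw} (Eq.~\eqref{def:minibatch_GW}), which expresses $\overline{\GW}_{w,P}$ as the expectation over $(I,J) \sim P_\aa \otimes P_\bb$ of $\GW(w(\aa,I), w(\bb,J), C^1_{I,I}, C^2_{J,J})$. Since $\GW$ depends on its matrix arguments only through their entries, the pointwise equality above yields
\begin{equation*}
\GW\!\bigl(w(\aa,I), w(\bb,J), C^1_{I,I}, C^2_{J,J}\bigr) = \GW\!\bigl(w(\aa,I), w(\bb,J), C^1_{I,I}, C^3_{J,J}\bigr)
\end{equation*}
for every $(I,J)$. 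Taking expectations with respect to the same probability law $P_\aa \otimes P_\bb$ on both sides, I would conclude that the minibatch GW losses on $(\XX,\YY)$ and on $(\XX,\YY')$ coincide, which is the claimed invariance. The analogous argument with the roles of $\XX$ and $\YY$ swapped handles the source side.

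There is no real obstacle here --- the whole statement is essentially bookkeeping, because the invariance is inherited coordinatewise from the Euclidean ground metric and then propagated through the (deterministic) kernel $\GW$ and the (fixed) averaging distribution $P$. The only point that deserves care is making explicit that the reweighting function and the probability law on $m$-tuples depend only on $\aa,\bb$, not on the ambient positions $\YY$, so that the same average is being taken on both sides of the identity; this is immediate from Definitions~\ref{def:reweight_function_U}--\ref{def:law_indices_rep2}.
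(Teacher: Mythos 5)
Your proof is correct and follows essentially the same route as the paper's: establish the equality of the $\GW$ term for each fixed minibatch pair $(I,J)$ and then average over $P_\aa \otimes P_\bb$. Your version is in fact slightly more explicit, since you observe that the intra-domain cost matrices $C^2$ and $C^3$ coincide entrywise under a rigid motion (so the per-batch equality is immediate), whereas the paper appeals directly to the invariance of $\GW$ itself.
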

Empirically, distances which are rotation invariant return a constant when comparing rotated distributions. To support the proposition, we consider a small spiral experiment for different rotations of the target distribution. We follow the procedure in \citep{vayersgw}. The source and the target distributions are spirals taken from the scikit-learn spiral dataset \citep{pedregosa2011scikit}. We compute Gromov-Wasserstein distance and the MBGW loss on $n = 300$ samples. We report in Figure \ref{fig:spiral} the average values of the GW and MBGW losses for a varying angle and we can see that it is in practice invariant to rotation. From the figure \ref{fig:spiral}, one recovers that the MBGW loss returns a constant, depending on the minibatch size $m$, and hence is rotation invariant.
\newline

\begin{figure*}[t]
    \centering
    \includegraphics[scale=0.6]{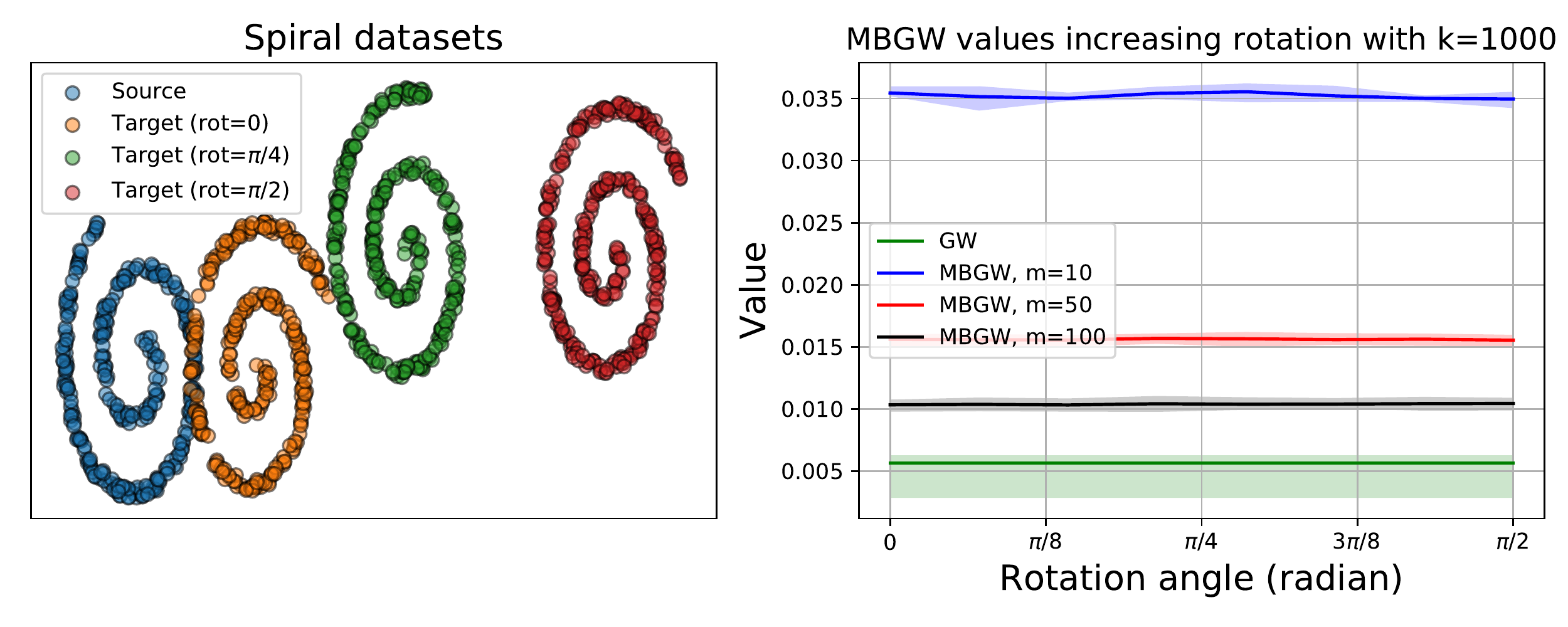}
    \caption{Average value of MBGW and GW losses as a function of rotation angle on 2D spirals. Colored areas correspond to the 20\% and 80\% percentiles. Experiments were run 10 times.}
    \label{fig:spiral}
\end{figure*}

{\bfseries Meshes comparison} In the context of computer graphics, Gromov-Wasserstein distance can be used to measure similarities between two meshes \citep{peyre16, Solomon2015, vayersgw}. It can also be used for shape matching, search, exploration or organization of databases. As minibatch GW loss and its debiased counter parts are not distances, we want to know if they are meaningful for use in a context of meshes comparison. From a time series of 45 meshes representing the motion of a galloping horse, we compute a multidimensional scaling (MDS) of the pairwise distances with minibatch GW losses, that allows plotting each mesh as a 2D point. Each horse mesh is composed of approximately 9, 000 vertices. The results can be found in figure \ref{fig:horse_mds}. As one can observe in figure \ref{fig:horse_mds}, the cyclical nature of this motion is successfully recovered in this 2D plot for both MBGW loss and its debiased counter parts.
\newline

\begin{figure*}[t]
    \centering
    \includegraphics[scale=0.5]{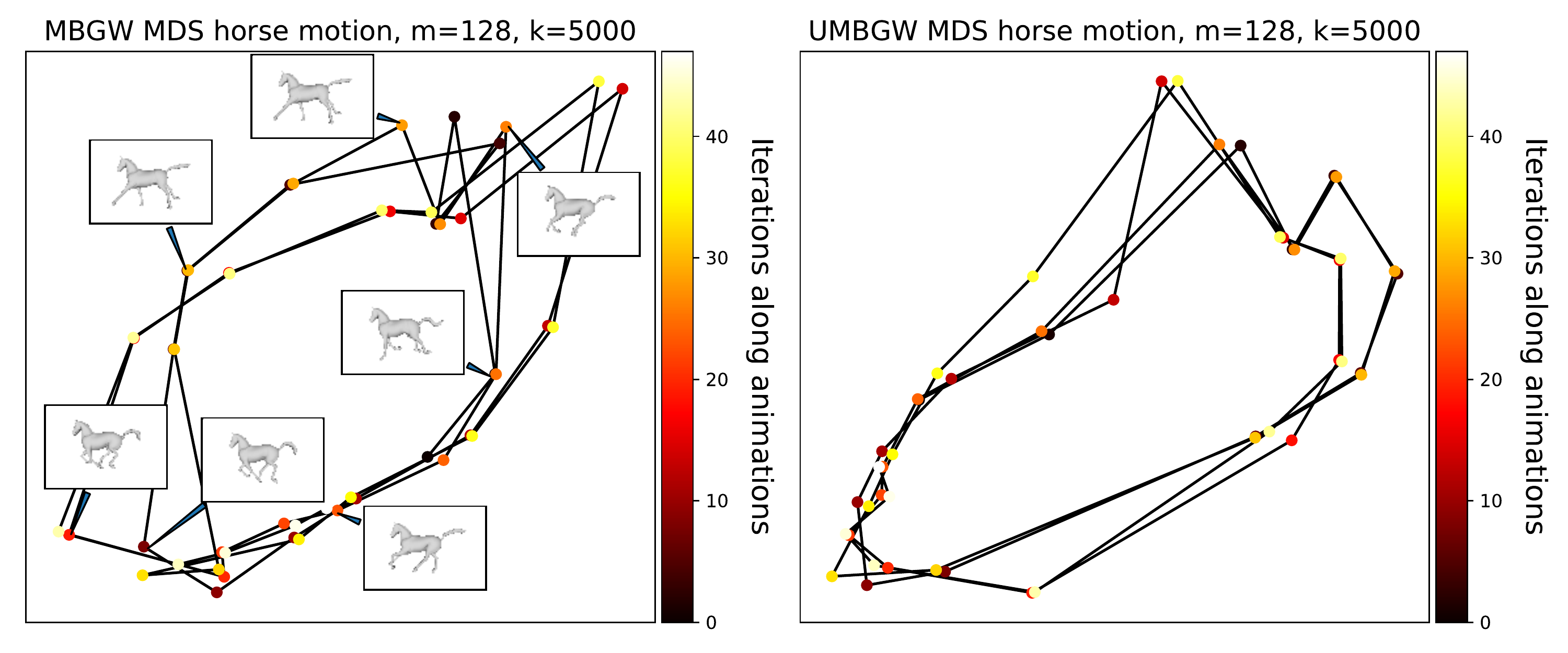}
    \caption{MDS on the galloping horse animation with MB Gromov-Wasserstein loss and its debiased variant. Each sample in this Figure corresponds to a mesh and is colored by the corresponding time iteration. One can see that the cyclical nature of the motion is recovered. }
    \label{fig:horse_mds}
\end{figure*}

{\bfseries Running time comparison} Our last experiment is the time computation of minibatch Gromov-Wasserstein. We compare it to Gromov-Wasserstein distance, the entropic regularized Gromov-Wasserstein, the Sliced Gromov-Wasserstein and its rotational invariant variant \citep{vayersgw}. Unfortunately, the Sliced variant can only be computed for square euclidean ground cost unlike the MBGW and is not rotational invariant. 
We calculate these distances between two 100-D random measures of $n \in {10^2, ..., 10^4}$ points. For the minibatch Gromov-Wasserstein we consider two settings. The first setting is with a fixed number of minibatch couples ($k=5000$) and the second is linear setting where $k$ grows linearly according to $n$ ($k=\frac{n}{10}$). The latter is due to our concentrations bounds which decreases linearly in the number of samples if we consider a number of minibatch couples proportional to the number of samples (see Theorem \ref{thm:inc_U_to_mean}).
We use the Python Optimal Transport (POT) toolbox to compute GW distance on CPU. For entropic-GW we use the POT implementation with a regularization parameter of $\varepsilon = 0,01$. We were not able to get converged transport plan for a bigger number of data than $10^4$ for both GW and its entropic variant.

\begin{figure*}[t]
  \centering
  \includegraphics[scale=0.4]{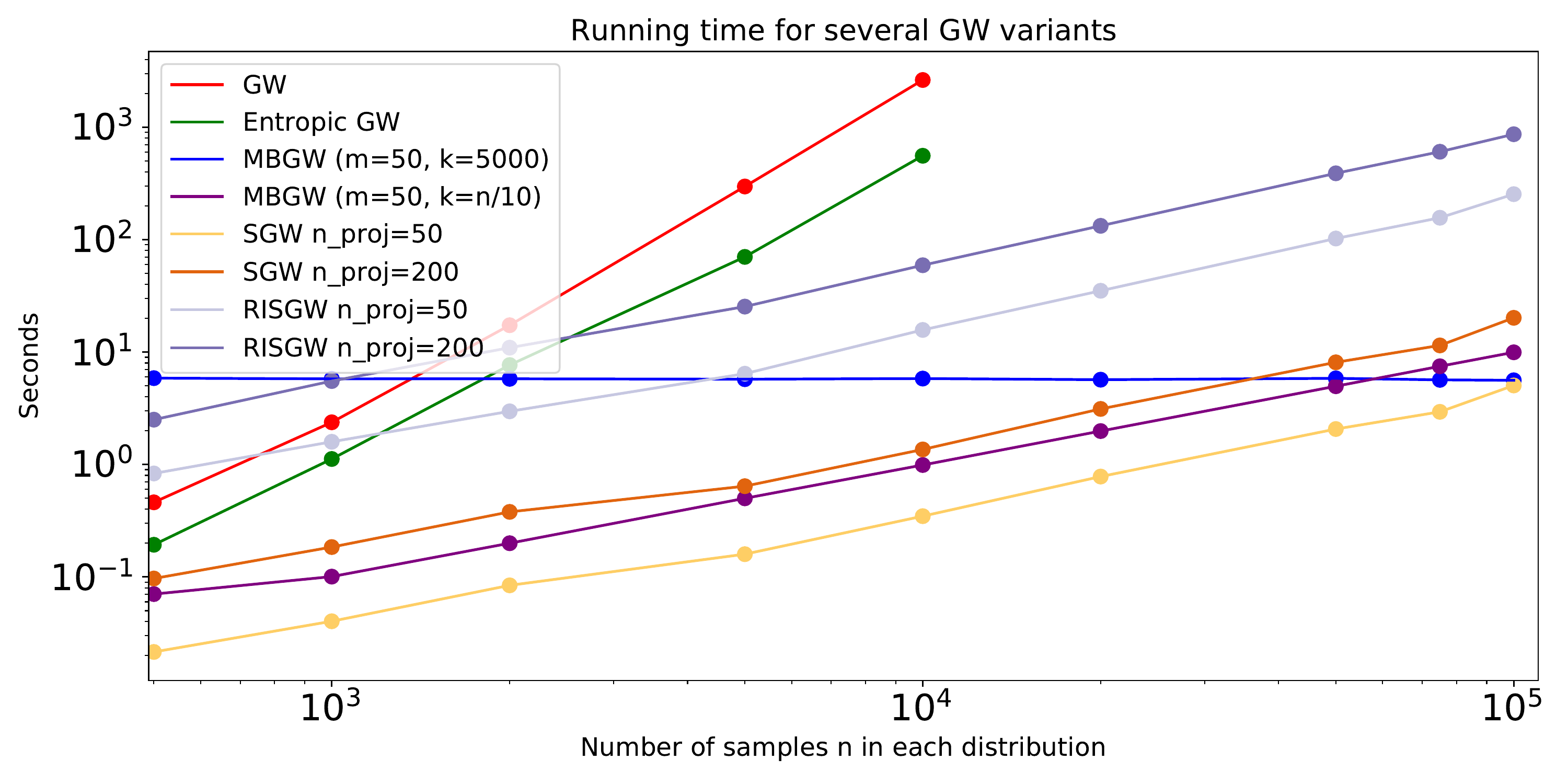}
  \caption{Runtimes comparison between SGW, GW, entropic-GW between two 100-D random distributions with varying number of points from 0 to $10^4$ in log-log scale. The time includes the calculation of the pair-to-pair distances.}
  \label{fig:running_times}
\end{figure*}

We see that MBGW enjoys a constant time computation. The sliced Gromov-Wasserstein and its rotational variant grow in $\mathcal{O}(n log(n))$ making it slower than the minibatch GW for large scale dataset. Regarding GW and its entropic counter part, we see that for $10e^4$ points, the MBGW is 100 time faster than GW.

\section{Conclusion}

In this paper, we extended the impact study of using a minibatch strategy with a Wasserstein distance (\cite{fatras2019batchwass}). We alleviate the hypothesis by considering unbounded and non uniform probability distributions. We defined several estimators based on different sampling strategies, reviewed their basic properties, proposed a new loss function which fixes the loss of the separability distance axiom, and studied the asymptotic behavior of our estimators. We showed a deviation bound between our subsampled estimators and their expectation.
Furthermore, we studied the optimization procedure of our estimator and proved that it enjoys unbiased gradients for all OT kernels unlike the Wasserstein distance. Finally, we demonstrated the effect of minibatch strategy with gradient flow experiments, color transfer, GAN, map learning and Gromov-Wasserstein experiments.

\section*{Acknowledgements}

Authors would like to thank Thibault Séjourné for fruitful discussions. This work is partially funded through the projects OATMIL ANR-17-CE23-0012 and 3IA Côte d'Azur Investments ANR-19-P3IA-0002 of the French National Research Agency (ANR). Y.Z. was supported by the European Research Council (grant no. 864138 “SingStochDispDyn”). Y.Z. would like to thank the School of Mathematics at the University of Edinburgh for its hospitality during the preparation of this manuscript.

\newpage

\nocite{*}
\bibliography{egbib}

\newpage

\begin{table}[t!]
        \begin{center}
            \begin{tabular}{ |c|c|c| } 
                 \hline
                 Notations & Description & Example \\
                 \hline 
                 $\xx$ & vector $\in \mathbb{R}^d$ & $\xx=[1, 2, 3]$\\
                 $n$ & number of data & $n$=6\\
                 $m$ & minibatch size & $m=4 \leq n$\\
                 $I$ & Index $m$-tuple & $(1, 1, 2, 1)$ \\ 
                 $\llbracket n \rrbracket^m$ & Set of all index $m$-tuples & $\{ I_1, I_2, \cdots, \}$\\ 
                 $\Pim$ & Set of all index $m$-tuples without replacement & $\{ I_1, I_2, \cdots, \}$ \\
                 $\mathcal{P}^{m,o}$ & Set of all ordered index $m$-tuples without replacement & $\{ I_1, I_2, \cdots, \}$ \\
                 $\XX(I)$ & data $m$-tuple & $(\xx_1, \xx_1, \xx_2, \xx_1)$ \\ 
                 $\XX$ & data $n$-tuple & $(\xx_1, \cdots, \xx_n)$ \\ 
                $\sum_{i \in I} f(i)$ & Sum over all elements of tuple $I$ & $\sum_{k=1}^{m}f(i_k)$\\
                $\Pi_{i \in I} f(i)$ & Product over all elements of tuple $I$ & $\Pi_{k=1}^{m}f(i_k)$\\
                $\simplex_{n}$ & Simplex of size $n$ &$\{\mathbf{a} \in \R_+^n, \sum_{i=1}^n a_i = 1\}$\\
                $\aa \in \simplex_n$ & probability vector & $\sum_{i=1}^n a_i = 1$ \\
                $\uu \in \simplex_n$ & uniform probability vector & $\sum_{i=1}^n \frac1n = 1$ \\
                $\simplex$ & Set of all sequences of probability vectors & $(\aa^{(n)})$\\
                $\alpha$ & probability distribution & $\mathcal{N}(0,1)$\\
                $\alpha^{\otimes m }$ & $m$-tuples drawn from $\alpha$ & $\XX \sim \alpha^{\otimes n}$\\
                $w$ & Reweighting function & $w([\frac{1}{2},\frac{1}{6},\frac{1}{3}], (1, 2)) = [\frac{2}{3}, \frac{1}{3}]$\\
                $P$ & Probability law to draw 
                index $m$-tuples & $P(I) = n^{-m}$\\ 
                $p$ & Power of the Wasserstein distance & $W_{p=2}$\\
                $\varepsilon$ & entropic regularization coefficient & $W_{\varepsilon=0.1}$\\
                $h$ & OT kernel & $h = W, W_\varepsilon, S_\varepsilon, \GW$\\
                $C^m$ & Ground cost matrix of size $n$ and $m$ & euclidean distance\\
                $\mathcal{M}_m(\R) $ & Set of square (real) matrices of size $m$ & $\mathcal{M}_4(\R) $\\
                $\overline{h}_{w, P}$ & Minibatch kernel OT loss & $\overline{W}_{p,w,P}$\\
                $\widetilde{h}_{w,P}^k$ & Incomplete MBOT loss & $\widetilde{W}_{w^\mathtt{U},P^\mathtt{U}}^k$\\
                $\Lambda_{h, w, P}$ & Debiased minibatch loss & $\Lambda_{h=W_2^2}$\\
                $\widetilde{\Lambda}_{h,w,P, C(\XX, \YY)}^k$ & Incomplete debiased MBOT loss & $\widetilde{\Lambda}_{W,w,P}^k$\\
                $\overline{\Pi}^h_{w,P}$ & MBOT plan & $\overline{\Pi}^h_{W,P}$ \\
                $\widetilde{\Pi}^{h,k}_{w,P}$ & Incomplete MBOT plan & $\widetilde{\Pi}^{W_{\varepsilon},k}_{w,P}$\\
                $\overline{h}^\mathtt{U}$ & MBOT loss (sampling without replacement) & $\overline{W}^\mathtt{U}$\\
                $\overline{h}^\mathtt{W}$ & MBOT loss (sampling with replacement) & $\overline{W}^\mathtt{W}$\\
                $\overline{\Pi}^h_\mathtt{U}$ & OT plan (sampling without replacement) & $\overline{\Pi}^{S_\varepsilon}_\mathtt{W}$\\
                $\overline{\Pi}^h_\mathtt{W}$ & OT plan (sampling with replacement) & $\overline{\Pi}^W_\mathtt{U}$ \\
                $\mathtt{Loc_A}$ & local product constraint & $\mathtt{Loc_A}(m, \gamma, D)$\\
                $\mathtt{Loc_G}$ & local product constraint & $\mathtt{Loc_G}(m, 1, 1)$\\
                $D, \gamma$ & minibatch local constraints & $D=1, \gamma=1$\\
                $\operatorname{nSG}(\rho, \sigma^2 )$ & space of subgaussian random variables & $X \in \operatorname{nSG}(\rho, \sigma^2 )$\\
                 \hline
            \end{tabular}
        \end{center}
    \caption{Table of used notations. (Left) notations, (middle) descriptions, (right) examples.}
    \label{app:summary_def_tab}
\end{table}

\section{Appendix}

{\bfseries Outline.}
The supplementary material of this paper is organized as follows:
\begin{itemize}
    \item Appendix \ref{app_sec:notations} provides a table of all used notations.
    \item Appendix \ref{app_sec:formalism} gives the proofs of our general minibatch OT distances formalism. In particular, it proves under what conditions the minibatch OT matrix is a minibatch OT plan.
    \item Appendix \ref{app_sec:concentration_bounded} provides the concentration bounds proofs for compactly supported distributions. We generalize the U-statistic proof to know under what conditions our estimator is close to its mean.
    \item Appendix \ref{app_sec:subgauss_concentration_bounded} provides the concentration bound proofs for subgaussian distributions. Based on the compactly supported case, we use a truncation argument to provide a more general concentration bound for unbounded distributions.
    \item Appendix \ref{app_sec:distance_marginals} gives the concentration bound proofs for the minibatch OT plan. We provide a concentration bound of our incomplete minibatch OT plan around the input marginals.
    \item Appendix \ref{app_sec:optimization} details the optimization proofs. We prove that we can exchange (sub-)gradient over parameters and expectations, which justifies the use of SGD for optimization. Notably, this proof includes the Wasserstein and the Gromov-Wasserstein distances.
    \item Appendix \ref{app_sec:1D_case} discusses the 1D case. We detail the calculus of the 1D minibatch Wasserstein distance close form.
    \item Appendix \ref{app_sec:contributions} discusses the authors contributions.
\end{itemize}

\subsection{Notations} \label{app_sec:notations}

We gather all our notations in the table \ref{app:summary_def_tab}.

\subsection{Formalism}\label{app_sec:formalism}

In this appendix, we show results which justify our formalism and then, we show how we can upper bounded our minibatch OT loss. We first show that Example \ref{def:law_indices_rep2} defines a probability law on $m$-tuples without replacement.
\begin{customexample}{\ref{def:law_indices_rep2}}[Drawing indices ``without replacement'']
 
Given a discrete probability distribution $\mathbf{a} \in \simplex_n$, it is also possible to draw distinct indices $i_\ell \in \llbracket n \rrbracket$, $1 \leq \ell \leq m$, by defining $P_\mathbf{a}^\mathtt{W}(I)=0$ if the $m$-tuple $I$ \emph{has} repeated indices, otherwise
\begin{equation}\label{app : samp_rep}
P_{\mathbf{a}}^\mathtt{W}(I) = 
\frac{1}{m}
\frac{(n-m)!}{(n-1)!}
\sum_{i \in I} a_i.
\end{equation}
Denote by $\Pim$ the set of all $m$-tuples without repeated elements. Let us check, that equation \eqref{app : samp_rep} defines a probability distribution on $\Pim$.
Observe that $\sum_{i=1}^n a_i=1$ and that for each $1 \leq i \leq n$
\begin{align}
\sharp\{I \in \Pim: i \in I\}&=\sharp\{I \in \Pim: n \in I\}\nonumber \\
& = \sharp \{ I=(i_{1},\ldots,i_{m}) \in \Pim: i_{1}=n\} + \ldots + \sharp \{ I = (i_{1},\ldots,i_{m}) \in \Pim: i_{m}=n\}\nonumber \\ 
&= m \cdot \sharp \{ I = (i_{1},\ldots,i_{m}) \in \Pim: i_{m}=n\}.\label{app:num_i_all_tup}
\end{align}
Since $\sharp \{ I = (i_{1},\ldots,i_{m}) \in \Pim: i_{m}=n\}$ is the number of $(m-1)$-tuples without repeated indices of $\llbracket 1,n-1\rrbracket$, $(n-1)!/(n-m)!$, it follows that
\begin{equation}
\frac{m (n-1)!}{(n-m)!} \cdot \sum_{I \in \Pim} P_{\mathbf{a}}^{\mathtt{W}}(I)
=
\sum_{I \in \Pim} \sum_{i \in I} a_i 
=
\sum_{i=1}^n a_i \cdot \sharp\{I \in \Pim: i \in I\} 
= m \cdot \frac{(n-1)!}{(n-m)!}.
\end{equation}
This shows that $\sum_{I \in \Pim} P_{\mathbf{a}}^{\mathtt{W}}(I)=1$.
\end{customexample}

We now prove that if Equation \eqref{eq:wPadmissible} is respected, then the minibatch OT matrix defines a transport plan. We also prove that for a $W_p^p$ kernel, minibatch $W_p^p$, \emph{i.e.,} $\overline{W_p^p}$, is an upper bound of $W_p^p$.
\begin{customprop}{\ref{prop:admi_plan}}
If the reweighting function $w$ and the parametric distribution on $m$-tuples $P_\mathbf{c}$ satisfy the following admissibility condition 
   \begin{equation}
        \expect_{I \sim P_{\mathbf{c}}}  Q_I^\top w(\mathbf{c},I) = \mathbf{c},\qquad \forall \mathbf{c} \in \simplex_n\label{app:wPadmissible}
    \end{equation}
Then with the notations of Definition~\ref{def:AVG_OT_plan}, the averaged minibatch transport matrix $\overline{\Pi}^h_{w,P}$ is an admissible transport plan between the discrete probabilities $\mathbf{a},\mathbf{b} \in \simplex_n$ in the sense that $\overline{\Pi}^h_{w,P} \mathbf{1}_n = \mathbf{a}$ and $\mathbf{1}_n^\top \overline{\Pi}^h_{w,P} = \mathbf{b}^\top$.
 Considering the Wasserstein kernel $h = W_p^p$, the minibatch loss defined in~\eqref{def:minibatch_wasserstein}, as the associated coupling $\overline{\Pi}^h_{w,P}$ is not the optimal coupling of the full OT problem, it satisfies
\begin{equation}\label{app:mblowerbound}
    \overline{h}_{w,P}(\mathbf{a}, \mathbf{b}) = \langle \overline{\Pi}^{h}_{w,P}, C \rangle_F \geq h(\mathbf{a}, \mathbf{b}).
\end{equation}
\end{customprop}

Under assumption~\eqref{app:wPadmissible} one can safely call $\overline{\Pi}^h_{w,P}(\mathbf{a}, \mathbf{b})$ an averaged minibatch transport \emph{plan}.

\begin{proof}
By the definition of $\Pi^m_h$ and the properties $Q_J \mathbf{1}_n = \mathbf{1}_m$ and $\Pi^{m}_{I,J}\mathbf{1}_m = \mathbf{a}_I = w(\mathbf{a},I)$ we have
\begin{align*}
    \overline{\Pi}^h_{w,P} \mathbf{1}_n
    &= \expect_{I, J} Q_I^\top \Pi_{I,J}^{m} Q_J \mathbf{1}_n
    = \expect_{I, J} Q_I^\top \Pi_{I,J}^{m} \mathbf{1}_m
    = \expect_{I, J} Q_I^\top \mathbf{a}_I
    = \expect_{I} Q_I^\top \mathbf{a}_I
    = \expect_{I \sim P_\mathbf{a}} Q_I^\top w(\mathbf{a},I) = \mathbf{a}.
\end{align*}
The proof that $\mathbf{1}_n^\top \overline{\Pi}^h_{w,P} = \mathbf{b}^\top$ is similar.
This establishes that $\overline{\Pi}^h_{w,P} \in U(\mathbf{a},\mathbf{b})$ is an admissible transport plan between the discrete probabilities $\mathbf{a}$ and $\mathbf{b}$. 

We now prove~\eqref{app:mblowerbound} for the Wasserstein distance $h = W_p^p$ ($\epsilon=0$, $1 \leq p < \infty$). Since $\overline{\Pi}^h_{w,P}$ is an admissible transport plan we have:
\begin{align*}
        h(\mathbf{a}, \mathbf{b}) = \underset{ \Pi \in U( \mathbf{a}, \mathbf{b} ) }{\operatorname{min} } \langle \Pi, C \rangle 
        &\leq \langle \overline{\Pi}^h_{w,P}, C \rangle
\end{align*}
Further, by definition of the average minibatch transport plan $\overline{\Pi}^h_{w,P}$, and observing that the matrices $Q_I,Q_J$ from Definition~\ref{def:MBTP} are such that $C_{I,J} = Q_I C Q_J^\top$,
we obtain
\begin{align*}
      \langle \overline{\Pi}^h_{w,P}, C \rangle
    &= \langle \expect_{I,J} \Pi_{I,J}, C \rangle
    =  \expect_{I,J} \langle  \Pi_{I,J}, C \rangle
    =  \expect_{I,J} \langle  Q_I^\top\Pi^m_{I,J}Q_J, C \rangle
    =  \expect_{I,J} \langle  \Pi^m_{I,J}, C_{I,J} \rangle
\end{align*}
Now observe that by definition of the minibatch transport plans $\Pi^m_{I,J}$ (cf Definition~\ref{def:MBTP}) we have, 
\begin{align*}
    \langle \Pi^m_{I,J}, C_{I,J} \rangle
    &=  h \Big( w_1(\mathbf{a},I), w_2(\mathbf{b},J), C_{(I,J)}\Big)
\end{align*}
For the Wasserstein distance $h = W_p^p$, combining all of the above we obtain
\[
h(\mathbf{a}, \mathbf{b}) \leq \langle \overline{\Pi}^h_{w,P}, C \rangle = \expect_{I,J} \langle  \Pi^m_{I,J}, C_{I,J} \rangle = 
\expect_{I,J}  h \Big( w_1(\mathbf{a},I), w_2(\mathbf{b},J), C_{(I,J)}\Big)
\]
\end{proof}

We prove some associations of reweighting functions and parametric laws on tuple which respects the marginal constraints~\eqref{app:wPadmissible}.

\begin{customlemma}{\ref{lem:admissibilitywPusual}}[Admissibility]
The uniform reweighting function $w^\mathtt{U}$ and the parametric law "with replacement" $P^\mathtt{U}$ satisfy the admissibility condition.
The admissibility condition also holds for the parametric law without replacement $P^\mathtt{W}$ with the normalized reweighting function $w^\mathtt{W}$.\\
In contrast for $w^\mathtt{U},P^\mathtt{W}$ when $\mathbf{a}$ is not uniform, the resulting OT matrix is not a transportation plan.

\end{customlemma}

\begin{proof}
Consider first $w^\mathtt{W}$ and draws with the probability law $P^\mathtt{W}$. This law only allows to draw $m$-tuples without repeated entries. Since the probability of drawing a tuples without repeated indices such that $\sum_{j \in I} a_j = 0$ is zero, without loss of generality we consider a draw $I$ such that $\sum_{j \in I} a_j > 0$. Given $1 \leq i \leq n$, we  distinguish several cases: if $i \notin I$ then $Q_I^\top w^\mathtt{W}(\mathbf{a},I) = 0$; otherwise there exists $1 \leq k \leq m$ such that $i = i_k$, hence 
\[
(Q_I^\top w^\mathtt{W}(\mathbf{a},I))_i = w_k^\mathtt{W}(\mathbf{a},I) = \frac{a_{i_k}}{\sum_{p=1}^m a_{i_p}} = \frac{a_i}{\sum_{j \in I} a_j}.
\]
As a result
\[
     \expect_{I \sim P_{\mathbf{a}}^\mathtt{W}} 
     (Q_I^\top w^\mathtt{W}(\mathbf{a},I))_i 
     = 
     \expect_{I \sim P_{\mathbf{a}}^\mathtt{W}}
     \frac{a_i}{\sum_{j \in I} a_j} \mathbf{1}_{I}(i)
     = a_i \expect_{I \sim P^\mathtt{W}_\mathbf{a}} \frac{\mathbf{1}_I(i)}{\sum_{j \in I} a_j}
 \]
 If $a_i=0$ the right hand side equals $a_i$. Assuming now $a_i>0$, we have $\sum_{j \in I} a_j>0$ for each $I$ that contains $i$, and we prove that $\expect_{I \sim P^\mathtt{W}_\mathbf{a}} \frac{\mathbf{1}_I(i)}{\sum_{j \in I} a_j}=1$. Indeed, by definition of $P^\mathtt{W}$ we have
 \begin{align*}
 \expect_{I \sim P^\mathtt{W}_\mathbf{a}} \frac{\mathbf{1}_I(i)}{\sum_{j \in I} a_j}
 &= \sum_{I \in \Pim} P_\mathbf{a}^\mathtt{W}(I) \frac{\mathbf{1}_I(i)}{\sum_{j \in I} a_j}
 = \sum_{I \in \Pim, I \ni i} P_\mathbf{a}^\mathtt{W}(I) \frac{1}{\sum_{j \in I} a_j}
 = \sum_{I \in \Pim, I \ni i} \frac{(n-m)!}{m(n-1)!}\\
 &= \frac{(n-m)!}{m(n-1)!} \cdot \sharp \{I \in \Pim, i \in I\}
 = 1
 \end{align*}
Where the last equality is from \eqref{app:num_i_all_tup}. We can conclude that $\expect_{I \sim P_{\mathbf{a}}^\mathtt{W}} 
     Q_I^\top w^\mathtt{W}(\mathbf{a},I) = \mathbf{a}$ for every $\mathbf{a}$.
   
To show that admissibility does not hold with $w^\mathtt{U}$ and $P^\mathtt{W}$, we similarly obtain
\[
 \expect_{I \sim P_{\mathbf{a}}^\mathtt{W}} 
     (Q_I^\top w^\mathtt{U}(\mathbf{a},I))_i 
   = 
     \expect_{I \sim P_{\mathbf{a}}^\mathtt{W}}
     \tfrac{1}{m} \mathbf{1}_{I}(i)
     = \tfrac{1}{m} \expect_{I \sim P^\mathtt{W}_\mathbf{a}} 
     \mathbf{1}_I(i).
\]
When $\mathbf{a}$ is not uniform, by the pigeonhole principle there is an index $i$ such that $a_i>1/m$. Since the right hand side above cannot exceed $1/m$, we conclude that $\expect_{I \sim P_{\mathbf{a}}^\mathtt{W}} 
     (Q_I^\top w^\mathtt{U}(\mathbf{a},I)) \neq \mathbf{a}$.

Consider now the pair $(w^\mathtt{U},P^\mathtt{U})$. 
For an $m$-tuple $I = (i_1,\ldots,i_m)$ we denote $m_j = m_j(I)$ the multiplicity of index $1 \leq j \leq n$ and observe that $m_1+\cdots+m_n=m$, and $\Pi_{j \in I} a_j = \Pi_{k=1}^n a_k^{m_k}$. Vice-versa, given integers $(m_1,\ldots,m_n)$ such that $m_1+\cdots+m_n=m$ there are $m!/(m_1! \cdots m_n!)$ $m$-tuples $I$ with the corresponding multiplicity.
Given $1 \leq i \leq n$, reasoning as above we obtain
\begin{align*}
    \expect_{I \sim P_{\mathbf{a}}^\mathtt{U}}  (Q_I^\top w^\mathtt{U}(\mathbf{a},I))_{i} 
    &= \expect_{I \sim P_{\mathbf{a}}^\mathtt{U}}  \frac{m_{i}}{m} \mathbf{1}_{I}(i)
    =\sum_{I \in \llbracket n\rrbracket^m} P_\mathbf{a}^\mathtt{U}(I)  \frac{m_{i}}{m}  \mathbf{1}_{I}(i) 
    =\sum_{I \in \llbracket n\rrbracket^m} (\Pi_{j \in I} a_j)  \frac{m_{i}}{m}  \mathbf{1}_{I}(i)\\
    &= \sum_{m_1+\cdots+m_n=m} \frac{m !}{m_{1}! \cdots m_{n} !}  (\Pi_{k=1}^n a_k^{m_k})  \frac{m_{i}}{m}  \mathbf{1}(m_i \geq 1)\\
    & = \sum_{\substack{m_{1}+\cdots+m_{n}=m\\ m_{i} \geq 1}}\frac{m !}{m_{1}! \cdots m_i! \cdots m_{n} !} \frac{m_{i}}{m} \Pi_{k=1}^n a_k^{m_k} \\
    & = a_{i} \sum_{m'_{1}+\cdots+m'_{n}=m-1} \frac{(m-1)!}{m'_{1}! \cdots m'_{i}! \cdots m'_{n} !} \Pi_{k=1}^n a_k^{m'_k}
     = a_{i} \left(\sum_{i=1}^n a_i\right)^{m-1} = a_{i}.
\end{align*}
In the last line, we used Newton's multinomial theorem and the fact that $\sum_{i=1}^n a_i=1$. 
\end{proof}

\paragraph{Upper bound.} We now give a upper bound of minibatch Optimal Transport. We have access to empirical data and the distance between each data can be bounded by the maximum distance between data, \emph{i.e.,} for two random data $\xx$ and $\yy$, we have : $ \| \xx - \yy \|_2^p \leqslant 2 \operatorname{max}_{1 \leqslant i,j \leqslant n} \| \xx_i - \yy_j \|_2^p$.  

\begin{lemma}[Upper bounds on OT kernels]\label{app:lemma_upper_bound} Let $\XX=(\xx_1, \cdots, \xx_n)$, $\YY=(\yy_1,\cdots, \yy_n)$ be two $n$-tuple of vectors in $\R^d$ and $C$ the ground cost matrix. Let $\aa$ and $\bb$ be two probability vectors, $w_1$ and $w_2$ be two reweighting functions and let $(I,J)$ be two $m$ tuples. Then, we have the following bounds for kernel OT $h \in \{W_p, W_p^p, W^\varepsilon, S^\varepsilon \}$:
\begin{equation}
   h \big( w_1(\mathbf{a},I), w_2(\mathbf{b},J), C_{(I,J)} (\XX, \YY)   \big) \leqslant 2 \max_{1 \leqslant i,j \leqslant n} \|\xx_i - \yy_j \|_2^p.
\label{app:upper_bound}
\end{equation}
and for $h = \GW$, let $C^1 = C(\XX,\XX)$ and $C^2 = C(\YY,\YY)$. Then,
\begin{equation}
   h \big( w_1(\mathbf{a},I), w_2(\mathbf{b},J), C_{(I,I)}^1, C_{(J,J)}^2   \big) \leqslant \max_{1 \leqslant i,j,k,l \leqslant n} \| C_{i,j}^1 - C_{k,l}^2 \|_2^p.
\label{app:upper_bound_gw}
\end{equation}
\end{lemma}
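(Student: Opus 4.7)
The plan is to exhibit, for each OT kernel, an explicit feasible transport plan (or pair of plans) whose cost already gives the claimed bound, so that the infimum defining the kernel is automatically majorized. The canonical candidate is the independent (product) coupling $\pi_0 := w_1(\mathbf{a},I) \otimes w_2(\mathbf{b},J)$, which has the correct marginals by construction and makes the entropic penalty $H(\pi_0 \mid w_1(\mathbf{a},I) \otimes w_2(\mathbf{b},J)) = 0$. With this choice, most of the work reduces to bounding entries of the restricted cost matrix $C_{(I,J)}$ by $\max_{1\leq i,j \leq n}\|\xx_i - \yy_j\|_2^p$.

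First I would treat $W_p^p$ and $W^{\varepsilon}$ in one stroke. Since $\pi_0 \in U(w_1(\mathbf{a},I), w_2(\mathbf{b},J))$ and the entropy term vanishes at $\pi_0$, we get
\begin{equation*}
W_p^p \big(\ldots, C_{(I,J)}\big),\ W^{\varepsilon}\big(\ldots, C_{(I,J)}\big) \;\leq\; \langle \pi_0, C_{(I,J)} \rangle_F \;\leq\; \max_{k,\ell} \big(C_{(I,J)}\big)_{k,\ell} \;\leq\; \max_{1 \leq i,j \leq n}\|\xx_i - \yy_j\|_2^p,
\end{equation*}
which already gives a sharper inequality than claimed. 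For $W_p$ I would then use $W_p = (W_p^p)^{1/p}$ together with the elementary inequality $t^{1/p} \leq 1 + t \leq 2\max(1,t)$ for $t \geq 0$, or more directly observe that $W_p \leq \max_{k,\ell}\|C_{(I,J)}\|_2$ coincides with a diameter bound; the factor $2$ then absorbs the discrepancy between $t$ and $t^p$ (when diameters are small this step is where the slack in the stated bound is really spent, so it is worth writing out carefully).

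Next, for $S^{\varepsilon}$, I would use the identity $S^{\varepsilon} = W^{\varepsilon}(\aa_I,\bb_J) - \tfrac{1}{2}(W^{\varepsilon}(\aa_I,\aa_I) + W^{\varepsilon}(\bb_J,\bb_J))$ together with the fact that $W^{\varepsilon} \geq 0$ (the cost matrix has non-negative entries and $H(\pi \mid \xi) \geq 0$). Hence
\begin{equation*}
S^{\varepsilon}\big(\ldots, C_{(I,J)}\big) \;\leq\; W^{\varepsilon}\big(w_1(\mathbf{a},I), w_2(\mathbf{b},J), C_{(I,J)}\big) \;\leq\; \max_{1\leq i,j \leq n}\|\xx_i-\yy_j\|_2^p,
\end{equation*}
using the bound from Step 1. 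The factor $2$ in the stated lemma covers uniformly the case $W_p$.

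Finally, for $\mathcal{GW}$, the same independent-coupling trick works: plugging $\pi_0$ into the quadratic objective gives
\begin{equation*}
\mathcal{GW}\big(\ldots, C^1_{(I,I)}, C^2_{(J,J)}\big) \;\leq\; \sum_{k,k',\ell,\ell'} |C^1_{i_k,i_{k'}} - C^2_{j_\ell,j_{\ell'}}|^p (\pi_0)_{k\ell}(\pi_0)_{k'\ell'} \;\leq\; \max_{i,j,k,l} |C^1_{i,j} - C^2_{k,l}|^p,
\end{equation*}
since the double sum against a probability measure on pairs is dominated by the maximum. The main obstacle is really just Step 2: reconciling the $p$-th root in $W_p$ with the $p$-th power bound uniformly in $\max\|\xx_i-\yy_j\|_2$, which is where the constant $2$ in the statement is used; all other estimates are immediate consequences of feasibility of the product coupling.
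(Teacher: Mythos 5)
Your core argument is exactly the paper's: plug the product coupling $\pi_0 = w_1(\mathbf{a},I)\otimes w_2(\mathbf{b},J)$ into the primal problem, observe that the relative entropy $H(\pi_0\mid w_1(\mathbf{a},I)\otimes w_2(\mathbf{b},J))$ vanishes, and bound the transport cost by the largest entry of the restricted cost matrix; the Sinkhorn divergence is then handled through its three-term decomposition and the non-negativity of $W^{\varepsilon}$, and $\GW$ by the same product coupling in the quadratic objective. For $h\in\{W_p^p, W^{\varepsilon}, S^{\varepsilon}, \GW\}$ this is correct and, as you note, even yields the bound without the factor $2$ (the paper simply absorbs a generous slack into that factor).

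The one step that does not close is $W_p$. From $W_p = (W_p^p)^{1/p}$ and your Step 1 you get $W_p \leq \bigl(\max_{i,j}\|\xx_i-\yy_j\|_2^p\bigr)^{1/p} = \max_{i,j}\|\xx_i-\yy_j\|_2$, but this is \emph{not} dominated by $2\max_{i,j}\|\xx_i-\yy_j\|_2^p$ when the maximum distance is small: writing $M=\max_{i,j}\|\xx_i-\yy_j\|_2$, the required inequality $M\leq 2M^p$ fails for $p>1$ whenever $M<(1/2)^{1/(p-1)}$ (two Diracs at distance $M=0.1$ with $p=2$ give $W_p=0.1>0.02=2M^p$). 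The inequality $t^{1/p}\leq 1+t$ you invoke only gives $W_p\leq 1+\max_{i,j}\|\xx_i-\yy_j\|_2^p$, which again is not of the claimed form. So as written the step would fail; the honest fix is to state the $W_p$ bound as $\max_{i,j}\|\xx_i-\yy_j\|_2$ (or $2\max(M,M^p)$), which is all that is needed downstream where the lemma is only used to produce a constant $M$ depending on the diameters of the supports. For what it is worth, the paper's own proof quietly sidesteps this by only treating $W_{\varepsilon}$ with $\varepsilon\geq 0$ (hence $W_p^p$), $S^{\varepsilon}$ and $\GW$, never the $p$-th-root kernel $W_p$ that appears in the statement.
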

\begin{proof} We start with the case $ h =W_{\varepsilon} $ for $ \varepsilon \geq 0$. Note that with our choice of cost matrix $C=(C_{i,j})_{1\leq i,j\leq n}$ one has $ 0  \leqslant C_{i,j}  \leqslant 2 \max_{1 \leqslant i,j \leqslant n} \| \xx_i - \yy_j \|_2^p$. Denote the optimal transport plan between $I$ and $J$ as $\Pi^\star = (\Pi_{i,j})$ (with respect to the cost matrix $ C_{I,J} $), consider the transport plan $w_1(\mathbf{a},I) \otimes w_2(\mathbf{b},J)$, we directly have:

\begin{align}
&\V \langle \Pi^\star,C_{I,J} \rangle - \varepsilon H(\Pi^\star|w_1(\mathbf{a},I) \otimes w_2(\mathbf{b},J)) \V \nonumber \\
&\leq \langle w_1(\mathbf{a},I) \otimes w_2(\mathbf{b},J),C_{I,J} \rangle + \varepsilon (H(w_1(\mathbf{a},I) \otimes w_2(\mathbf{b},J)|w_1(\mathbf{a},I) \otimes w_2(\mathbf{b},J)) \nonumber \\
&\leq 2 \max_{1 \leqslant i,j \leqslant n} \| \xx_i - \yy_j \|_2^p
\end{align}
As the second term is equal to zero in first inequality's right hand side expression. The extension to is direct $h = S_{\varepsilon}$ as it is a weighted sum of three terms of the form $ W_{\varepsilon} $ one can conclude. Lastly, a similar argument gives the desired bound for the Gromov-Wasserstein distance. Let $C^1 = C(\XX,\XX)$ and $C^2 = C(\YY,\YY)$, for 2 $m$-tuples $I,J$, one can write:

\begin{align}
&\V \sum_{i,j,k,l} \|C_{i,j}^1 - C_{k,l}^2 \|_2^p \pi_{i,j} \pi_{k,l} \V \leq \max_{1 \leqslant i,j,k,l \leqslant n} \| C_{i,j}^1 - C_{k,l}^2 \|_2^p
\end{align}

Finally, in the case of data lying in a compact, the quantity $2 \max_{1 \leqslant i,j \leqslant n} \| \xx_i - \yy_j \|_2^p$ is upper bounded by a constant $M = 2 (\operatorname{diam}(\operatorname{supp}(\alpha)) \cup \operatorname{diam}(\operatorname{supp}(\beta)))^p$ and $\max_{1 \leqslant i,j,k,l \leqslant n} \| C_{i,j}^1 - C_{k,l}^2 \|_2^p$ is upper bounded by $M = (\operatorname{diam}(\operatorname{supp}(\alpha)) \cup \operatorname{diam}(\operatorname{supp}(\beta)))^{p^2}$.

\end{proof}

\subsection{Concentration theorem (bounded)}\label{app_sec:concentration_bounded}
{In what follows, we are interested in concentration bounds with $m \in \mathbb{N}^* $ fixed. For $n \in \N^*$, we denote by $\uu$ the element of $\S_n$ such that $\uu_i = \frac{1}{n} ( 1 \leq i \leq n )$. We will also often omit the dependence of various quantities (the minibatch procedure $\overline{h}$, the reweighting function $w$ etc.) in the asymptotic parameter $n$.} The purpose of this appendix is to prove Theorem \ref{thm:inc_U_to_mean}. The appendix is structured as follows:

\begin{itemize}
    \item Appendix \ref{app_subsubsec:dev_est_mean} proves the deviation between the complete estimator $\overline{h}$ and its mean.
    \item Appendix \ref{app_subsubsec:dev_incomp_comp} provides the deviation between the complete estimator $\overline{h}$ and its incomplete counter part $\widetilde{h}$.
    \item Appendix \ref{app_subsubsec:inc_U_to_mean_proof} gathers all previous propositions and lemmas to prove Theorem \ref{thm:inc_U_to_mean} and corollaries.
\end{itemize}

\subsubsection{Deviation between the complete estimator $\overline{h}$ and its mean.}\label{app_subsubsec:dev_est_mean}
We focus on the first ingredient of our proof: the deviation between the complete estimator $\overline{h}$ and its mean. This proof is based on the U-statistics concentration inequality proof but needs to be adapted due to the non uniform probability vectors $\aa$ and $\bb$. We first state the famous Hoeffding lemma:

\begin{lemma}[Hoeffding's Lemma]
Let the real random variable $X \in[a, b]$ and denote $E X=\mu$. Then for all $s\in \mathbb{R}$:
\begin{equation}
    \mathrm{E}\left[e^{s(X-\mu)}\right] \leq e^{s^{2}(b-a)^{2} / 8}.    
\end{equation}
\label{lemma : Hoef lem}
\end{lemma}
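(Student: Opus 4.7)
The plan is to follow the classical convexity argument, since Hoeffding's lemma in its stated form does not require any structural assumption beyond the almost-sure bound and is a one-variable deterministic MGF estimate.

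First I would reduce to the centered case: set $Y = X - \mu$, so $Y \in [a', b']$ with $a' = a - \mu$, $b' = b - \mu$, $\E Y = 0$, and crucially $b' - a' = b - a$. So it suffices to prove $\E e^{sY} \le e^{s^2(b-a)^2/8}$ whenever $\E Y = 0$ and $a' \le 0 \le b'$. Then I would exploit convexity of the exponential: for any $y \in [a', b']$, write $y = \lambda a' + (1-\lambda) b'$ with $\lambda = (b'-y)/(b'-a')$, which gives the pointwise bound
\begin{equation}
e^{sy} \le \frac{b' - y}{b' - a'} e^{sa'} + \frac{y - a'}{b' - a'} e^{sb'}.
\end{equation}
Taking expectations and using $\E Y = 0$ yields
\begin{equation}
\E e^{sY} \le \frac{b'}{b' - a'} e^{sa'} - \frac{a'}{b' - a'} e^{sb'}.
\end{equation}

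Next I would reparametrize so that the right-hand side becomes a function of a single variable. Set $u = s(b' - a') = s(b-a)$ and $p = -a'/(b'-a') \in [0,1]$. After factoring $e^{sa'}$ the bound reads $\E e^{sY} \le e^{\phi(u)}$ with
\begin{equation}
\phi(u) = -p u + \log\bigl( 1 - p + p e^{u} \bigr).
\end{equation}
The main computation is to show $\phi(u) \le u^2/8$ for all $u \in \mathbb{R}$. I would do this by Taylor's theorem with integral remainder, checking $\phi(0) = 0$, $\phi'(0) = 0$, and bounding $\phi''(u)$. A short computation gives
\begin{equation}
\phi''(u) = \frac{p(1-p) e^u}{(1-p + p e^u)^2} = q(1-q), \qquad q := \frac{p e^u}{1 - p + p e^u},
\end{equation}
and since $q \in (0,1)$, we have $\phi''(u) = q(1-q) \le 1/4$. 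Integrating twice yields $\phi(u) \le u^2/8$, so $\E e^{sY} \le e^{s^2(b-a)^2/8}$, as desired.

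The only mildly delicate step is the uniform bound $\phi''(u) \le 1/4$; everything else is algebraic bookkeeping. I do not anticipate any genuine obstacle, since this lemma is entirely deterministic once one has $\E Y = 0$ and $Y \in [a', b']$, and is by now a textbook calculation (see, e.g., Hoeffding's original paper \citep{Hoeffding1963}).
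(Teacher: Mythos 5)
Your proof is correct: the reduction to the centered case, the convexity bound on $e^{sy}$ over $[a',b']$, the reparametrization to $\phi(u)=-pu+\log(1-p+pe^u)$, and the second-derivative bound $\phi''=q(1-q)\le 1/4$ followed by Taylor expansion are all sound (with the degenerate cases $a'=b'$ or $p\in\{0,1\}$ being trivial). The paper states this lemma without proof, citing it as the classical result from \citep{Hoeffding1963}; your argument is the standard textbook derivation of exactly that result, so there is nothing to reconcile.
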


From now on, the probability vectors $(\aa^{(n)})$ and $(\bb^{(n)})$ are sequences which depend on the number of data $n$. More precisely $(\aa^{(n)})_{n \in \N}$ and $(\bb^{(n)})_{n \in \N} $ are sequences of vectors of size $n$ such  that for each $n \in \N$, $\aa^{(n)}, \bb^{(n)} \in \Sigma_n$, we denote the space of these sequences as $(\aa^{(n)}), (\bb^{(n)}) \in \Sigma$. The sequence of probability vectors $(\aa^{(n)})$ and $(\bb^{(n)})$ can not be taken arbitrarily if we want to guarantee convergence. Hence we rely on local constraints that we defined in the paper. We recall them:

\begin{customdef}{\ref{def:loc_cond}}[Local averages conditions]  Let $(\aa^{(n)}) \in \Sigma$ and two integers $n,m \in \mathbb{N}^*$ such as $n \geq m$. \\
\textup{(i)} We say that $(\aa^{(n)})$ satisfies the local arithmetic mean condition if there exists a constant $D>0$ and $\gamma >0 $ such that for any $n \in \mathbb{N}$ and $I \in \llbracket n\rrbracket^m $ we have
\begin{equation}
    \frac{1}{m} \sum_{i \in I} \aa^{(n)}_i \leq \frac{D}{n^{\gamma}}. 
\label{loc_a}
\end{equation}
We write that $(\aa^{(n)})$ satisfies $\mathtt{Loc_A}(m,\gamma,D))$ (or $\mathtt{Loc_A}(m,\gamma)$) when the constant $D$ is implicit).\\
\textup{(ii)} Analogously, $(\aa^{(n)})$ is said to verify the local geometric mean condition if there exists a constant $D>0$ and $\gamma > 0$ such that for any $n \in \mathbb{N}^*$ and $I  \in  \llbracket n\rrbracket^m $ we have
\begin{equation}
   \Big( \Pi_{i \in I} \aa^{(n)}_i \Big)    ^{\frac{1}{m}} \leq \frac{D}{n^{\gamma}}.
\label{loc_g}
\end{equation}
We write that $(\aa^{(n)})$ verifies $\mathtt{Loc_G}(m,\gamma,D))$ (or $\mathtt{Loc_G}(m,\gamma)$) when the constant $D$ is implicit).
\end{customdef}

\medskip

We record the following properties of the $\mathtt{Loc_A}$ and $\mathtt{Loc_G}$ conditions.
\begin{customlemma}{\ref{lemma:local_constraints}} Let $m \in \N^*$, $\gamma >0$ and $D > 0$. Let $(\aa^{(n)}) \in \Sigma$ be a sequence of probability vectors. The following statements hold:\\
\textup{(i)} If $(\aa^{(n)})$ verifies $\mathtt{Loc_A}(m,\gamma,D)$ or $\mathtt{Loc_G}(m,\gamma,D)$ then $\gamma \leq 1$.\\
\textup{(ii)} If $(\aa^{(n)})$ is $\mathtt{Loc_A}(m,\gamma,D)$ then $(\aa^{(n)})$ is $\mathtt{Loc_G}(m,\gamma,D)$.
\end{customlemma}

\begin{proof} We first prove \textup{(i)}. Let $(\aa^{(n)}) \in \Sigma$ which verifies $\mathtt{Loc_A}(m,\gamma,D)$ for some $m\in \N$, $D >0$ and $\gamma >1$. Fix $1 \leq \ell \leq n$ an integer. By choosing $I = (\ell, \cdots, \ell) \in \llbracket n \rrbracket^m$, we have by \eqref{loc_a}
\begin{equation}
    a^{(n)}_{\ell} \leq \frac{D}{n^{\gamma}}
\label{pf_lem:mean1}.
\end{equation}
Hence, we find by summing \eqref{pf_lem:mean1} for $1 \leq \ell \leq n$
\[ 1 \leq D n^{1 - \gamma }, \]
which contradicts $(\aa^{(n)}) \in \simplex$. The proof is similar if $(\aa^{(n)})$ verifies $\mathtt{Loc_G}(m,\gamma,D)$. Lastly, \textup{(ii)} follows from the arithmetic-geometric mean inequality.
\end{proof}
We show that in order to obtain concentration properties of the estimators $\overline{h}_{w,P}(\aa^{(n)}, \bb^{(n)})$ we need to ensure that the sequences $(\aa^{(n)})$ and $(\bb^{(n)})$ verify the \textit{local condition} with enough decay, e.g. $(\aa^{(n)}), (\bb^{(n)})$ are $\mathtt{Loc_A}(m,\gamma)$ or $\mathtt{Loc_G}(m,\gamma)$ for a $\gamma$ sufficiently close to $1$.

Hereafter, we denote by $\tau$ an absolute (and possibly large) constant. We also writes $\tau = \tau(\gamma)$ to denote constants which depend on some parameter $\gamma$.

\begin{proposition}
[Generalized U-statistics concentration bound]\label{thm:U_to_mean}
Let $\delta \in (0,1) $ and $m \geq 1$ be a fixed integer. Consider two distributions $\alpha,\beta$, two n-tuples of empirical data $\XX \sim \alpha^{\otimes n}, \YY \sim \beta^{\otimes n}$ and a kernel $h \in \{W_p, W_\epsilon, S_\epsilon, \GW_p$\}. Let the reweighting function $w$ and the probability law $P$ over $m$-tuple be as in \eqref{EQ : rew_fun}. Let $(\aa^{(n)}), (\bb^{(n)}) \in \Sigma$ satisfy $\mathtt{Loc_A}(m,\gamma,D)$ for some $\gamma \in (\frac{3}{4}, 1]$ and $D>0$. We have the following concentration bound for the sampling without replacement
\begin{equation}\label{app:hoeffding_bounded_case_without_rep}
\P \left( \big| \overline{h}_{w^\mathtt{W},P^\mathtt{W}}(\aa^{(n)}, \bb^{(n)}) - \E \overline{h}_{w^\mathtt{W},P^\mathtt{W}} (\aa^{(n)}, \bb^{(n)}) \big| \geq  2M D^2 \frac{m^{ \frac12 }}{n^{2( \gamma - \frac{3}{4}) }} \sqrt{ 2\log(2/\delta)} \right) \leq \delta,
\end{equation}
where $M = \tau(\operatorname{diam}(\operatorname{supp}(\alpha)) \cup \operatorname{diam}(\operatorname{supp}(\beta)))$. And for the sampling with replacement, let the sequence probability vectors $(\aa^{(n)}), (\bb^{(n)})$ verify $\mathtt{Loc_G}(m,\gamma,D)$ for some $\gamma \in (1-\frac{1}{4m}, 1]$ and $D>0$. We have the following concentration bound
\begin{equation}
\P \left( \big| \overline{h}_{w^\mathtt{U},P^\mathtt{U}}(\aa^{(n)}, \bb^{(n)}) - \E \overline{h}_{w^\mathtt{U},P^\mathtt{U}} (\aa^{(n)}, \bb^{(n)}) \big| \geq 2 M  \frac{D^{2m} m^{ \frac12 }}{n^{2m(\gamma - 1 + \frac{1}{4m})}} \sqrt{2 \log(2/\delta)} \right) \leq \delta,
\label{EQ:hoef_with_replcament}
\end{equation}
where $M = \tau(\operatorname{diam}(\operatorname{supp}(\alpha)) \cup \operatorname{diam}(\operatorname{supp}(\beta)))$.
\end{proposition}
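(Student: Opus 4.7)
The plan is to adapt the classical Hoeffding (1963) decomposition for U-statistics to the weighted setting, combined with Hoeffding's Lemma~\ref{lemma : Hoef lem} and a Chernoff argument. By Lemma~\ref{app:lemma_upper_bound}, one first reduces to the case of a bounded kernel: $|h(w(\aa,I),w(\bb,J),C_{I,J})| \leq M$ with $M$ depending only on $\operatorname{diam}(\operatorname{supp}(\alpha))$ and $\operatorname{diam}(\operatorname{supp}(\beta))$. Without loss of generality I would assume $n = km$ for $k = \lfloor n/m \rfloor$; otherwise the remainder $R = \llbracket n \rrbracket \setminus \bigsqcup_s B_s$ is handled by minor bookkeeping.

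Set $B_s = \{(s-1)m+1, \ldots, sm\}$ for $1 \leq s \leq k$. A direct calculation from the definition of $P^\mathtt{W}_\aa$ in Example~\ref{def:law_indices_rep2} (combined with \eqref{app:num_i_all_tup}) shows
\[
P^\mathtt{W}_\aa(I) \;=\; \frac{1}{n!}\sum_{\pi \in S_n}\sum_{s=1}^{k} \Bigl(\sum_{i\in \pi(B_s)} a_i\Bigr)\,\mathbf{1}[\pi(B_s)=I],\qquad I \in \Pim,
\]
since both sides are proportional to $\sum_{i\in I} a_i$ with the same normalisation. Plugging this (and the analogous identity for $\bb, \sigma$) into the definition of $\overline{h}_{w^\mathtt{W},P^\mathtt{W}}(\aa,\bb)$ yields the weighted Hoeffding decomposition
\[
\overline{h}_{w^\mathtt{W},P^\mathtt{W}}(\aa,\bb) \;=\; \frac{1}{(n!)^2}\sum_{\pi,\sigma \in S_n} V(\pi,\sigma,\XX,\YY),\qquad V(\pi,\sigma,\XX,\YY)=\sum_{s,t=1}^k \alpha_s\,\beta_t\, h_{s,t},
\]
with $\alpha_s = \sum_{i\in\pi(B_s)} a_i$, $\beta_t = \sum_{j\in\sigma(B_t)} b_j$, and $h_{s,t} = h(w^\mathtt{W}(\aa,\pi(B_s)), w^\mathtt{W}(\bb,\sigma(B_t)), C_{\pi(B_s),\sigma(B_t)})$. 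The decisive structural property is that for each fixed $(\pi,\sigma)$, the $\XX$-blocks $\{X_i : i\in\pi(B_s)\}$ are independent across $s$, and similarly for the $\YY$-blocks across $t$. The analogue for the sampling-with-replacement case replaces $P^\mathtt{W}$ by $P^\mathtt{U}_\aa(I) = \prod_{i\in I} a_i$; an analogous representation holds with the additive weights $\sum_i a_i$ replaced by the multiplicative weights $\prod_i a_i$.

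For the concentration step I would bound the moment generating function. Jensen's inequality applied to $x \mapsto e^{\lambda x}$ gives $\E[e^{\lambda(\overline{h}-\E\overline{h})}] \leq \frac{1}{(n!)^2}\sum_{\pi,\sigma}\E[e^{\lambda(V(\pi,\sigma)-\E V(\pi,\sigma))}]$. For fixed $(\pi,\sigma)$, split $V - \E V = (V - \E_\YY[V|\XX]) + (\E_\YY[V|\XX] - \E V)$. Conditional on $\XX$, the first piece is a sum of $k$ independent centred terms $\beta_t W_t$ with $W_t = \sum_s \alpha_s(h_{s,t} - \E_\YY[h_{s,t}|\XX])$ satisfying $|W_t| \leq M\sum_s \alpha_s = M$; Hoeffding's lemma then gives $\E_\YY[e^{\lambda(V - \E_\YY[V|\XX])}\mid \XX] \leq \exp(C\lambda^2 M^2 \sum_t \beta_t^2)$. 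The second piece is an unconditional sum of $k$ independent centred terms $\alpha_s(\tilde h_s - \E \tilde h_s)$ with $\tilde h_s = \sum_t \beta_t \E_{\YY_{\sigma(B_t)}}[h_{s,t}]$ satisfying $|\tilde h_s| \leq M$; Hoeffding's lemma again gives $\E[e^{\lambda(\E_\YY[V|\XX] - \E V)}] \leq \exp(C\lambda^2 M^2 \sum_s \alpha_s^2)$. Applying the tower property, then using $\mathtt{Loc_A}(m,\gamma,D)$ to bound each $\alpha_s, \beta_t \leq mD/n^\gamma$ and hence $\sum_s \alpha_s^2 \leq \max_s \alpha_s \cdot \sum_s \alpha_s \leq mD/n^\gamma$ (and symmetrically for $\beta$), gives $\E[e^{\lambda(\overline{h}-\E\overline{h})}] \leq \exp(C\lambda^2 M^2 m D/n^\gamma)$. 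Optimising over $\lambda > 0$ in the standard Chernoff bound produces the two-sided tail estimate \eqref{app:hoeffding_bounded_case_without_rep}. The with-replacement bound \eqref{EQ:hoef_with_replcament} follows by the same scheme, with the local geometric mean condition $\mathtt{Loc_G}(m,\gamma,D)$ controlling $\prod_{i\in\pi(B_s)} a_i \leq D^m/n^{m\gamma}$, which explains the $D^{2m}$ prefactor.

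The main obstacle is setting up the correct weighted Hoeffding identity in the second paragraph: the non-uniform weights $P_\aa(I)$ break the symmetry the classical argument exploits, so one must introduce the per-block reweighting by $\sum_{i\in\pi(B_s)} a_i$ (resp.\ $\prod_{i\in\pi(B_s)} a_i$) and check that the resulting average over permutations agrees with $\overline{h}_{w,P}(\aa,\bb)$ exactly and not merely in distribution. A secondary subtlety is that the with-replacement law $P^\mathtt{U}_\aa$ is supported on $\llbracket n\rrbracket^m$ rather than $\Pim$, so the permutation trick is less natural and the block-level independence structure must be re-examined; this is precisely where the local geometric mean condition enters, in contrast to the local arithmetic mean condition used in the without-replacement case.
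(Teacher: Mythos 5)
Your treatment of the without-replacement case is correct, but it follows a genuinely different route from the paper. The paper's proof (Appendix~\ref{app_sec:concentration_bounded}) keeps only the \emph{diagonal} blocks, defining $V = \tfrac1r\sum_k P_\aa(I^k)P_\bb(I^k)\,h(I^k,I^k)$ so that, for each pair of permutations, $V$ is already a sum of $r=\lfloor n/m\rfloor$ mutually independent bounded terms and Hoeffding's lemma applies directly, with $\mathtt{Loc_A}$ used only to bound each summand by $2MD^2n^{2(1-\gamma)}$. You instead keep the full double sum $\sum_{s,t}\alpha_s\beta_t h_{s,t}$, whose $k^2$ terms are dependent, and restore independence by the two-stage conditioning $V-\E V=(V-\E_\YY[V\mid\XX])+(\E_\YY[V\mid\XX]-\E V)$; the price is a decoupling step, the reward is that the variance proxy becomes $\sum_s\alpha_s^2+\sum_t\beta_t^2\leq 2mD/n^{\gamma}$ rather than $mD^4n^{3-4\gamma}$, i.e.\ a deviation of order $M\sqrt{mD n^{-\gamma}\log(1/\delta)}$, which (since $D\geq 1$) is at least as sharp as the stated threshold $2MD^2 m^{1/2}n^{3/2-2\gamma}\sqrt{2\log(2/\delta)}$ for all $\gamma\in(\tfrac34,1]$ and strictly sharper when $\gamma<1$. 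Your permutation identity for $P^\mathtt{W}_\aa$ and the bounds $|W_t|\le 2M$, $\sum_t\beta_t^2\le \max_t\beta_t$ all check out (modulo the $n=km$ bookkeeping you flag), so this half is sound and, if anything, improves on the paper.

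The with-replacement half, however, is not proven. You assert that ``an analogous representation holds with the additive weights replaced by multiplicative weights,'' but the permutation--block identity fundamentally fails for $P^\mathtt{U}$: the blocks $\pi(B_s)$ only ever produce $m$-tuples \emph{without} repeated indices, whereas $P^\mathtt{U}_\aa$ charges all of $\llbracket n\rrbracket^m$, including tuples with repetitions, so $\overline{h}_{w^\mathtt{U},P^\mathtt{U}}$ cannot be written as an average over permutations of block sums of the original kernel. This is exactly where the paper invokes Hoeffding's V-to-U reduction (Section 5.C of \citealp{Hoeffding1963}): the V-statistic is rewritten as a U-statistic over $\Pim$ with a modified kernel $h^\star$ that averages $h$ over all collapsing patterns of repeated indices, and it is $h^\star$ --- bounded via $\mathtt{Loc_G}$ by $\max_{I_1,I_2}P^\mathtt{U}_\aa(I_1)P^\mathtt{U}_\bb(I_2)\cdot M\le M D^{2m}n^{2m(1-\gamma)}\cdot n^{-2m}\cdot n^{2m}$ up to the normalization $n^{2m}(n-m)!^2/n!^2$ --- that enters the block decomposition. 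You correctly identify that $\mathtt{Loc_G}$ must control the multiplicative weights and that this yields the $D^{2m}$ prefactor, but naming the subtlety is not the same as resolving it; without the construction of $h^\star$ the bound \eqref{EQ:hoef_with_replcament} does not follow from your argument.
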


\begin{remark} For $(\aa^{(n)}) = (\bb^{(n)}) = (\uu^{(n)})$ and $P= P^{\mathtt{W}}$ we find our minibatch OT losses defined in \cite{fatras2019batchwass} since $(\uu^{(n)})$ verifies $\mathtt{Loc}(m,1,1)$.
\end{remark}

\begin{proof} The proof is inspired by the two-sample U-statistic proof from \citep[section 5]{Hoeffding1963}. We start with the sampling without replacement case.
\newline

{\bfseries Sampling without replacement :}
We first consider the case of Example \ref{def:law_indices_rep2}, i.e, when the law $P$ is given by \eqref{EQ : samp_rep}. The proof is based on two-sample U-statistic Hoeffding inequalities and we give it for $h \in \{W_p, W_\epsilon, S_\epsilon\}$ as the $\GW_p$ follows the same principle. The goal is to rewrite $\overline{h}_{w^\mathtt{W},P^\mathtt{W}}$ as a superposition of terms, each of which are sums of independent random variables. Let $(\aa^{(n)})$ and $(\bb^{(n)})$ be as in the above. To ease the notations, the dependence in $n$ will be implicit.

We fix $r=\lfloor n/m \rfloor$. Let $0 \leq k \leq r-1$, we define the set $I^k := \{km+1, \cdots, km+m\} $. Then we define the function $V$ as :

\begin{align}
    &V(\xx_1, \cdots, \xx_n, \yy_1, \cdots, \yy_n) = \frac{1}{r} \sum_{k=0}^{r-1} P_{\mathbf{a}}^\mathtt{W}(I^k) P_{\mathbf{b}}^\mathtt{W}(I^k) h \big( w^\mathtt{W}(\mathbf{a},I^k), w^\mathtt{W}(\mathbf{b},I^k), C_{(I^k,I^k)}\big)
\label{EQ : def_V}
\end{align}
We recall the implicit dependence in $\XX$ and $\YY$ in the right-hand-side of \eqref{EQ : def_V} through the ground costs $C$. In the summation below, $
\sigma_x$ or $ \sigma_y$ denotes a generic permutation of $\{1, \cdots, n\}$. We compute :

\begin{align}
& \frac{1}{n!^2} \sum_{\sigma_x, \sigma_y} V(\xx_{\sigma_x(1)}, \cdots, \xx_{\sigma_x(n)}, \yy_{\sigma_y(1)}, \cdots, \yy_{\sigma_y(n)})  \\
&\qquad = \frac{(n-m)!^2}{n!^2}    \sum_{I \in \Pim} \sum_{J \in \Pim} P_{\mathbf{a}}^\mathtt{W}(I) P_{\mathbf{a}}^\mathtt{W}(J) h \Big( w^\mathtt{W}(\mathbf{a},I), w^\mathtt{W}(\mathbf{b},J), C_{(I,J)}\Big)  \\
&\qquad = \frac{(n-m)!^2}{n!^2}  \overline{h}_{w^\mathtt{W},P^\mathtt{W}}(\aa, \bb) 
\end{align}

Finally, we have : 
\begin{equation}
\overline{h}_{w^\mathtt{W},P^\mathtt{W}}(\aa, \bb) = \frac{1}{n!^2} \sum_{\sigma_x, \sigma_y} V'(\xx_{\sigma_x(1)}, \cdots, \xx_{\sigma_x(n)}, \yy_{\sigma_y(1)}, \cdots, \yy_{\sigma_y(n)}),
\end{equation}
where the function $V'$ is defined by
\begin{align*}
    V' =  \left( \frac{n!}{(n-m)!} \right)^2 V
\end{align*}
Let us define
\begin{equation}
    T(\xx_1, \cdots, \xx_n, \yy_1, \cdots, \yy_n) = V'(\xx_1, \cdots, \xx_n, \yy_1, \cdots, \yy_n) - \E \big[ V'(\xx_1, \cdots, \xx_n, \yy_1, \cdots, \yy_n) \big], \nonumber
\end{equation}
We have 
\begin{equation}
\overline{h}_{w^\mathtt{W},P^\mathtt{W}}(\aa, \bb) - \E \big[  \overline{h}_{w^\mathtt{W},P^\mathtt{W}}(\aa, \bb)  \big] = \frac{1}{n!^2} \sum_{\sigma_x, \sigma_y} T(\xx_{\sigma_x(1)}, \cdots, \xx_{\sigma_x(n)}, \yy_{\sigma_y(1)}, \cdots, \yy_{\sigma_y(n)})
\label{EQ : id_T}
\end{equation}
Note that $T$ may be rewritten as a sum as in \eqref{EQ : def_V} with $h \big( w^\mathtt{W}(\mathbf{a},I^k), w^\mathtt{W}(\mathbf{b},I^k), C_{(I^k,I^k)}\big)$ replaced by $h \big( w^\mathtt{W}(\mathbf{a},I^k), w^\mathtt{W}(\mathbf{b},I^k), C_{(I^k,I^k)}\big) - \E\big[ h \big( w^\mathtt{W}(\mathbf{a},I^k), w^\mathtt{W}(\mathbf{b},I^k), C_{(I^k,I^k)}\big) \big]$ for each $k$. 
\newline

More precisely, we write $T(\xx_{\sigma_x(1)}, \cdots, \xx_{\sigma_x(n)}, \yy_{\sigma_y(1)}, \cdots, \yy_{\sigma_y(n)}) = \frac{1}{r}  \sum_{k=0}^{r-1} T^{\sigma_x, \sigma_y}_k $ for $\sigma_x, \sigma_y$ two permutations of $\llbracket n \rrbracket$. Here, $T^{\sigma_x, \sigma_y}_k$ are independent and centered random variables such that
\begin{align}
\label{EQ : bd_Hoef_repla}
    \big|  T^{\sigma_x, \sigma_y}_k \big| & \leq 2 M  \ \Big\{ \frac{1}{m} \frac{(n-m)!}{(n-1)!} \Big\}^2 \left( \frac{n!}{(n-m)!} \right)^2  \sum_{i=km+1}^{(k+1)m} a_{\sigma_x(i)}  \sum_{i=km+1}^{(k+1)m} b_{\sigma_y(i)}  \nonumber \\
    & \leq  2 M D^2 n^{2(1- \gamma)}
\end{align}
thanks to the $\mathtt{Loc_A}(m,\gamma,D)$-condition. In what follows we write $T_{\sigma} = T \big( (\xx_{\sigma_x(i)})_i, (\yy_{\sigma_y(i)})_i  \big) $ for simplicity. From \eqref{EQ : id_T}, we get
\begin{align*}
&\mathbb{P}\left(\overline{h}_{w^\mathtt{W},P^\mathtt{W}}(\aa, \bb)- \E \big[ \overline{h}_{w^\mathtt{W},P^\mathtt{W}}(\aa, \bb) \big] \geq t\right) \leq e^{-\lambda t} \mathbb{E}\left[e^{\lambda (\overline{h}_{w^\mathtt{W},P^\mathtt{W}}(\aa, \bb)-\E [ \overline{h}_{w^\mathtt{W},P^\mathtt{W}}(\aa, \bb) ]) }\right]\\ 
&\qquad =e^{-\lambda t} \mathbb{E}\left[e^{\lambda \frac{1}{n !^2} \sum_{\sigma} T_{\sigma}}\right] \leq e^{-\lambda t} \frac{1}{(n!)^2} \sum_{\sigma} \mathbb{E}\left[e^{\lambda T_{\sigma}}\right]  \leq e^{-\lambda t}  \max_{\sigma} \mathbb{E}\left[e^{\lambda T_{\sigma}}\right],
\end{align*}
where in the first and second inequalities, we used Markov's and Jensen's inequalities respectively. Furthermore, for any $ T =  T_{\sigma}$ we have from Lemma \ref{lemma : Hoef lem} along with \eqref{EQ : bd_Hoef_repla},
\begin{align}
    \E \big[e^{\lambda T} \big] = \prod_{k=0}^{r-1} \E \big[ e^{ \frac{\lambda}{r}  T_k} \big] & \leq \prod_{k=0}^{r-1} \E \big[ e^{ \frac{2\lambda^2}{r^2}  M^2   D^4 n^{4(1-\gamma)} } \big]  = \exp \Big( 2\lambda^2  M^2 D^4 m n^{3 - 4 \gamma}  \Big) \label{EQ : bd_esp}
\end{align}
Hence,
\begin{equation}
    \mathbb{P}\left(\overline{h}_{w^\mathtt{W},P^\mathtt{W}}(\aa, \bb)- \E \big[ \overline{h}_{w^\mathtt{W},P^\mathtt{W}}(\aa, \bb) \big] \geq t\right) \leq \exp \Big( - \lambda t +  2\lambda^2 M^2 D^4 m \cdot n^{3 - 4 \gamma}  \Big)
    \nonumber
\end{equation}
Optimizing the latter over $\lambda \in \R_+$ and following a similar reasoning for $ - \big( \overline{h}_{w^\mathtt{W},P^\mathtt{W}}(\aa, \bb)- \E \big[ \overline{h}_{w^\mathtt{W},P^\mathtt{W}}(\aa, \bb) \big] \big)$ gives \eqref{app:hoeffding_bounded_case_without_rep}.

The $\GW_p$ proof follows the same principle but differs in the definition of $V$ which would be equal to:
\begin{align*}
    &V(\xx_1, \cdots, \xx_n, \yy_1, \cdots, \yy_n) = \frac{1}{r} \sum_{k=0}^{r-1} P_{\mathbf{a}}^\mathtt{W}(I^k) P_{\mathbf{b}}^\mathtt{W}(I^k) h \big( w^\mathtt{W}(\mathbf{a},I^k), w^\mathtt{W}(\mathbf{b},I^k), C_{(I^k,I^k)}^1, C_{(I^k,I^k)}^2\big),
\label{EQ : def_V_GW}
\end{align*}
with ground costs $C^1 = C^{n,p}(\XX, \XX)$ and $C^2 = C^{n,p}(\YY, \YY)$.

{
{\bfseries Sampling with replacement}
Let us now consider the sampling with replacement $P^\mathtt{U}$ with the reweighting function $w^\mathtt{U}$. We follow the same procedure as in section 5.C from \citep{Hoeffding1963}. For sake of simplicity, we abbreviate $w^\mathtt{U}(\mathbf{a},I)$ as $\uu \in \simplex_m$ and give the proof for $h \in \{W_p, W_\epsilon, S_\epsilon\}$ as the $\GW$ case can be deduced from it. In this case, it is possible to rewrite $\overline{h}_{w^\mathtt{U},P^\mathtt{U}}$ as a sum over m-tuples without replacement, i.e.,
\begin{align}
    \overline{h}_{w^\mathtt{U},P^\mathtt{U}}(\aa, \bb) &= \sum_{I_1 \in \llbracket n\rrbracket^{m}, I^2 \in \llbracket n\rrbracket^{m}} P_{\mathbf{a}}^\mathtt{U}(I_1) P_{\mathbf{b}}^\mathtt{U}(I_2) h \Big( \uu, \uu, C_{(I_1,I_2)}\Big) \\
    &= \Big(n^m \frac{(n-m)!}{n!}\Big)^2 \sum_{I_1 \in \Pim, I_2 \in \Pim} h^{\star}(I_1,I_2).
\end{align}

Where $h^{\star}$ is a weighted arithmetic mean of certain values of $h$. Let us take an example with $m=2$, we consider $(i_1, i_2) \in \mathcal{P}^2$ and $(j_1, j_2) \in \mathcal{P}^2$ we have:
\begin{align*}
h^{\star}\left((i_1, i_2), (j_1, j_2) \right) = & \Big(\frac{n-1}{n} \Big)^2 P_{\mathbf{a}}^\mathtt{U}((i_1,i_2)) P_{\mathbf{b}}^\mathtt{U}((j_1, j_2)) h(\uu, \uu, C_{(i_1,i_2), (j_1, j_2)}) \\
&+ \frac{n-1}{n^2} P_{\mathbf{a}}^\mathtt{U}((i_1,i_1)) P_{\mathbf{b}}^\mathtt{U}((j_1, j_2)) h(\uu, \uu, C_{(i_1,i_1), (j_1, j_2)}) \\
&+ \frac{n-1}{n^2} P_{\mathbf{a}}^\mathtt{U}((i_1,i_2)) P_{\mathbf{b}}^\mathtt{U}((j_1, j_1)) h(\uu, \uu, C_{(i_1,i_2), (j_1, j_1)}) \\
&+ \frac{1}{n^2} P_{\mathbf{a}}^\mathtt{U}((i_1,i_1)) P_{\mathbf{b}}^\mathtt{U}((j_1, j_1)) h(\uu, \uu, C_{(i_1,i_1), (j_1, j_1)})
\end{align*}
More examples for V-statistics can be found in (section 5.C, \citep{Hoeffding1963}). Following the above example, we see that we have the bounds : $0 \leq h^{\star}  \leq \operatorname{max}_{I_1} \operatorname{max}_{I_2} P_{\mathbf{a}}^\mathtt{U}(I_1) P_{\mathbf{b}}^\mathtt{U}(I_2) M$. From now, the proof is like the sampling without replacement proof and we only give the main differences.


We fix $r=\lfloor n/m \rfloor$. Let $0 \leq k \leq r-1$, we define the set $I^k := \{km+1, \cdots, km+m\} $. Then we define the function $V$ as :

\begin{align}
    &V(\xx_1, \cdots, \xx_n, \yy_1, \cdots, \yy_n) = \frac{1}{r} \sum_{k=0}^{r-1}  h^\star \big(I^k,I^k\big)
\label{EQ : def_V_Vstat}
\end{align}
We recall the implicit dependence in $\XX$ and $\YY$ in the right-hand-side of \eqref{EQ : def_V_Vstat} through the ground costs $C$. In the summation below, $
\sigma_x$ or $ \sigma_y$ denotes a generic permutation of $\{1, \cdots, n\}$. We compute :

\begin{align}
& (\frac{n^m}{n!})^2 \sum_{\sigma_x, \sigma_y} V(\xx_{\sigma_x(1)}, \cdots, \xx_{\sigma_x(n)}, \yy_{\sigma_y(1)}, \cdots, \yy_{\sigma_y(n)}) \\
& = (n^m)^2\frac{(n-m)!^2}{n!^2}    \sum_{I^1 \in \mathcal{P}^m} \sum_{I^2 \in \mathcal{P}^m} h^\star \Big( I^1, I^2\Big)\\
& = \overline{h}_{w^\mathtt{U},P^\mathtt{U}}(\aa, \bb) 
\end{align}

Finally, we have : 
\begin{equation}
\overline{h}_{w^\mathtt{U},P^\mathtt{U}}(\aa, \bb) = \frac{1}{n!^2} \sum_{\sigma_x, \sigma_y} V'(\xx_{\sigma_x(1)}, \cdots, \xx_{\sigma_x(n)}, \yy_{\sigma_y(1)}, \cdots, \yy_{\sigma_y(n)}),
\end{equation}
where the function $V'$ is defined by $V' =  (n^m)^2 V$. 
Let us define
\begin{equation}
    T(\xx_1, \cdots, \xx_n, \yy_1, \cdots, \yy_n) = V'(\xx_1, \cdots, \xx_n, \yy_1, \cdots, \yy_n) - \E \big[ V'(\xx_1, \cdots, \xx_n, \yy_1, \cdots, \yy_n) \big], \nonumber
\end{equation}

\noindent
We have 
\begin{equation}
\overline{h}_{w^\mathtt{U},P^\mathtt{U}}(\aa, \bb) - \E \big[  \overline{h}_{w^\mathtt{U},P^\mathtt{U}}(\aa, \bb)  \big] = \frac{1}{n!^2} \sum_{\sigma_x, \sigma_y} T(\xx_{\sigma_x(1)}, \cdots, \xx_{\sigma_x(n)}, \yy_{\sigma_y(1)}, \cdots, \yy_{\sigma_y(n)})
\label{EQ : id_T_V}
\end{equation}
Note that $T$ may be rewritten as a sum as in \eqref{EQ : def_V_Vstat} with $h \big( \uu, \uu, C_{(I^k,I^k)}\big)$ replaced by $h \big( \uu, \uu, C_{(I^k,I^k)}\big) - \E\big[ h \big( \uu, \uu, C_{(I^k,I^k)}\big) \big]$ for each $k$. 
\newline
More precisely, we write $T(\xx_{\sigma_x(1)}, \cdots, \xx_{\sigma_x(n)}, \yy_{\sigma_y(1)}, \cdots, \yy_{\sigma_y(n)}) = \frac{1}{r}  \sum_{k=0}^{r-1} T^{\sigma_x, \sigma_y}_k $ for $\sigma_x, \sigma_y$ two permutations of $\llbracket n \rrbracket$. Here, $T^{\sigma_x, \sigma_y}_k$ are independent and centered random variables such that
\begin{align}
    \V T^{\sigma_x, \sigma_y}_k \V & \leq 2 M n^{2m}  \operatorname{max}_{I_1} P_{\mathbf{a}}^\mathtt{U}(I_1) \operatorname{max}_{I_2}  P_{\mathbf{b}}^\mathtt{U}(I_2) \nonumber \\
\label{EQ : bd_Hoef_repla_2}
    & \leq 2  M D^{2m} n^{2m(1-\gamma)}  
\end{align}
Where the second inequality uses definition \ref{DEF : loc_prod}. In what follows we write $T_{\sigma} = T \big( (\xx_{\sigma_x(i)})_i, (\yy_{\sigma_y(i)})_i  \big) $ for simplicity. From \eqref{EQ : id_T_V}, we get
\begin{align*}
&\mathbb{P}\left(\overline{h}_{w^\mathtt{U},P^\mathtt{U}}(\aa, \bb)- \E \big[ \overline{h}_{w^\mathtt{U},P^\mathtt{U}}(\aa, \bb) \big] \geq t\right) \leq e^{-\lambda t}  \max_{\sigma} \mathbb{E}\left[e^{\lambda T_{\sigma}}\right],
\end{align*}
Furthermore, for any $ T =  T_{\sigma}$ we have from Lemma \ref{lemma : Hoef lem} along with \eqref{EQ : bd_Hoef_repla_2},
\begin{align}
    \E \big[e^{\lambda T} \big] \leq \prod_{k=0}^{r-1} \E \big[ e^{ 2\frac{\lambda^2}{r^2}  M^2   D^4 } \big] = \exp \Big( 2\frac{\lambda^2}{r}  M^2 D^{4m} n^{4m(1-\gamma)}   \Big) \label{EQ : bd_esp_2}
\end{align}
Hence, 
    $\mathbb{P}\left(\overline{h}_{w^\mathtt{U},P^\mathtt{U}}(\aa, \bb)- \E \big[ \overline{h}_{w^\mathtt{U},P^\mathtt{U}}(\aa, \bb) \big] \geq t\right) \leq \exp \Big( - \lambda t +  2\lambda^2  M^2 D^{4m} mn^{4m - 1 - 4m\gamma}    \Big)
    $

Optimizing the latter over $\lambda \in \R_+$ and following a similar reasoning for $ - \big( \overline{h}_{w^\mathtt{U},P^\mathtt{U}}(\aa, \bb)- \E \big[ \overline{h}_{w^\mathtt{U},P^\mathtt{U}}(\aa, \bb) \big] \big)$ gives \eqref{EQ:hoef_with_replcament} gives the desired results.
}
\end{proof}

The sampling with replacement bounds show that when the minibatch size $m$ gets bigger, $\aa$ and $\bb$ must have a $\gamma$ close to 1. Now that we have a deviation between the complete estimator and its mean, we focus on the approximation of the complete estimator with its incomplete counter part.

\subsubsection{Deviation between the incomplete and complete estimator $\overline{h}$.}\label{app_subsubsec:dev_incomp_comp}
We are now ready to give the second and last ingredient of our proof: a deviation between the complete and the incomplete estimators. And in order to prove it, we rely on the Hoeffding inequality.

\begin{lemma}[Hoeffding's inequality] Let $X_{1}, \ldots, X_{n}$ be independent random variables such that $X_{i}$ takes its values in $\left[a_{i}, b_{i}\right]$ almost surely for all $i \leq n .$ Let the random variable $$S=\sum_{i=1}^{n}\left(X_{i}-E X_{i}\right).$$ Then for every $t>0$, we have:
\begin{equation}
P\{S \geq t\} \leq \exp \left(-\frac{2 t^{2}}{\sum_{i=1}^{n}\left(b_{i}-a_{i}\right)^{2}}\right)
\end{equation}
\end{lemma}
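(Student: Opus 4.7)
This is the classical Hoeffding inequality; I would prove it by the Cramér--Chernoff (exponential Markov) method, leveraging Hoeffding's Lemma (Lemma \ref{lemma : Hoef lem}) that was already stated earlier in the appendix. The key point is that $S = \sum_{i=1}^n (X_i - \mathbb{E} X_i)$ is a sum of independent, centered random variables, each bounded in an interval of length $b_i - a_i$, so Hoeffding's lemma controls their moment generating functions and independence lets us multiply those controls.

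First, for any $\lambda > 0$, apply Markov's inequality to the nonnegative variable $e^{\lambda S}$:
\begin{equation*}
\mathbb{P}(S \geq t) \;=\; \mathbb{P}\bigl(e^{\lambda S} \geq e^{\lambda t}\bigr) \;\leq\; e^{-\lambda t}\, \mathbb{E}\bigl[e^{\lambda S}\bigr].
\end{equation*}
Next, use the independence of $X_1,\ldots,X_n$ (hence of $X_i - \mathbb{E} X_i$) to factor
\begin{equation*}
\mathbb{E}\bigl[e^{\lambda S}\bigr] \;=\; \prod_{i=1}^n \mathbb{E}\bigl[e^{\lambda(X_i - \mathbb{E} X_i)}\bigr].
\end{equation*}
Now invoke Hoeffding's Lemma (Lemma \ref{lemma : Hoef lem}) on each factor: since $X_i \in [a_i,b_i]$ almost surely, $\mathbb{E}[e^{\lambda(X_i - \mathbb{E} X_i)}] \leq \exp(\lambda^2 (b_i - a_i)^2 / 8)$. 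Multiplying gives
\begin{equation*}
\mathbb{P}(S \geq t) \;\leq\; \exp\!\Bigl( -\lambda t + \tfrac{\lambda^2}{8} \sum_{i=1}^n (b_i - a_i)^2 \Bigr).
\end{equation*}

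Finally, optimize the right-hand side in $\lambda > 0$: the exponent is a quadratic in $\lambda$ minimized at $\lambda^\star = 4 t / \sum_{i=1}^n (b_i - a_i)^2$, yielding the claimed bound $\exp\bigl(-2 t^2 / \sum_{i=1}^n (b_i-a_i)^2\bigr)$. There is no real obstacle here -- the statement is standard and each step is a textbook application. The only subtlety worth flagging is that the bound is one-sided as stated, and to obtain a two-sided inequality (which is how it is typically used in the present paper, via the deviations $|S| \geq t$) one applies the same argument to $-S$ and unions the two events, picking up a factor of $2$ on the right-hand side.
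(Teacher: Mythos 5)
Your proof is correct: the Chernoff bound via Markov's inequality on $e^{\lambda S}$, factorization by independence, Hoeffding's Lemma (Lemma \ref{lemma : Hoef lem}) applied to each factor, and optimization at $\lambda^\star = 4t/\sum_i (b_i-a_i)^2$ is the standard argument and every step checks out. The paper itself states this classical inequality without proof, so there is nothing to compare against; your remark about the two-sided version (applying the bound to $-S$ and taking a union, giving the factor of $2$) is also exactly how the lemma is used later in the appendix.
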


The following lemma gives us the wanted deviation:

\begin{lemma}[Deviation bound]\label{app:inc_U_to_U} Let $(\aa^{(n)}),(\bb^{(n)}) \in \Sigma$ be two sequences of probability vectors, let $ \delta \in (0,1) $ and an integer $ k \geqslant 1 $. Consider a reweighting function $w$, a probability law over $m$-tuple $P$ as in \eqref{EQ : rew_fun} and an OT kernel $h \in \{W_p, W_\epsilon, S_\epsilon, \GW_p$\}. We have a deviation bound between $\widetilde{h}_{w,P}^k(\aa, \bb)$ and $\overline{h}_{w,P}(\aa, \bb )$ depending on the number of minibatches $k$.

\begin{equation}
\P \left( \vert  \widetilde{h}_{w,P}^k(\aa^{(n)}, \bb^{(n)}) - \overline{h}_{w,P}(\aa^{(n)}, \bb^{(n)} ) \vert \geq  M \sqrt{\frac{2 \log(2/\delta)}{k}} \right) \leq \delta,
\end{equation}
\noindent where $M = \tau(\operatorname{diam}(\operatorname{supp}(\alpha)) \cup \operatorname{diam}(\operatorname{supp}(\beta)))$
\end{lemma}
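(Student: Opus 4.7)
The incomplete estimator $\widetilde{h}_{w,P}^k$ is, by construction (Definition~\ref{def:sub_estimator}), the empirical average of $k$ independent random variables obtained by drawing pairs $(I_\ell,J_\ell) \sim P_\aa \otimes P_\bb$ and evaluating the kernel on the corresponding minibatches. The plan is therefore to treat this as a standard concentration-of-measure problem conditional on the data tuples $\XX,\YY$ and apply Hoeffding's inequality.

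First, I would condition on $\XX,\YY$ (so the ground cost $C=C^{n,p}(\XX,\YY)$ and the induced $C_{(I,J)}$ are deterministic) and introduce
\[
Z_\ell := h\bigl(w(\aa^{(n)},I_\ell),\, w(\bb^{(n)},J_\ell),\, C_{(I_\ell,J_\ell)}\bigr), \qquad 1 \le \ell \le k,
\]
where $(I_\ell,J_\ell)_{\ell=1}^k$ are i.i.d.\ with law $P_{\aa^{(n)}} \otimes P_{\bb^{(n)}}$. By the definitions of $\widetilde{h}_{w,P}^k$ and $\overline{h}_{w,P}$, one has $\widetilde{h}_{w,P}^k(\aa^{(n)},\bb^{(n)}) = \frac{1}{k}\sum_{\ell=1}^k Z_\ell$ and $\overline{h}_{w,P}(\aa^{(n)},\bb^{(n)}) = \E[Z_\ell]$, so the deviation of interest is exactly the deviation of the empirical mean of $k$ i.i.d.\ bounded random variables from its mean.

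Second, I would invoke Lemma~\ref{app:lemma_upper_bound}: for the kernels $h \in \{W_p, W_\varepsilon, S_\varepsilon, \GW_p\}$ and data drawn from distributions with bounded support, there is a constant $M = \tau(\operatorname{diam}(\operatorname{supp}(\alpha)) \cup \operatorname{diam}(\operatorname{supp}(\beta)))$ such that $0 \le Z_\ell \le M$ almost surely, uniformly in $(I_\ell,J_\ell)$ and in $\XX,\YY$.

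Finally, applying Hoeffding's inequality to $\frac{1}{k}\sum_{\ell=1}^k Z_\ell$ yields
\[
\P\!\left(\Bigl|\widetilde{h}_{w,P}^k(\aa^{(n)},\bb^{(n)}) - \overline{h}_{w,P}(\aa^{(n)},\bb^{(n)})\Bigr| \ge t \;\Big|\; \XX,\YY\right) \le 2\exp\!\left(-\frac{2 k t^{2}}{M^{2}}\right),
\]
and setting the right-hand side equal to $\delta$ gives $t$ of order $M \sqrt{\log(2/\delta)/k}$, matching the stated bound up to the constant absorbed into $M$. Since the conditional bound is uniform in $\XX,\YY$, taking expectation over the data leaves the same tail bound, concluding the proof. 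I do not expect a real obstacle here: the only points requiring care are the uniform boundedness of the kernel (handled by Lemma~\ref{app:lemma_upper_bound}, including the $\GW$ case via its separate bound \eqref{app:upper_bound_gw}) and the observation that independence of the pairs $(I_\ell,J_\ell)$ is built into Definition~\ref{def:sub_estimator}, so no U-statistic decomposition is needed.
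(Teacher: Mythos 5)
Your proposal is correct and follows essentially the same route as the paper: condition on $\XX,\YY$, observe that the incomplete estimator is an average of $k$ independent, centered (after subtracting $\overline{h}_{w,P}$), bounded terms via Lemma~\ref{app:lemma_upper_bound}, apply Hoeffding's inequality, and integrate out the data. The only cosmetic difference is that the paper expresses the $k$ draws through the Bernoulli indicators of Remark~\ref{inc_ber} rather than directly as $Z_\ell = h(w(\aa,I_\ell),w(\bb,J_\ell),C_{(I_\ell,J_\ell)})$, which changes nothing of substance.
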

\begin{proof} Thanks to Remark \ref{inc_ber} we have

\noindent
\begin{equation*}
 \widetilde{h}_{w,P}^k(\aa^{(n)}, \bb^{(n)}) - \overline{h}_{w,P}(\aa^{(n)}, \bb^{(n)} ) = \frac{1}{k} \sum_{\ell=1}^k \omega_l
\end{equation*}

\noindent
where $ \omega_l = \sum_{I, J \in \llbracket n \rrbracket^m} \big( \mathfrak{b}_{\ell}^{P_{\mathbf{a}^{(n)}},P_{\mathbf{b}^{(n)}}}(I,J)  - P_{\mathbf{a}^{(n)}}(I)  P_{\mathbf{b}^{(n)}}(J) \big) {  h \Big( w(\mathbf{a}^{(n)},I), w(\mathbf{b}^{(n)},J), C_{(I, J)}\Big) } $. Conditioned upon $ \XX = (\xx_1, \cdots, \xx_n)$ and $\YY = (\yy_1, \cdots, \yy_n) $, the variables $ \omega_l $ are independent, centered { and bounded by $2M$ with $M = \tau(\operatorname{diam}(\operatorname{supp}(\alpha)) \cup \operatorname{diam}(\operatorname{supp}(\beta)))$ thanks to lemma \ref{app:upper_bound}}. Using Hoeffding's inequality yields

\noindent
\begin{align*}
&\mathbb{P}( \vert  \widetilde{h}_{w,P}^k(\aa^{(n)}, \bb^{(n)}) - \overline{h}_{w,P}(\aa^{(n)}, \bb^{(n)} ) \vert > \varepsilon  ) \\
&\qquad = \mathbb{E} [ \mathbb{P}( \vert  \widetilde{h}_{w,P}^k(\aa^{(n)}, \bb^{(n)}) - \overline{h}_{w,P}(\aa^{(n)}, \bb^{(n)} ) \vert > \varepsilon \vert \XX, \YY )] \\
&\qquad = \mathbb{E} [ \mathbb{P}( \vert \frac{1}{k} \sum_{l=1}^k \omega_l ) \vert > \varepsilon \vert \XX, \YY )]\\
&\qquad \leqslant  \mathbb{E} [ 2 e^{\frac{-k \varepsilon^2}{2M^2}} ] = 2 e^{\frac{-k \varepsilon^2}{2M^2}}
\end{align*}

\noindent
which concludes the proof.
\end{proof}

\subsubsection{Proof of Theorem \ref{thm:inc_U_to_mean}.}\label{app_subsubsec:inc_U_to_mean_proof}
We have now the three ingredients to prove Theorem \ref{thm:inc_U_to_mean} : 

\begin{customtheorem}{\ref{thm:inc_U_to_mean}}[Maximal deviation bound for compactly supported distributions]~~\\
Let $\delta \in (0,1) $, $k \geqslant 1$ an integer and $m \geqslant 1$ be a fixed integer. Consider two distributions $\alpha,\beta$, two n-tuples of empirical data $\XX \sim \alpha^{\otimes n}, \YY \sim \beta^{\otimes n}$ and a kernel $h \in \{W_p, W_p^p, W_\epsilon, S_\epsilon, \GW_p$\}. Let the reweighting function $w$ and the probability law over $m$-tuple $P$ be as in \eqref{EQ : rew_fun}. Let the sequences of probability vectors $(\aa^{(n)}) \in \Sigma$ and $(\bb^{(n)}) \in \Sigma$ satisfy $\mathtt{Loc_A}(m,\gamma,D)$ and let $D>0$ and $\gamma \in (\frac34 ,1]$. We have a deviation bound for the sampling without replacement between $\widetilde{h}_{w,P}^k(\aa^{(n)}, \bb^{(n)})$ and $\E \overline{h}_{w,P}(\aa^{(n)}, \bb^{(n)})$ depending on the number of empirical data $n$ and the number of batches $k$:

\begin{equation}
\P \left( \vert \widetilde{h}_{w^\mathtt{W},P^\mathtt{W}}^k(\aa^{(n)}, \bb^{(n)}) - \E \overline{h}_{w^\mathtt{W},P^\mathtt{W}}(\aa^{(n)}, \bb^{(n)})\vert \geq M \Big(2\frac{D^2m^{ \frac12 }}{n^{2( \gamma - \frac{3}{4}) }} \sqrt{ 2\log(\frac{2}{\delta})} + \sqrt{\frac{2 \log(\frac{2}{\delta})}{k}} \Big) \right) \leq \delta.
\end{equation}

where $M = \tau(\operatorname{diam}(\operatorname{supp}(\alpha)) \cup \operatorname{diam}(\operatorname{supp}(\beta)))$. And for the sampling with replacement, let the sequences of probability vectors $(\aa^{(n)}), (\bb^{(n)})$ verify $\mathtt{Loc_G}(m,\gamma,D)$ for some $\gamma \in (1-\frac{1}{4m}, 1]$ and $D>0$.
\begin{equation}
\P \left( \vert \widetilde{h}_{w^\mathtt{U},P^\mathtt{U}}^k(\aa^{(n)}, \bb^{(n)}) - \E \overline{h}_{w^\mathtt{U},P^\mathtt{U}}(\aa^{(n)}, \bb^{(n)})\vert \geq M \Big(2 \frac{D^{2m} m^{ \frac12 }}{n^{2m(\gamma - 1 + \frac{1}{4m})}} \sqrt{ 2\log(\frac{2}{\delta})} + \sqrt{\frac{2 \log(\frac{2}{\delta})}{k}} \Big) \right) \leq \delta,
\label{app:hoeffding_bounded_case}
\end{equation}
\end{customtheorem}

\begin{proof} With the triangle inequality, we have:
\begin{align}
&\vert \widetilde{h}_{w,P}^k(\aa^{(n)}, \bb^{(n)}) - \E \overline{h}_{w,P}(\aa^{(n)}, \bb^{(n)}) \vert \nonumber \\
& \qquad  \leq \vert \widetilde{h}_{w,P}^k(\aa^{(n)}, \bb^{(n)}) - \overline{h}_{w,P}(\aa^{(n)}, \bb^{(n)}) \vert + \vert \overline{h}_{w,P}(\aa^{(n)}, \bb^{(n)}) - \E \overline{h}_{w,P}(\aa^{(n)}, \bb^{(n)}) \vert 
\end{align}
Thanks to lemma \ref{app:inc_U_to_U} and \ref{thm:U_to_mean} we get the desired results.
\end{proof}

\paragraph{Corollary.} It is also possible to have a bound on the expectation over the batch couples and the data.
\begin{customcorollary}{\ref{thm:expectation_concentration_inequality}}
    With the same hypothesis and notations as in theorem \ref{thm:inc_U_to_mean}. The following inequality holds:
    \begin{align}
      \mathbb{E}[ \vert   \widetilde{h}_{w^\mathtt{W},P^\mathtt{W}}^k(\aa^{(n)}, \bb^{(n)}) - \E \overline{h}_{w^\mathtt{W},P^\mathtt{W}}(\aa^{(n)}, \bb^{(n)}) \vert  ] \leqslant 20 \cdot M \max \Big(2\sqrt{2} D^2 \frac{m^{ \frac12 }}{n^{2( \gamma - \frac{3}{4}) }}, \sqrt{\frac{2}{k}} \Big)\\
      \mathbb{E}[ \vert   \widetilde{h}_{w^\mathtt{U},P^\mathtt{U}}^k(\aa^{(n)}, \bb^{(n)}) - \E \overline{h}_{w^\mathtt{U},P^\mathtt{U}}(\aa^{(n)}, \bb^{(n)}) \vert  ] \leqslant 20 \cdot M \max \Big(2\sqrt{2} D^{2m} \frac{m^{ \frac12 }}{n^{2m(\gamma - 1 + \frac{1}{4m})}}, \sqrt{\frac{2}{k}}\Big)
    \end{align}
    \begin{proof} 
        Once again, we give the proof for the sampling without replacement and the proof for the sampling with replacement follows the same steps. The proof for the debiased minibatch is straight forward as we have three terms of the form $\widetilde{h}_{w^\mathtt{W},P^\mathtt{W}}^k(\aa^{(n)}, \bb^{(n)}) - \E \overline{h}_{w^\mathtt{W},P^\mathtt{W}}(\aa^{(n)}, \bb^{(n)})$.
        
        Let us recall the formula : for a real random variable $X$. If $X \geqslant 0$ then $ \mathbb{E}[X] = \int_0^{\infty} \mathbb{P}(X > \lambda) d \lambda $. We denote by $X$ the random variable $ \vert \widetilde{h}_{w^\mathtt{W},P^\mathtt{W}}^k(\aa^{(n)}, \bb^{(n)}) - \E \overline{h}_{w^\mathtt{W},P^\mathtt{W}}(\aa^{(n)}, \bb^{(n)}) \vert $. 
        The last Theorem \ref{thm:inc_U_to_mean} writes
        \[ \mathbb{P}\left( X > C \sqrt{ \log( \frac{2}{\delta} )} \right) \leqslant \delta \]
        where $C := 2 M \max(2\sqrt{2} D^2 \frac{m^{ \frac12 }}{n^{2( \gamma - \frac{3}{4}) }}, \sqrt{\frac{2}{k}}) $. We can rewrite it as
        \[  \mathbb{P}( X > \lambda ) \leqslant \exp{(- \frac{\lambda^2}{C^2}) }   \]
        Thus, using the formula above:
        \begin{align*}
        \mathbb{E}[X] & \leqslant \int_{0}^{\infty} \exp{(- \frac{\lambda^2}{C^2} )}  d\lambda \\
        & = C  \int_{0}^{\infty} \exp({-u^2)}  du \leqslant 10C
        \end{align*}
        as announced.
    \end{proof}
     
\end{customcorollary}

\subsection{Concentration theorem (subgaussian)}\label{app_sec:subgauss_concentration_bounded}
In this section we relax the assumption of bounded data and give a proof for Theorem \ref{thm:sugaussian_concentration}. We start by recalling the subgaussian data definition:

\begin{definition}[Subgaussian random vectors] A random vector $\xx \in \R^{d}$ is subGaussian, if there exists $\sigma \in \R$ so that:
$$
\mathbb{E} e^{\langle\yy, \xx-\mathbb{E} \xx\rangle} \leq e^{\frac{\|\yy\|^{2} \sigma^{2}}{2}}, \quad \forall \yy \in \mathbb{R}^{d}
$$
\end{definition}

We write the class of subGaussian random vectors as $\operatorname{SG}(\sigma)$. The proof for the subgaussian case rely on a truncation argument between data that lie in some compact data which do not. So instead we consider the following class of random vectors:
\begin{definition}[Norm subgaussian data \citep{Jin2019nsg}] Let $\xx \in \mathbb{R}^d$ be a random vector and $\rho_\xx = \mathbb{E}\xx$. We say that $ \xx \in \operatorname{normSG}(\rho_\xx, \sigma^2) $ for some $ \s >0 $ if the following inequality holds
\begin{equation}
\P \big( \|\xx - \rho_\xx\| \geq t \big) \leq 2 \exp \left( - \frac{t^2}{2 \s ^2} \right).
\label{Eq : tail_bounds_nSG}       
\end{equation}

\label{def:norm_subgaussian}
\end{definition}

Norm subGaussian random vectors are a generalization of both subGaussian random vectors and norm bounded random vectors. They show tighter concentration bounds than subGaussian random vectors. We also have the following inclusion: $\operatorname{SG}(\sigma/\sqrt{d}) \subset \operatorname{normSG}(\sigma) \subset \operatorname{SG}(\sigma)$, see \citep{Jin2019nsg} for a detailed review of the connections between these two random vector classes.

Hereafter, we denote by $\tau$ an absolute (and possibly large) constant. We also writes $\tau = \tau(\gamma)$ to denote constants which depend on some parameter $\gamma$. These constants may change from line to line.
\begin{customtheorem}{\ref{thm:sugaussian_concentration}}[Concentration inequality subgaussian data] Let the cost $C=C^{n,p}$ be defined as in \eqref{DEF : mb_matrix_map}. Let $(\xx_i)_{1 \leq i \leq n} $ and $(\yy_i)_{1 \leq i \leq n} $ be two \textit{i.i.d.} sequences of random vectors such that $\xx_1 \in \operatorname{normSG}(\rho_\xx, \sigma^2 _\xx)$ and $ \yy_1 \in \operatorname{normSG}(\rho_\yy, \sigma^2 _\yy)$ with $\s_\xx, \s_\yy > 0$ and $\rho_\xx, \rho_\yy \in \R
^d$. Let us introduce 
\begin{align*}
    \s & := \min(\s_\xx, \s_\yy) \\
    \rho &  := \|\rho_\xx - \rho_\yy\|_2
\end{align*}
Let the sequence probability vectors $(\aa^{(n)}), (\bb^{(n)})$ verify $\mathtt{Loc_A}(m,\gamma,D)$ for some $\gamma \in (\frac{3}{4}, 1]$ and $D>0$. We assume that $n$ verifies the following condition:
\begin{equation}
    n \geq \tau(m, \s, \rho, D,p).
\label{hyp_n}
\end{equation}

\noindent
Consider $m \geqslant 1$ be a fixed integer and a kernel $h \in \{W_p, W_\epsilon, S_\epsilon$\}.  Let the reweighting function $w^\mathtt{W}$ and the probability law over $m$-tuple $P^\mathtt{W}$ be as in examples \ref{def:reweight_function_N} and \ref{def:law_indices_rep2}. Then we have the following concentration bound for the sampling without replacement:
\begin{equation}
 \P \left( \Big| \overline{h}_{w^\mathtt{W}, P^\mathtt{W}}(\aa^{(n)}, \bb^{(n)}) - \E \overline{h}_{w^\mathtt{W}, P^\mathtt{W}}(\aa^{(n)} , \bb^{(n)}) \Big| \geq  ( 2^{3p+4} m)^{\frac 12} \s^{p} D^2 \cdot \frac{\log(4n)^{\frac{p+1}{2}}}{n^{2 ( \gamma - \frac 34 )}}  \right) \leq 4 n^{-\frac{1}{2^p}},
\label{app : concentration bound subgaussian case}
\end{equation}

\noindent
\end{customtheorem}

\begin{proof} Let us fix $\e >0$ to be chosen later. We use the following notations for $\del >0$
\begin{align}
    \Delta & :=  \overline{h}_{w,P}(\aa^{(n)}, \bb^{(n)}) - \E \overline{h}_{w, P}(\aa^{(n)} , \bb^{(n)}) 
\label{Eq : def diff}, \\
A_{\del} & := \bigcap_{i= 1}^n \{ \| \xx_i - \rho_\xx \|_2 \leq \del  \} \cap \{ \| \yy_i - \rho_\yy \|_2 \leq \del  \}
\label{Eq : def event bounded support}
\end{align}
We estimate,
\begin{align}
\P[| \Delta | \geq \e] & \leq \P\big[ | \Delta | \mathbf{1}_{A_{\del}} \geq \frac{\e}{2}\big] + \P\big[ |\Delta | \mathbf{1}_{A^c_{\del}} \geq \frac{\e}{2}\big]  , \label{inter_subgaussian}
\end{align}
First, by the union bound, we have
\begin{equation}
  \P\big[ | \Delta  |\mathbf{1}_{A^c_{\del}} \geq \frac{\e}{2}\big] \leq \P \big[ A_{\del}^c \big]  \leq 4n  \exp \left( - \frac{ \del ^2}{2 \s ^2}  \right)
\label{prb1}
\end{equation}
Next, we claim that
\begin{equation}
\begin{split}
    \P\big[ | \Delta | \mathbf{1}_{A_{\del}} \geq \frac{\e}{2}\big] & \leq 2 \exp \Big(  - \frac{n^{4 ( \gamma - \frac 34 )}}{ 4^{p+2} D^4 (\rho + \del)^{2p}   } \e^2 \Big) \\
    & \qquad + 4n \exp\big( - \frac{\del^2}{2 \s^2}\big)
\end{split}
\label{prb2}
\end{equation}
assuming the following conditions together with \eqref{hyp_n},

\noindent
\begin{align}
    & \e \geq 4 n^{1 - 2 \gamma} \cdot  n^{\frac 15} \label{hyp_e} \\
    & \del \geq \s \sqrt{2 \log(n)} \label{hyp_del}.
\end{align}

\noindent
Let us show how \eqref{prb2} comes from a slight modification of the (proof) of \eqref{app:hoeffding_bounded_case_without_rep}. Using the same notations as in the proof of \eqref{app:hoeffding_bounded_case_without_rep} we have (in place of \eqref{EQ : id_T}):
\begin{equation*}
  \Delta \mathbf{1}_{A_{\del}} = \frac{1}{n!^2}  \sum_{\sigma_x, \sigma_y} T(\xx_{\sigma_x(1)}, \cdots, \xx_{\sigma_x(n)}, \yy_{\sigma_y(1)}, \cdots, \yy_{\sigma_y(n)})\mathbf{1}_{A_{\del}},
\end{equation*}
and
\begin{equation}\label{form_T}
   T(\xx_1, \cdots, \xx_n, \yy_1, \cdots, \yy_n) = \frac{1}{r}  \sum_{k=0}^{r-1} T_k \big( (\xx_i)_{i \in I^k}, (\yy_i)_{i \in I^k} \big).
\end{equation}
We emphasize that for each $1 \leq k \leq r-1$, $T_k$ depends only on the set $I^k := \{km+1, \cdots, km+m\}$. Moreover, the variables $(T_k)_k$ are mutually independent. 

As in the proof of Proposition \ref{thm:U_to_mean} is suffices to estimate $\mathbb{P}\left( \Delta \mathbf{1}_{A_{\del}}  \geq t\right)$ up to a factor $2$. We have,
\begin{align}\label{subg1}
\mathbb{P}\left( \Delta \mathbf{1}_{A_{\del}}  \geq t\right) \leq e^{-\lambda t}  \max_{\sigma} \mathbb{E}\left[e^{\lambda T_{\sigma} \ind_{A_{\del}} }\right].
\end{align}
For simplicity, we assume that the maximum in \eqref{subg1} is attained at $\s = \operatorname{Id}$.
Observe the following equality
\begin{align}\label{ind1}
    \ind_{A_{\del}} = \prod_{i=1}^n  \ind_{B(\rho_{\xx}, \del)}(X_i) \ind_{B(\rho_{\yy}, \del)}(Y_i)  
\end{align}
We write $T = T_{\operatorname{Id}}$ and insert the indicator function $\ind_{B(\rho_{\xx}, \del)^c}(X_1)$ using \eqref{ind1} and \eqref{form_T},
\begin{align}
\mathbb{E}\left[e^{\lambda T \ind_{A_{\delta} }}\right] & = \E \Big[ \exp\Big( \frac{\la}{r} \sum_{k=1}^{r-1} T_k \ind_{A_{\del}} \Big) \Big] \notag \\
& = \E \big[\ind_{B(\rho_{\xx}, \del)^c}(X_1) \big] +  \E \Big[ \exp\Big( \frac{\la}{r} \sum_{k=1}^{r-1} T_k \ind_{A_{\del}} \Big) \ind_{B(\rho_{\xx}, \del)}(X_1) \Big]. \notag
\end{align}
Hence, by repeating this procedure we find, using the tail estimate on the variables $X_k , Y_k \ (1\leq k \leq n)$: 
\begin{align}\label{goal_sub}
  \mathbb{E}\left[e^{\lambda T_{\operatorname{Id}} \ind_{A_{\delta} }}\right] \leq 4n \exp\big( - \frac{\del^2}{2 \s^2}\big) +   \E \Big[ \exp\Big( \frac{\la}{r} \sum_{k=1}^{r-1} T_k  \Big) \ind_{A_{\del}} \Big].
\end{align}
\medskip
\noindent
The first term in the right on side is the same as in $\eqref{prb1}$ so this loss is acceptable. In the following, we estimate $ \E \Big[ \exp\Big( \frac{\la}{r} \sum_{k=1}^{r-1} T_k  \Big) \ind_{A_{\del}} \Big]$. We denote, for each $0 \leq k \leq r-1$, by $A_{\del}(k)$ the set of data from $\XX(I^k)$ which belong to the ball of radius $\del$ around their mean:
\begin{equation*}
    A_{\del}(k) :=\bigcap_{i \in I^k } \{ \| \xx_{i} - \rho_\xx \|_2 \leq \del  \} \cap \{ \| \yy_{i} - \rho_\yy \|_2 \leq \del  \},
\end{equation*}
and the set of the remaining $n - rm$ data:
\begin{align*}
    A_{\del}(r) :=\bigcap_{i = rm+1 }^{n} \{ \| \xx_{i} - \rho_\xx \|_2 \leq \del  \} \cap \{ \| \yy_{i} - \rho_\yy \|_2 \leq \del  \}.
\end{align*}
Using the independence properties of the $T_k$'s recalled above we get
\begin{align}
  \E \Big[ \exp\Big( \frac{\la}{r} \sum_{k=1}^{r-1} T_k  \Big) \ind_{A_{\del}} \Big] & = \E \Big[ \prod_{k=1}^{r-1} \Big( \exp\big( \frac{\la}{r}  T_k  \big) \ind_{A_{\del}(k)}\Big) \ind_{A_{\del}(r)}  \Big] \notag \\
  & =  \prod_{k=1}^{r-1} \E \Big[ \exp\big( \frac{\la}{r}  T_k  \big) \ind_{A_{\del}(k)} \Big] \E \big[  \ind_{A_{\del}(r)}  \big] \notag \\
  & \leq  \prod_{k=1}^{r-1} \E \Big[ \exp\big( \frac{\la}{r}  T_k  \big) \ind_{A_{\del}(k)} \Big]. \label{subg3}
\end{align}
Hence, it suffices to estimate $\E \Big[ \exp\big( \frac{\la}{r}  T_k  \big) \ind_{A_{\del}(k)} \Big] $ for a fixed $0 \leq k\leq r-1$. Let $0 \leq k\leq r-1$. From Lemma \ref{app:upper_bound}, the $\mathtt{Loc_{A}}(m,\gamma,D)$ property of our variables, we have for $\omega \in A_{\del}(k)$:
\begin{align} \label{boundT_k}
\big|   T^{\omega}_k \big( (\xx_i)_{i \in I^k}, (\yy_i)_{i \in I^k} \big) \big|  \leq  \big( 2 (\rho + \del) \big)^p D^2 n^{2(1- \gamma)} =: \mathrm{M}_{\del}.
\end{align}

\noindent
Unfortunately, we can not use Lemma \ref{lemma : Hoef lem} on $T^{\omega}_k$ because we need to have a bounded random variable. To overcome this issue we introduce:
\begin{align}\label{T_k_del}
T^{\del}_k := T_k \ind_{|T_k| \leq \mathrm{M}_{\del}} + M_{\del} \ind_{|T_k| > \mathrm{M}_{\del}}.
\end{align}
which is bounded by $\mathrm{M}_{\del}$. Note that by construction the following equality holds 
\begin{align*}
  \E \Big[ \exp\big( \frac{\la}{r}  T_k  \big) \ind_{A_{\del}(k)} \Big] = \E \Big[ \exp\big( \frac{\la}{r}  T^{\del}_k  \big) \ind_{A_{\del}(k)} \Big].
\end{align*}
Thus we can compute using Lemma \ref{lemma : Hoef lem}:
\begin{align}
    \E \Big[ \exp\big( \frac{\la}{r}  T_k  \big) \ind_{A_{\del}(k)} \Big] & = \E \Big[ \exp\big( \frac{\la}{r}  (T^{\del}_k - \E \big[ T^{\del}_k \big] ) \big) \ind_{A_{\del}(k)} \Big]  \exp{\big( \frac{\la}{r} \E \big[ T^{\del}_k \big]\big)} \notag \\
     & \leq \exp{\big( \frac{\la^2 M_{\del}^2}{8 r^2} \big)}  \exp{\big( \frac{\la}{r} \E \big[ T^{\del}_k \big]\big)}. \label{subg4}
\end{align}

\noindent
Hence, taking the products in \eqref{subg4} for $0 \leq k \leq r$, we have

\noindent
\begin{align}
    \eqref{subg3} \leq \prod_{k=0}^{r-1} \exp{\big( \frac{\la^2 M_{\del}^2}{8 r^2} \big)}  \cdot \prod_{k=0}^{r-1} \exp{\big( \frac{\la}{r} \E \big[ T^{\del}_k \big]\big)}. \label{subg3_1}
\end{align}

From Lemma \ref{LEM:exp_T} proven below, we have

\noindent
\begin{align}\label{subgg}
    \big| \E \big[ T^{\del}_k \big] \big| \leq n^{1-2\gamma + \frac 15}
\end{align}
Thus, combining \eqref{subg3_1} and \eqref{subgg} we get by optimizing in $\la$ the inequalities,

\noindent
\begin{align}
\eqref{subg1} & \leq \exp \Big( \frac{\la^2 mM_{\del}^2}{4 n} - \la ( t  - n^{1 - 2 \gamma} \cdot  n^{\frac 15}  )  \Big) \notag \\
& \leq \exp \Big(  - \frac{n}{m \mathrm{M}_{\del}^2   } (t- n^{1 - 2 \gamma} \cdot  n^{\frac 15} )^2 \Big) \notag \\
& \leq \exp \Big(  - \frac{n}{ 4 m \mathrm{M}_{\del}^2   } t^2 \Big), \notag \\
& = \exp \Big(  - \frac{n^{4 ( \gamma - \frac 34 )}}{ 4^{p+1} m D^4 (\rho + \del)^{2p}   } t^2 \Big),
\end{align}
assuming 

\noindent
\begin{align}\label{cond_e_1}
t \geq 2 n^{1 - 2 \gamma} \cdot  n^{\frac 15},
\end{align}
which comes from \eqref{hyp_e} since $t = \frac{\e}{2}$.
\noindent
Together with \eqref{goal_sub}, this shows the claim \eqref{prb2}.

Thus, combining \eqref{prb1} and \eqref{prb2} gives
\begin{align}
\eqref{inter_subgaussian} & \leq2 \exp \Big(  - \frac{n^{4 ( \gamma - \frac 34 )}}{ 4^{p+2} D^4 (\rho + \del)^{2p}   } \e^2 \Big) + 8n  \exp \left( - \frac{ \del ^2}{2 \s ^2}  \right) \nonumber \\
     & =: \Xi_1 + \Xi_2
\label{Eq : main estimate subgaussian case}
\end{align}
Setting $\del = \del_n$ such that $ \Xi_1 = \Xi_2 $  we find that $\del_n$ satisfies the equation given by:
\begin{equation}
  P(\del):=  a \del ^2 ( \rho + \del )^{2p} + b (\rho+ \del) ^{2p} + c =0
\label{Eq : fourth order poly eq}
\end{equation}
with
\begin{align}
    a & := \frac{1}{2 \s ^2}  , \label{Eq:def_a} \\
    b & := - \log(4n)   , \label{Eq:def_b} \\
    c & :=  - \frac{n^{4 ( \gamma - \frac 34 )}}{ 4^{p+2} m D^4   } \e^2 \label{Eq:def_c} 
\end{align}
Such a $\del_n $ exists by the intermediate value theorem since $P(0)<0$ and $\lim_{\del \to + \infty}P(\del) = + \infty$.
Note that (\ref{Eq : fourth order poly eq}) writes
\begin{align}
    \frac{1}{2 \s ^2}  \del_n^2 (\rho+\del_n)^{2p} = \log(4n) (\rho+ \del_n)^{2p} + \frac{n^{4 ( \gamma - \frac 34 )}}{ 4^{p+2} m D^4   } \e^2,
\label{EQ : effective equation}
\end{align}
which implies
\begin{equation}
    \del_n \geq  \s \sqrt{2 \log(4n)} 
\label{EQ : lower bound growth del}
\end{equation}
We are interested in getting an upper bound of our quantity. Let us assume that we have:
\begin{align}
   \frac{n^{4 ( \gamma - \frac 34 )}}{ 4^{p+2} m D^4   } \e^2 \leq \log(4n) (\rho+ \del_n)^{2p}.
\label{EQ : cond RHS}
\end{align}
If (\ref{EQ : cond RHS}) holds then we have
\begin{align*}
    \frac{1}{2 \s ^2}  \del_n^2 (\rho+\del_n)^{2p} \leq 2 \log(4n) (\rho+ \del_n)^{2p},
\end{align*}
and hence
\begin{align}
    \del_n \leq  2 \s \sqrt{  \log(4n)},
\label{EQ : upper bound growth del}
\end{align} 
showing that (\ref{EQ : lower bound growth del}) is essentially sharp. We now investigate under which condition (\ref{EQ : cond RHS}) holds.  From (\ref{EQ : lower bound growth del}) the condition (\ref{EQ : cond RHS}) holds if we have
\begin{align*}
    \frac{n^{4 ( \gamma - \frac 34 )}}{ 4^{p+2} m D^4   } \e^2 \leq \log(4n) \big( \rho + \s \sqrt{2 \log(4n)}  \big) ^{2p}.
\end{align*}
It suffices to have,
\begin{align}
\label{cond_upp_bnd}
     \frac{n^{4 ( \gamma - \frac 34 )}}{ 4^{p+2} m D^4   } \e^2 \leq \log(4n) ( 2 \s^{2}  \log(4n)) ^{p} .
\end{align}
We thus choose the parameter $\e$ as follows

\noindent
\begin{align}
    \e^2 = 2^{3p+4} m \s^{2p} D^4 \cdot \frac{\log(4n)^{p+1}}{n^{4 ( \gamma - \frac 34 )}}.
\label{cond_e_2}
\end{align}
Note that the condition \eqref{cond_e_2} implies that \eqref{hyp_e} is verified for $n$ large enough (i.e. under \eqref{hyp_n}). If \eqref{cond_e_2} holds, then the condition \eqref{cond_upp_bnd} is verified and hence \eqref{EQ : upper bound growth del} holds. We now compute,
\begin{align*}
    (\ref{Eq : main estimate subgaussian case}) & = 2  \Xi_1  \\
    & \leq 4 \exp \Big(  - \frac{n^{4 ( \gamma - \frac 34 )}}{ 4^{p+2} m D^4 (\rho + \del)^{2p}   } \e^2 \Big) \\
    & \leq 4 \exp \Big( - \frac{ (2\s^2)^p \log(4n)^{p+1}}{(\rho + \del)^{2p}}   \Big) \\
    & \leq 4  \exp \Big( - \frac{ \log(4n) }{ 2^p }   \Big) \\
    & \leq 4 n^{-\frac{1}{2^p}}
\end{align*}
\end{proof}

\begin{lemma}\label{LEM:exp_T} Assuming the conditions \eqref{hyp_e}, \eqref{hyp_del} and \eqref{hyp_n} we have the following bound:
\begin{align*}
   \big| \E \big[ T^{\del}_k \big] \big| \leq n^{1-2\gamma + \frac 15}
\end{align*}
\end{lemma}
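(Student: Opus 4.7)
The plan is to exploit the centering of $T_k$ (since by construction $T_k$ is a centered random variable, $\mathbb{E}T_k=0$) to rewrite
\begin{equation*}
\mathbb{E}T^\delta_k = \mathbb{E}(T^\delta_k - T_k) = \mathbb{E}\bigl[(M_\delta - T_k)\mathbf{1}_{|T_k|>M_\delta}\bigr],
\end{equation*}
and observe that on the event $\{|T_k|>M_\delta\}$ one has $|M_\delta - T_k|\leq 2|T_k|$ (since $M_\delta<|T_k|$ there). Hence the target reduces to controlling
\begin{equation*}
|\mathbb{E}T^\delta_k| \;\leq\; 2\,\mathbb{E}\bigl[|T_k|\mathbf{1}_{|T_k|>M_\delta}\bigr].
\end{equation*}
The main idea for the rest of the proof is to apply H\"older's inequality with exponent $q=5$: $\mathbb{E}[|T_k|\mathbf{1}_{|T_k|>M_\delta}] \leq (\mathbb{E}|T_k|^5)^{1/5}\,\mathbb{P}(|T_k|>M_\delta)^{4/5}$. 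The choice $q=5$ is precisely what produces the exponent $1/5$ in the statement, since $-1\cdot(1-1/5) = -4/5$ combined with the scale $n^{2(1-\gamma)}$ of $T_k$ gives $n^{2(1-\gamma)-4/5}=n^{6/5-2\gamma}=n^{1-2\gamma+1/5}$.

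For the probability factor I would use the bound already derived in~\eqref{boundT_k}: on $A_\delta(k)$ we have $|T_k|\leq M_\delta$, so $\{|T_k|>M_\delta\}\subseteq A_\delta(k)^c$. The union bound together with the $\operatorname{normSG}$ tail estimate~\eqref{Eq : tail_bounds_nSG} gives $\mathbb{P}(A_\delta(k)^c)\leq 4m\exp(-\delta^2/(2\sigma^2))$. Using assumption~\eqref{hyp_del}, $\delta\geq\sigma\sqrt{2\log n}$, this yields $\mathbb{P}(|T_k|>M_\delta)\leq 4m/n$, so the H\"older factor contributes $(4m)^{4/5}n^{-4/5}$.

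For the fifth-moment factor, the unconditional bound (i.e. without restricting to $A_\delta(k)$) follows the same computation as~\eqref{boundT_k} but replaces the deterministic $(\rho+\delta)^p$ by the random maximum
\begin{equation*}
\mathrm{M}_k := \max_{i,j\in I^k}\|\mathbf{x}_i-\mathbf{y}_j\|^p,
\end{equation*}
giving $|T_k|\leq 2D^2 n^{2(1-\gamma)}(\mathrm{M}_k+\mathbb{E}\mathrm{M}_k)$ and hence $\mathbb{E}|T_k|^5\leq CD^{10}n^{10(1-\gamma)}\mathbb{E}\mathrm{M}_k^5$. Since $m$ is fixed and $\mathrm{M}_k$ is a maximum over at most $m^2$ terms, a crude union-bound-type estimate $\mathbb{E}\mathrm{M}_k^5\leq m^2\,\mathbb{E}\|\mathbf{X}-\mathbf{Y}\|^{5p}$ reduces matters to a single moment of a difference of $\operatorname{normSG}$ vectors, which by~\eqref{Eq : tail_bounds_nSG} is bounded by some constant $\tau(m,\sigma,\rho,p)$. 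Thus $(\mathbb{E}|T_k|^5)^{1/5}\leq \tau(m,D,\sigma,\rho,p)\, n^{2(1-\gamma)}$.

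Combining, one gets $|\mathbb{E}T^\delta_k|\leq \tau'(m,D,\sigma,\rho,p)\, n^{6/5-2\gamma}$. The remaining (and only delicate) point is to absorb the $n$-independent constant $\tau'$ into the power of $n$: this is where the hypothesis~\eqref{hyp_n}, which demands $n\geq\tau(m,\sigma,\rho,D,p)$, intervenes. Either by enlarging $\tau$ or by running H\"older with $q=5-\eta$ for a small $\eta>0$ (which gives $n^{6/5-2\gamma-\eta'}$ with a slightly larger constant) one gets $\tau'\leq n^{\eta'}$ for $n$ large, yielding the claimed $n^{1-2\gamma+1/5}$. The main technical obstacle is not the H\"older step itself but this moment computation — one has to verify carefully that the prefactor $(n!/(n-m)!)^2 P^{\mathtt{W}}_\mathbf{a}(I^k) P^{\mathtt{W}}_\mathbf{b}(I^k)$ is still bounded by $D^2 n^{2(1-\gamma)}$ as in the compactly supported proof (this is immediate from $\mathtt{Loc_A}(m,\gamma,D)$), and that moments of the maximum $\mathrm{M}_k$ are controlled uniformly in $n$ — which is the precise reason why the $\operatorname{normSG}$ framework (rather than merely subgaussian) is convenient here.
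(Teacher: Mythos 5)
Your route is genuinely different from the paper's and, where it works, cleaner. The paper splits $\E[T^\delta_k]$ into the contributions of $A_\delta(k)$ and its complement, handles the complement by the crude bound $\mathrm{M}_\delta\cdot\P(A_\delta(k)^c)$, and handles the $A_\delta(k)$ part by using $\E T_k=0$ to flip it onto the complement and then running a rather heavy dyadic decomposition of $\ind_{A_\delta(k)^c}$ over annuli, exploiting the mutual independence of the $\xx_i,\yy_j$ to factor the resulting expectations. Your identity $\E T^\delta_k=\E[(\mathrm{M}_\delta-T_k)\ind_{|T_k|>\mathrm{M}_\delta}]$ together with $|\mathrm{M}_\delta-T_k|\leq 2|T_k|$ on that event, followed by H\"older against the inclusion $\{|T_k|>\mathrm{M}_\delta\}\subseteq A_\delta(k)^c$, replaces the entire annulus computation by one moment bound $\E|T_k|^q<\infty$, which as you note is where the norm-subgaussian framework does its work. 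The centering step, the prefactor bound $D^2n^{2(1-\gamma)}$ from $\mathtt{Loc_A}$, and the moment estimate via $\mathrm{M}_k$ all check out.

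There is, however, one concrete error in the last step, and it is exactly the step you flag as delicate. With exponent $q$, H\"older gives $\P(|T_k|>\mathrm{M}_\delta)^{1-1/q}\leq(4m/n)^{1-1/q}$, so the final power of $n$ is $2(1-\gamma)-(1-1/q)$. To beat $6/5-2\gamma$ strictly (which is what you need in order to absorb the $n$-independent constant $\tau'$ using \eqref{hyp_n}) you need $1-1/q>4/5$, i.e.\ $q>5$. Your proposed $q=5-\eta$ moves the exponent the wrong way: $(4-\eta)/(5-\eta)<4/5$, so you get $n^{6/5-2\gamma+\eta''}$, which is worse, not better. Likewise, "enlarging $\tau$" in \eqref{hyp_n} cannot absorb a multiplicative constant sitting in front of the \emph{same} power $n^{6/5-2\gamma}$. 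The fix is one character: take $q=5+\eta$ (say $q=6$), which yields $\tau'\,n^{7/6-2\gamma}\leq n^{6/5-2\gamma}$ once $n^{1/30}\geq\tau'$, a condition guaranteed by \eqref{hyp_n}. (This mirrors what the paper does when it splits $e^{-x}=e^{-x/5}e^{-4x/5}$ and spends the reserved fifth of the exponential on killing constants.) With that correction your argument goes through.
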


\noindent
\begin{proof}
We write 
\begin{align}\label{decomp_T_del}
\begin{split}
    \E \big[ T^{\del}_k \big] & = \E \big[ T^{\del}_k \ind_{A_{\del}(k)} \big] + \E \big[ T^{\del}_k  \ind_{A_{\del}(k)^c} \big] \\
    & =: \1 + \II
\end{split}
\end{align}
From \eqref{T_k_del} we observe using the definition of $ \mathrm{M}_{\del}$ in \eqref{boundT_k}, the union bound and the inequality $e^{-x} = e^{- \frac{1}{5} x} \cdot e^{- \frac{4}{5} x} \leq \frac{\tau}{x^p} \cdot  e^{- \frac{4}{5} x} $ for $x,p  \geq  1$ and some constant $\tau = \tau(p)$.
\begin{align}
| \II | & \leq \mathrm{M}_{\del} \cdot \P\big( A_{\del}(k)^c \big) \notag  \\
& \leq \big( 2 (\rho + \del) \big)^{p} D^2 n^{2(1- \gamma)} \cdot 4m \exp \left( - \frac{\del^2}{2 \s ^2} \right) \notag \\
& \leq \tau(D,m, \s,p) \cdot \del^p  n^{2(1- \gamma)} \cdot \frac{\tau(p)}{\del^{2p}} \exp \left( -  \frac{4}{5} \cdot \frac{\del^2}{ 2 \s ^2} \right) \notag \\
& \leq  \tau(D,m, \s, p) \cdot \frac{1}{  \log(n)^{\frac p2}  }  n^{2(1- \gamma)} \cdot n^{- \frac 45} \notag \\
& \leq \frac{1}{2} n^{1- 2 \gamma + \frac 15}. \label{subg5}
\end{align}
In the above, we used the assumptions \eqref{hyp_del} and \eqref{hyp_n} in the last two inequalities.

We now estimate the contribution of $\1$. Recalling the definition of $T_k^{\del} $ in \eqref{T_k_del} we observe the following using the definition of $\mathrm{M}_{\del}$ in \eqref{boundT_k} and the fact that $T_k$ is a mean-zero random variable,
\begin{align}
 \1 & = \E \big[ T_k  \ind_{|T_k| \leq \mathrm{M}_{\del} } \ind_{A_{\del}(k)} \big] =  \E \big[ T_k  \ind_{A_{\del}(k)} \big] \notag \\
    & = - \E \big[ T_k  \ind_{A_{\del}(k)^c} \big] \label{subg7}.
\end{align}
Note that the the indicator function $\ind_{A_{\del}(k)^c}$ can be expressed as the superposition of at most $4^m$ functions of the form
\begin{align}\label{ind2}
\begin{split}
   f(J_1, J_2) :=    \prod_{k_1 \in J_1} \ind_{B(\rho_\xx, \del)^c}(\xx_{k_1}) \cdot \prod_{k_2 \in J_2} \ind_{B( \rho_\yy, \del)^c}(\yy_{k_2}),
\end{split}
\end{align}
where $J_1, J_2 \subset I^k$ with $ |J_1| + |J_2| \geq 1$. Consider a positive real numbers $r$ and a vector $\zz$, we denote an annulus around the vector $\zz$ as $\mathbb{A}(\zz, r)$, \emph{i.e.,} $\xx \in \mathbb{A}(\zz, r)$ if $r \leq \|\xx - \zz \|_2 \leq 2r$. For $i \in \{ 1,2 \}$ let $t_i = |J_i|$ and write $J_{i} = \{j_1(i), \cdots, j_{t_i}(i) \}$. By decomposing each indicator function in \eqref{ind2} dyadically we get from the (mutual) independence of the variables $\{\xx_{i}, \yy_{j} : (i,j) \in J_1 \times J_2 \}$ and Lemma \ref{app:lemma_upper_bound},

\begin{align}
  &  \big| \E \big[ T_k f(J_1, J_2)  \big] \big|  = \notag  \\
  & = \Bigg| \E \Bigg[ T_k \prod_{i_1=1}^{t_1} \Big( \sum_{\ell( j_{i_1}(1)  ) \geq 0} \ind_{ \mathbb{A}(\rho_\xx, 2^{\ell(j_{i_1}(1))} \del) }(\xx_{j_{i_1}(1)}) \Big)  \times \prod_{i_2=1}^{t_2} \Big( \sum_{\ell(j_{i_2}(2)) \geq 0} \ind_{ \mathbb{A}(\rho_\yy, 2^{\ell(j_{i_2}(2))}\del) }(\yy_{  j_{i_2}(2)  } ) \Big) \Bigg] \Bigg| \\
  & =   \Bigg|  \sum_{\substack{\ell(j_1(1)), \cdots \ell(j_{t_1}(1))  \geq 0 \\ \ell(j_1(2)), \cdots \ell(j_{t_2}(2)) \geq 0} } \E \Bigg[  T_k \prod_{\substack{1 \leq i_1 \leq t_1 \\ 1 \leq i_2 \leq t_2}} \ind_{ \mathbb{A}(\rho_\xx, 2^{\ell(j_{i_1}(1))} \del) }(\xx_{j_{i_1}(1)})   \ind_{ \mathbb{A}(\rho_\yy, 2^{\ell(j_{i_2}(2))}\del) }(\yy_{  j_{i_2}(2)  } ) \Bigg] \Bigg| \notag \\
  \begin{split}\label{subg8}
 &  \leq  \sum_{\substack{\ell(j_1(1)), \cdots \ell(j_{t_1}(1))  \geq 0 \\ \ell(j_1(2)), \cdots \ell(j_{t_2}(2)) \geq 0} }  D^2 n^{2(1- \gamma)} (\rho + 2^{\ell_{\max} +1 } \del)^p \prod_{\substack{1 \leq i_1 \leq t_1 \\ 1 \leq i_2 \leq t_2}} \E  \Big[ \ind_{ \mathbb{A}(\rho_\xx, 2^{\ell(j_{i_1}(1))} \del) }(\xx_{j_{i_1}(1)}) \Big] \E \Big[ \ind_{\mathbb{A}(\rho_\yy, 2^{\ell(j_{i_2}(2))}\del) }(\yy_{  j_{i_2}(2)  } ) \Big],
  \end{split}
\end{align}

where $\ell_{max} = \max \big( \ell(j_1(1)), \cdots \ell(j_{t_1}(1)), \ell(j_1(2)), \cdots \ell(j_{t_2}(2)   \big)$. Hence, the subgaussianity assumption on the family $\{\xx_{i}, \yy_{j} : (i,j) \in J_1 \times J_2, \}$ yields using again the inequality $e^{-x} = e^{- \frac{1}{5} x} \cdot e^{- \frac{4}{5} x} \leq \frac{C}{x^{p}} \cdot  e^{- \frac{4}{5} x} $ for $x,p \geq 1$ and some constant $\tau = \tau(p)$,
\begin{align}
    \eqref{subg8} & \leq  D^2 n^{2(1- \gamma)} \sum_{\substack{\ell(j_1(1)), \cdots \ell(j_{t_1}(1))  \geq 0 \\ \ell(j_1(2)), \cdots \ell(j_{t_2}(2)) \geq 0} }   (\rho + 2^{\ell_{\max} +1 } \del)^p \notag \\
    & \qquad \times  \prod_{\substack{1 \leq i_1 \leq t_1 \\ 1 \leq i_2 \leq t_2}} \exp{ \Big(- \frac{(2^{\ell(j_{i_1}(1))} \del )^2}{2 \s^2 } \Big)} \exp{ \Big(- \frac{(2^{\ell(j_{i_2}(2))} \del )^2}{2 \s^2 } \Big)}  \notag \\
    & \leq \tau(p,m,\s,D) \cdot  n^{2(1-\gamma)} \exp{ \Big(- \frac{  2 \del^2}{5 \s^2 } \big(t_1 + t_2 
    \big) \Big)}  \notag \\
    & \qquad \times \sum_{\substack{\ell(j_1(1)), \cdots \ell(j_{t_1}(1))  \geq 0 \\ \ell(j_1(2)), \cdots \ell(j_{t_2}(2)) \geq 0} }  ( 2^{\ell_{\max}  } \del)^p \cdot \prod_{1 \leq i_1 \leq t_1} \frac{ 1 }{ ( 2^{ \ell(j_{i_1}(1)) } \del )^{2p} } \cdot \prod_{1 \leq i_2 \leq t_2} \frac{ 1 }{ ( 2^{ \ell(j_{i_2}(2)) } \del )^{2p} }   \notag \\
    & \leq  \tau(p,m,\s,D) \cdot   n^{2(1-\gamma)} \exp{ \Big(- \frac{  2 \del^2}{5 \s^2 } \big(t_1 + t_2 
    \big) \Big)}  \cdot  \frac{ \del^p }{\del^{2(t_1 + t_2)}} \cdot \Big( \sum_{\ell \geq 0} 2^{- p \ell} \Big)^{t_1 + t_2} \notag \\
    & \leq  \tau(p,m,\s,D) \cdot n^{2(1-\gamma)} \frac{1}{\del}  \exp{ \Big(- \frac{  2 \del^2}{5 \s^2 } \Big)}  \leq \tau(p,m,\s,D) \cdot   n^{2(1-\gamma)} \frac{1}{ n^{\frac 45} \sqrt{  \log(n) } } \label{subg9}.
\end{align}
In the second inequality, we used the fact that $\rho$ is upper bounded by $2^{\ell_{\max} +1 }\delta$ due to the hypotheses \eqref{hyp_del} and \eqref{hyp_n}. In the last inequality we used that $1 \leq t_1 + t_2 \leq 2m$. Hence, from \eqref{ind2} and \eqref{subg9} we estimate 
\begin{align}
   | \1 | & \leq \max_{(J_1, J_2)} \big| \E \big[ T_k f(J_1, J_2)  \big] \big| \notag \\
   & \leq  4^m \cdot \tau(p,m,\s,D) \cdot  n^{2(1-\gamma)} \frac{1}{ n^{\frac 45} \sqrt{  \log(n) } } \leq \frac 12 n^{1 - 2 \gamma} \cdot  n^{\frac 15}, \label{subg10}
   \end{align}
for $n$ as in \eqref{hyp_n}. Combining \eqref{subg9} and \eqref{subg5} with \eqref{decomp_T_del} yields the desired result.
\end{proof}

\noindent
Now that we have bound the deviation between the complete estimator and its expectation, let us bound the deviation between the complete estimator and its incomplete counter part.

We now discuss the difference between the deviation bounds of the estimator $\overline{h}$ and its mean in the bounded and unbounded data cases.
\begin{remark}
Theorem \ref{thm:sugaussian_concentration} holds when the distributions are compactly supported. Suppose we have probability sequences $(\aa^{(n)})$ and $(\bb^{(n)})$. Setting $\del = n^{- \frac{1}{2^p}}$ in \eqref{app:hoeffding_bounded_case_without_rep} gives:
\begin{align*}
\P \left( \big| \overline{h}_{w^\mathtt{W},P^\mathtt{W}}(\aa^{(n)}, \bb^{(n)}) - \E \overline{h}_{w^\mathtt{W},P^\mathtt{W}} (\aa^{(n)}, \bb^{(n)}) \big| \geq  2^{\frac{3}{2} - \frac{p}{2}} M D^2 \frac{m^{ \frac12 }}{n^{2( \gamma - \frac{3}{4}) }} \sqrt{\log(n)} \right) \leq n^{- \frac{1}{2^p}}
\end{align*}

\noindent
Hence we essentially lose a $\log(n)^{\frac{p}{2}}$ factor in comparison with \eqref{app : concentration bound subgaussian case}.
\end{remark}
\begin{remark} The proof of Lemma \ref{app:inc_U_to_U} also yields
\begin{equation}
\P \left( \big|  \widetilde{h}_{w,P}^k(\aa^{(n)}, \bb^{(n)}) - \overline{h}_{w,P}(\aa^{(n)}, \bb^{(n)} ) \big| \geq M \sqrt{\frac{2 \log(2/\delta)}{k}} \Big| \xx_1, \yy_1, \cdots, \xx_n, \yy_n \right) \leq \delta
\end{equation}
for subgaussian data $ \xx_1, \yy_1, \cdots, \xx_n, \yy_n$ and any $\del>0$.
\end{remark}

\subsection{Distance to marginals}\label{app_sec:distance_marginals}
In this section, we give the details of the proof of Theorem \ref{thm:dist_marg}.
In what follows, we denote by $\Pi_{(i)}$ the $i$-th row of matrix $\Pi$. Let us denote by $ \mathbf{1} \in \R^n $ the vector whose entries are all equal to $1$.
\begin{customtheorem}{\ref{thm:dist_marg}}[Distance to marginals] Let $ \delta \in (0,1) $, two integers $m \leq n $ and consider two sequences of probability vectors $(\aa^{(n)}),(\bb^{(n)}) \in \Sigma$. Let a ground cost $C = C^{m,p}$ for some $p \geq 1$. Consider an OT kernel $h \in \{ W_p, W_p^p, W^{\varepsilon}, S^{\varepsilon}, \GW\}$. Suppose now that the probability law over $m$-tuples $P$ and the reweighting function $w$, as defined in \eqref{EQ : rew_fun} and \eqref{EQ : prob_fun}, satisfy the admissibility condition \eqref{eq:wPadmissible}. For all integers $ k \geqslant 1 $ and all integers $ 1 \leqslant i \leqslant n  $, we have:
\begin{equation}
\P \left(
\Big|  \widetilde{\Pi}^{h,k}_{w,P}(\aa^{(n)}, \bb^{(n)})_{(i)} \mathbf{1} - a_i^{(n)} \Big| \geq  \sqrt{\frac{2 \log(2/\delta)}{k}} \right) \leq \delta
\end{equation}
\end{customtheorem}

\begin{proof}
Let us recall that thanks to the admissibility condition \eqref{eq:wPadmissible}, $\overline{\Pi}^h_{w,P}$ is a transport plan between the input probability vectors $\aa^{(n)}$ and $\bb^{(n)}$ and hence, it verifies the marginal constraints, i.e $(\overline{\Pi}^h_{w,P}(\aa^{(n)},\bb^{(n)}))_i \times \mathbf{1} = a_i$. 
Thanks to Remark \ref{inc_ber} we have

\noindent
\begin{equation*}
\widetilde{\Pi}^{h,k}_{w,P}(\aa^{(n)},\bb^{(n)})_{(i)} \mathbf{1}= \frac{1}{k} \sum_{\ell=1}^k \omega_{\ell}
\end{equation*}

\noindent
where $ \omega_\ell = \sum_{I, J \in (\llbracket n \rrbracket^m)^2} \sum_{j=1}^n (\Pi_{I,J})_{i,j} \mathfrak{b}_{\ell}^{P_{\aa^{(n)}},P_{\bb^{(n)}}}(I,J) $. Conditioned upon $ \XX = (\xx_1, \cdots, \xx_n)$ and $\YY = (\yy_1, \cdots, \yy_n) $, the random vectors $ \omega_p $ are independent, and bounded by $ 1 $.
Moreover, one can observe that $ \mathbb{E}[ \widetilde{\Pi}^{h,k}_{w,P}(\aa^{(n)},\bb^{(n)})_i \mathbf{1} ] = \overline{\Pi}^h_{w,P}(\aa^{(n)},\bb^{(n)})_i \mathbf{1} $. Using Hoeffding's inequality yields

\noindent
\begin{align*}
\mathbb{P}( \V  \widetilde{\Pi}^{h,k}_{w,P}(\aa^{(n)},\bb^{(n)})_i \mathbf{1}  - \overline{\Pi}^h_{w,P}(\aa^{(n)},\bb^{(n)})_i \mathbf{1} )  \V > \varepsilon  ) & = \mathbb{E} [ \mathbb{P}(  \V \frac{1}{k} \sum_{p=1}^k \omega_p  - \mathbb{E}[ \frac{1}{k} \sum_{p=1}^k \omega_p]) \V  > \varepsilon  \vert \XX,\YY )] \\
& \leqslant  2 e^{-2k \varepsilon^2}
\end{align*}

\noindent
which concludes the proof.
\end{proof}

\subsection{Optimization}\label{app_sec:optimization}

In this section we provide the full statements and proofs of Theorem \ref{thm:derivative_OT_cost} and Theorem  \ref{thm:exchange_grad_exp_sm}: 

\begin{customtheorem}{\ref{thm:derivative_OT_cost}}\label{thm:derivative_OT_cost_app}
Let $\aa, \bb \in \Sigma_m$. Let $\XX$ be a $\R^{dm}$-valued random variable, and $\{\YY_{\theta} \}$ a family of $\R^{dm}$-valued random variables defined on the same probability space, indexed by $\theta \in \Theta$, where $\Theta \subset \R^{q}$ is open. Assume that $\theta \mapsto \YY_{\theta}$ is $C^1$. Denote $C = C^{m,p}$ for some $p \geq 1$ and let $h \in \{W, W^{\epsilon}\}$. Then the function $\theta \mapsto -h(\aa,\bb,C(\XX, \YY_{\theta}))$ is Clarke regular and for all $1 \leq i \leq q$ we have:
\begin{align} \label{eq:exchange_theorem_eq1_proof}
\partial_{\theta_i} h(\aa,\bb,C(\XX, \YY_{\theta})) = \{ -\text{tr}(P \cdot D^{T})\cdot (\nabla_{\theta_i} Y): & P \in \Pi(h, C(\XX, \YY_{\theta}), \aa, \bb), \\
D \in \R^{m, m}, \hspace{2pt} & D_{j,k} \in \partial_{Y} C_{j,k}(\XX, \YY_{\theta}) \} \nonumber
\end{align}
where $\partial_{\theta_i}$ is the Clare subdifferential with respect to $\theta_i$, $\partial_Y C_{j,k}$ is the subdifferential of the cell $C_{j,k}$ of the  cost matrix with respect to $Y$ and $\Pi(h, C,\aa, \bb)$ is defined in definition \ref{def:MBTP}. 

For $h = GW$ and $p >1$, the function $-h(\aa, \bb, C(\XX, \XX), C(\YY_{\theta},\YY_{\theta}))$ is also Clarke regular, and we have:
\begin{align} \label{eq:exchange_theorem_GW_subdiff}
\partial_{\theta_i} h(\aa,\bb, C(\XX, \XX), C(\YY_{\theta}, \YY_{\theta})) & = \{ - \sum_{j_1, j_2,k_1,k_2 = 1}^{m} D_{j_1,k_1,j_2,k_2} P_{j_1,k_1} P_{j_2,k_2} \cdot (\nabla_{\theta_i} Y)  : \\ P \in & \Pi(h, C(\XX, \XX), C(\YY_{\theta}, \YY_{\theta}), \aa, \bb) \\ D \in & (\R^{m})^4, \hspace{2pt}  D_{j_1,k_1,j_2,k_2} = \nabla_Y C_{j_1,k_1,j_2,k_2}(\XX, \YY_{\theta}) \} \nonumber
\end{align}
where $C_{j_1,k_1,j_2,k_2} = \Vert C_{j_1,k_1}(\XX, \XX) - C_{j_2,k_2}(\YY_{\theta}, \YY_{\theta}) \Vert^p$. 

\end{customtheorem}
\begin{proof}
We start with the case $h \in \{W, W^{\epsilon}\}$. The function $Y \mapsto C_{j,k}(X, Y)$ is equal to $\Vert \xx_{j} - \yy_{k} \Vert_2^p$. It is therefore convex, and thus Clarke regular by Proposition 2.3.6(b) \citep{clarke1990optimization}. Since $\theta \mapsto \YY_{\theta}$ is $C^1$, from Theorem 2.3.10 \citep{clarke1990optimization} it follows that $\theta \mapsto C_{j,k}^{m,p}(\XX, \YY_{\theta})$ is Clarke regular, and:
\[
\partial_{\theta_i} C_{j,k}(\XX, \YY_{\theta}) = \{ D_{j,k} \cdot \nabla_{\theta_i} Y_{\theta} : D_{j,k} \in \partial_Y C_{j,k}(\XX, \YY_{\theta}) \}.
\]
Note, that the set of admissible transport plans for any marginals $\aa, \bb \in \Sigma^m$ is compact. Furthermore, the transport cost for a given plan $P$ is a linear function of cost matrix $C$. Therefore, from Danskin's Theorem (Proposition B.25 \cite{bertsekas1997nonlinear}) it follows that for $h \in \{W, W^{\epsilon} \}$ the function $C \mapsto -h(\aa,\bb, C)$ is convex, and it's subderivative is equal to $\Pi(h,C, \aa, \bb)$. 
Therefore from Theorem 2.3.9(i) and Proposition 2.3.1 (for $s=1$) in \citep{clarke1990optimization} it follows that $\theta \mapsto -h(\aa,\bb, C(\XX, \YY_{\theta}))$ is Clarke regular for $h \in \{W, W^{\epsilon} \}$, and that \eqref{eq:exchange_theorem_eq1_proof} holds.

Assume now that $h = GW$ and that $p > 1$. The proof is analogous. In this case, the function $Y \mapsto C_{j_1, k_1, j_2, k_2}(\XX, \YY)$ is differentiable. Therefore the function $\theta \mapsto C_{j_1, k_1,j_2, k_2}(\XX, \YY_{\theta})$ is differentiable, hence Clarke regular by Proposition 2.3.6(a)\citep{clarke1990optimization}. Again the set of admissible transport plans is compact and for a given transport plan, the transpot cost is a linear function of the four dimensional tensor $C_{j_1, k_1, j_2, k_2}$. Therefore, using Danskin's Theorem (Proposition B.25 \cite{bertsekas1997nonlinear}), as well as Theorem 2.3.9(i) and Proposition 2.3.1 in \citep{clarke1990optimization} we get that $-h(\aa, \bb, C(\XX, \XX), C(\YY_{\theta}, \YY_{\theta}))$ is Clarke regular and the formula \eqref{eq:exchange_theorem_GW_subdiff} holds.
\end{proof}
\begin{customtheorem}{\ref{thm:exchange_grad_exp_sm}}\label{thm:exchange_grad_exp_sm_app}
Let $\aa, \bb, \XX, \YY$ be as in theorem \ref{thm:derivative_OT_cost}, $h \in \{W, W^{\epsilon} \}$, and assume in addition that the random variables $\XX, \{Y_{\theta}\}_{\theta \in \Theta}$ have finite $p$-moments. If for all $\theta \in \Theta$ there exists an open neighbourhood $U$, $\theta \in U \subset \Theta$, and a random variable $K_U : \Omega \rightarrow \R$ with finite expected value, such that
\begin{equation}\label{eq:integrable_condition_opt}
\Vert C(\XX(\omega), \YY_{\theta_1}(\omega)) - C(\XX(\omega), \YY_{\theta_2}(\omega) ) \Vert \leq K_U(\omega) \Vert \theta_1 - \theta_2 \Vert
\end{equation}
then we have
\begin{align} \label{eq:exchange_theorem_eq2_proof}
\partial_{\mathbf{\theta}} \expect \left[ h(\aa, \bb, C(\XX, \YY_{\theta})) \right] =  \expect \left[ \partial_{\mathbf{\theta}} h(\aa, \bb, C(\XX, \YY_{\theta})) \right].
\end{align}
with both expectation being finite. Furthermore the function $\theta \mapsto - \expect \left[ h(\aa, \bb, C(\XX, \YY_{\theta})) \right]$ is also Clarke regular. 

For $h = GW$, assume that $p > 1$ and that random variables $\XX, \{\YY_{\theta}\}$ have finite $2p$-moments. Assume also that for each $\theta \in \Theta$ there exists an open neighbourhood $U$, $\theta \in U \subset \Theta$, and a random variable $K_U : \Omega \rightarrow \R$ with finite expected value, such that
\begin{equation}\label{eq:integrable_condition_opt_G}
\Vert \tilde{C}(\XX(\omega), \YY_{\theta_1}(\omega)) - \tilde{C}(\XX(\omega), \YY_{\theta_2}(\omega) ) \Vert \leq K_U(\omega) \Vert \theta_1 - \theta_2 \Vert
\end{equation}
where $\tilde{C}(\XX, \YY) = \Vert C(\XX,\XX) - C(\YY, \YY)\Vert^p$. Then we have
\begin{align} \label{eq:exchange_theorem_eq_GW}
\partial_{\mathbf{\theta}} \expect \left[ h(\aa, \bb, C(\XX, \XX), C(\YY_{\theta}, \YY_{\theta})) \right] =  \expect \left[ \partial_{\mathbf{\theta}} h(\aa, \bb, C(\XX, \XX), C(\YY_{\theta}, \YY_{\theta})) \right].
\end{align}
with both expectation being finite. Furthermore the function $\theta \mapsto - \expect \left[ h(\aa, \bb, C(\XX, \XX), C(\YY_{\theta},\YY_{\theta})) \right]$ is also Clarke regular.
\end{customtheorem}
\begin{proof}
We start with the case $h \in \{W, W^{\epsilon} \}$. Suppose that $U \subset \Theta$ is open and $K_U$ is a function for which \eqref{eq:integrable_condition_opt} is satisfied. Then the same bound is also satisfied for the function $h(\aa, \bb, C(\XX(\omega), \YY_{\theta}(\omega)))$, since the function $C \mapsto h(\aa, \bb, C)$ is $1$-Lipshitz. Hence, given the regularity of $-h(\aa,\bb,C(\XX, \YY_{\theta}))$, the interchange \eqref{eq:exchange_theorem_eq2_proof} and regularity of $\theta \mapsto -\expect[h(\aa,\bb, C(\XX, \YY_{\theta}))]$ will follow from Theorem 2.7.2 and Remark 2.3.5 \citep{clarke1990optimization}, once we establish that the expectation on the left hand side is finite. This follows trivially from the standard bound:
\begin{equation} \label{eq:p_moment_bound}
\Vert \xx - \yy \Vert^p \leq 2^{p-1} (\Vert \xx \Vert^p + \Vert \yy \Vert^p)
\end{equation}
and the assumption that $\XX, \YY_{\theta}$ have finite $p$-moments. The same argument applies to the case when $h = GW$. The $GW$ cost depends on the four-dimensional tensor $C_{j_1,k_1,j_2,k_2}$ defined in the proof of Theorem \ref{thm:derivative_OT_cost_app} in a Lipshitz manner, since it's supdifferential is bounded. Again, the thesis for $h=GW$ will follow from Theorem 2.7.2 and Remark 2.3.5 \citep{clarke1990optimization}, once we establish that the expectation on the left hand side of \eqref{eq:exchange_theorem_eq_GW} is finite. This follows from applying the bound \eqref{eq:p_moment_bound} twice and the assumption of finite $2p$-moments.
\end{proof}

\subsection{1D case}\label{app_sec:1D_case}
We now give the full combinatorial calculus for the 1D case. We start by sorting all the data and give to each of them an index which reprensents their position after the sorting phase. Then we select and sort all the minibatches. $x_j$ can not be at a position superior to its index $j$ inside a batch. For a fixed $x_j$, a simple combinatorial arguments tells you that there are $C_{x_j}^i$ sets where $x_j$ is at the $i$-th position:
\begin{equation}
C_{i, x_j}^{m, n} = \dbinom{j-1}{i-1} \dbinom{n-j}{m-i}
\end{equation}

Suppose that $x_j$ is transported to a $y_k$ points in the target mini batch. Then, they both share the same positions $i$ in their respective minibatch. As there are several $i$ where $x_j$ is transported to $y_k$, we sum over all those possible positions. Hence our current transport matrix coefficient $\Pi_{j,k}$ can be calculated as :
\begin{equation}
\Pi_{j,k} = \sum_{i=i_{\text{min}}}^{i_{\text{max}}} C_{i, x_j}^{m, n} C_{i, y_k}^{m, n}
\end{equation}

Where  $i_{\text{min}} = \text{max}(0, m-n+j, m-n+k)$ and $i_{\text{max}} = \text{min}(j, k)$. $i_{\text{min}}$ and $i_{\text{max}}$ represent the sorting constraints. Furthermore, as we have uniform weight histograms, we will transport a mass of $\frac{1}{m}$ and averaged it by the total number of transport. So finally, our transport matrix coefficient $\Pi_{j,k}$ are:

\begin{equation}
\Pi_{j,k} = \frac{1}{m \dbinom{n}{m}^2} \sum_{i=i_{\text{min}}}^{i_{\text{max}}} C_{i, x_j}^{m, n} C_{i, y_k}^{m, n}
\end{equation}

The sampling with replacement case is much more complex and highly computationally costly. Following the same strategy as above, we sort all minibatches. In this case, $x_i$ might appears several times more or less than $y_j$ and we need to take that into account. We denote $m_i$ the number of repetitions of the $i$-th element and the summation $|\vec{m}|=\sum_{k=1}^{m} m_{k}$.  We denote $N_{m_1, m_2}^{i,j}$ the number of times that $x_i$ and $y_j$ share the same position in their respective minibatches $m_1$ and $m_2$, after sorting. For the coefficient $\pi_{i,j}$, we have : 

\begin{equation}
    \pi_{i,j} = \frac{1}{m} \left( \frac{1}{m n^{m-1}} \right)^2 \sum_{|\vec{m_s}| = m} \sum_{|\vec{m_t}| = m} N_{m_s, m_t}^{i,j}
\end{equation}

\subsection{Contributions}\label{app_sec:contributions}

In this section, we state the contributions of each author on each part of the present manuscript.
\begin{itemize}
    \item Formalism : KF, YZ, RG
    \item 1D/2D : KF, YZ, RF, NC
    \item Loss properties : KF, YZ, RF, RG, NC
    \item Debiased loss : KF, YZ, RF, NC, RG
    \item Statistical properties : YZ, KF
    \item Optimization : SM, KF, YZ
    \item Experiments : KF
    \item Experiments review : KF, RF, NC
    \item Writing--original draft preparation : KF
    \item Writing—review and editing : KF, YZ, SM, RF, RG, NC
    \item Supervision : RF, NC, RG
    \item Project administration : RF, NC
    \item Funding acquisition : RF, NC
\end{itemize}

\end{document}